\newtheorem{assumption}{Assumption}[section]
\newtheorem{theorem}{Theorem}[section]
\title{Direct Fisher Score Estimation for Likelihood Maximization}
\author{
  Sherman Khoo \thanks{School of Mathematics, University of Bristol}
  \And
  Yakun Wang \footnotemark[1]
  \And 
  Song Liu \footnotemark[1]
  \And
  Mark Beaumont \thanks{School of Biological Sciences, University of Bristol\\[0.2em] Correspondence to: Sherman Khoo <sherman.khoo@bristol.ac.uk>}
}
\begin{document}

\maketitle

\begin{abstract}
We study the problem of likelihood maximization when the likelihood function is intractable but model simulations are readily available. We propose a sequential, gradient-based optimization method that directly models the Fisher score based on a local score matching technique which uses simulations from a localized region around each parameter iterate. By employing a linear parameterization for the surrogate score model, our technique admits a closed-form, least-squares solution. This approach yields a fast, flexible, and efficient approximation to the Fisher score, effectively smoothing the likelihood objective and mitigating the challenges posed by complex likelihood landscapes. We provide theoretical guarantees for our score estimator, including bounds on the bias introduced by the smoothing. Empirical results on a range of synthetic and real-world problems demonstrate the superior performance of our method compared to existing benchmarks.
\end{abstract}

\section{Introduction}
\label{sec:intro}

Implicit simulator-based models are now routine in many scientific fields, such as biology \citep{b416fef4a8b244fc872ac0ec03971993}, cosmology \citep{article3}, neuroscience \citep{Sterratt}, engineering \citep{bharti2021general}, and other scientific applications \citep{article}. In traditional statistical models, there is a prescribed probabilistic model, which provides an explicit parameterization for the data distribution, allowing development of the likelihood function for further inference. In contrast, simulation-based models define the distribution implicitly through a computational simulator. Thus, while simulation of the data for various parameter settings is possible, the probability density function of the data or the likelihood function is often unavailable in closed form. This problem setting is known as likelihood-free inference or simulation-based inference (SBI) \citep{Cranmer_2020}. 

Traditionally, methods in this setting have focused on a Bayesian inference technique known as approximate Bayesian computation (ABC) \citep{beaumont2002approximate}. Fundamentally, the ABC method builds an approximation to the Bayesian posterior distribution by drawing parameter samples from the prior distribution, generating datasets from the drawn parameter values, and filtering parameter values through a rejection algorithm based on the distance of the generated summary statistic of the dataset from the observations. In recent years, there has been a rise of generative modeling or unsupervised learning in machine learning, which aims to recover a data distribution given a set of samples. Such generative models are often built from neural networks \citep{Mohamed_Lakshminarayanan_2017}, and given the fundamental similarity with the SBI problem setting, this has led to significant cross-pollination across the two fields, with the development of many SBI inference methods using generative neural networks \citep{Durkan_Papamakarios_Murray_2018, Papamakarios_Pavlakou_Murray_2018}. 

While significant progress in SBI has come from Bayesian approaches, such methods are often computationally demanding; furthermore, many scientific disciplines retain a preference towards maximum likelihood approaches. In contrast, practitioners who favor faster point estimation through maximum likelihood lack comparably mature tools. To close this gap, we propose a fast, simulation-efficient, and robust gradient-based technique for estimating the maximum likelihood for SBI. We build on a popular technique in generative modeling known as score matching \citep{hyvarinen2005estimation}, which has seen significant use in score-based generative models \citep{Song_Ermon_2020}, now a cornerstone for many state-of-the-art approaches in generative modeling \citep{yang2024diffusion}. We adapt score matching to estimate the Fisher score, that is, the gradient of the log-likelihood function with respect to the parameters, within a localized region. This estimated gradient can then be used in any first-order gradient-based stochastic optimization algorithm such as stochastic gradient descent (SGD) to obtain an approximate maximum likelihood estimator (MLE) and serves as a potential avenue for uncertainty quantification for the MLE through the empirical Fisher information matrix.

Our contributions in this work are as follows.
\begin{itemize}
    \item We propose a lightweight, simulation-efficient, and robust method for maximum likelihood estimation of simulator models based on a novel local Fisher score matching technique
    \item We derive theory for our local Fisher score matching technique and establish a connection with the Gaussian smoothing gradient estimator, offering a unifying perspective for zeroth-order optimization techniques and likelihood optimization for SBI
    \item We demonstrate the effectiveness of our method in real-world experiments for applied machine learning and cosmology problems, showcasing both its efficiency and robust performance compared to existing approaches
\end{itemize}

\section{Background}
\label{sec:background}

\subsection{Score Matching}
\label{subsec:bg_sm}

Density estimation is the problem of learning a data distribution \(p_{D}(\mathbf{x}) \) using only an observed dataset, \(\mathbf{x} \sim p_{D} \). An approach to this problem is learning the density with an energy-based model (EBM), which parameterizes the model through its scalar-valued energy function \(E_\theta : \mathbb{R}^k \to \mathbb{R} \) where \(\mathbf{x} \in \mathcal{X} \subseteq \mathbb{R}^k \), giving the model density \( p_\theta(\mathbf{x}) = \exp (-E_\theta(\mathbf{x})) / Z_\theta \).

Since the energy function is an unnormalized density function, it can be flexibly parameterized, usually through a neural network. However, note that the normalization constant \(Z_\theta = \int \exp (-E_\theta(\mathbf{x})) d\mathbf{x}\) is still a function of \(\theta\), and therefore will still need to be computed in training the EBM through standard likelihood maximization. Since this multidimensional integral is often intractable and requires a costly approximation method, score matching \citep{hyvarinen2005estimation} is often used to bypass the computation of the normalization constant. This is done by considering an alternate training objective instead of MLE, based on the score function  \(s_\theta : \mathbb{R}^k \to \mathbb{R}^k \), where \( s_\theta = \nabla_\mathbf{x} \log p_\theta = - \nabla_\mathbf{x} E_\theta\). In fact, since equivalence in the score amounts to equivalence in the distribution, matching the scores is equivalent to performing density estimation. One starting point is the explicit score matching objective (ESM). Defining the gradient operator on a scalar-valued function as \(\nabla_\mathbf{x} := (\frac{\partial}{\partial x_1}, \ldots, \frac{\partial}{\partial x_k})^\top\), where \(\frac{\partial}{\partial x_i}\) is the partial derivative operator for \(\mathbf{x} = (x_1,\ldots,x_k) \), and the Jacobian operator on a vector-valued function \(\mathbf{f} : \mathbb{R}^m \to \mathbb{R}^n\) as \(\mathbf{J}_{i,j} = [\frac{\partial f_i}{\partial x_j}]_{i,j}\), we have: \[
\mathcal{L}_{\textrm{ESM}}(\theta) = \mathbb{E}_{\mathbf{x} \sim p_D(\mathbf{x})} \frac{1}{2} \|s_\theta (\mathbf{x}) - \nabla_{\mathbf{x}} \log p_D(\mathbf{x}) \|^2
\] 

However, this objective is not tractable due to the need to evaluate \(\nabla_{\mathbf{x}} \log p_D(\mathbf{x}) \). Hence, this objective is transformed to:
\[
\mathcal{L}_{\textrm{ESM}}(\theta) = \mathbb{E}_{\mathbf{x} \sim p_D(\mathbf{x})}\left[\frac{1}{2}\left\|s_{\mathbf{\theta}}(\mathbf{x})\right\|^2+\operatorname{tr}\left(\mathbf{J}_{\mathbf{x}} s_{\theta}(\mathbf{x})\right)\right] + ( \text{constants w.r.t. } \theta)
\]  

Although this objective can be directly estimated, and thus optimized and used in the training of an EBM, it is computationally expensive due to the presence of the Jacobian term, motivating further extensions to the standard score matching objective, such as the denoising score matching objective \citep{Vincent_2011} and the sliced score matching objective \citep{song2019sliced}.

\subsection{Maximum Likelihood Estimation and Fisher Score}
\label{subsec:bg_mle}

Maximum likelihood estimation (MLE) is a foundational tool in statistical inference, under standard regularity conditions, it is consistent and asymptotically efficient \citep[Section 10]{casella2024statistical}. Central to the MLE is the Fisher score, defined as the gradient of the log-likelihood with respect to the parameters, \(\nabla_\theta \log p(\mathbf{x} \mid \theta )\). From an optimization point of view, the score provides the direction of steepest ascent of the log-likelihood in parameter space, and thus drives gradient-based MLE approaches. From an inferential point of view, the covariance of the Fisher score is equal to the Fisher information matrix (FIM), which, through the Cramér–Rao lower bound \citep{rao1992information}, lower bounds the variance of any unbiased estimator. Furthermore, the distribution of the MLE is asymptotically normal with covariance equal to the inverse of the FIM, which underpins Wald-type confidence intervals and hypothesis tests \citep[Section 5]{van2000asymptotic}.

\subsection{Notation and Problem Setup}

We consider a statistical model where the data \( \mathbf{x} \in \mathcal{X} \subset\mathbb{R}^{d_\mathbf{x}} \) are generated from a distribution \( P_\theta \) parameterized by \( \theta \in \Omega \subset \mathbb{R}^{d_\theta} \). In the simulation-based inference setting, this statistical model is implicitly defined, so we can draw samples from this model for any choice of \( \theta \) but the closed-form expression for the probability density function, and hence the likelihood function is not known.

Given a set of \( N \) independent and identically distributed observations, \( \mathcal{D} = \{ \mathbf{x}_{i} \}_{i=1}^N \), drawn from the true data-generating process \( \mathbf{x}_{i} \sim P_{\theta^*} \), where \( \theta^* \) denotes the true parameter, the maximum likelihood estimator is \( \hat{\theta}_{\mathrm{MLE}} = \arg\max_\theta \, p(\mathcal{D} \mid \theta) \). 

As the likelihood function \( L(\theta; \mathcal{D}) = \prod_{i=1}^N p(\mathbf{x}_{i} \mid \theta) \) is not available for SBI models, typical likelihood maximization cannot be applied directly. We thus propose a Fisher score matching-based estimator, \( \hat{\theta}_{\mathrm{FSM}} \). Our method is fundamentally a first-order optimization approach, and our main focus is on the direct estimation of the gradient of the log-likelihood function at each parameter iteration, which is done with a novel local Fisher score matching objective. We first discuss our Fisher score estimation technique in Section~\ref{sec:fisher_score_matching}, before proceeding with the MLE procedure in Section~\ref{sec:FSM-MLE}.

\section{Likelihood-free Fisher Score Estimation}
\label{sec:fisher_score_matching}

Score matching \citep{hyvarinen2005estimation} is a classical method in density estimation, but is not directly applicable in likelihood gradient maximization, as it typically targets the Stein score, i.e., the gradient with respect to the data \( \nabla_\mathbf{x} \log p_\theta(\mathbf{x}) \) instead of the Fisher score, which is the gradient with respect to the parameters \( \nabla_\theta \log p_\theta(\mathbf{x}) \). Hence, we propose to adapt score matching into a novel local Fisher score estimation technique which estimates the gradient of the log-likelihood for a fixed parameter point \( \theta_t \) at any data sample \( \mathbf{x} \), \(\nabla_\theta \ell(\theta; \mathbf{x})\big|_{\theta=\theta_t} 
\;=\; \nabla_\theta \log p\bigl(\mathbf{x} \mid \theta\bigr)\Big|_{\theta=\theta_t}\).

\subsection{Local Fisher Score Matching Objective}
\label{subsec:fsm_local}

Around the target parameter point \( \theta_t \), we introduce a local proposal distribution \( q(\theta \mid \theta_t) \), which we typically take as an isotropic Gaussian distribution, \( q(\theta \mid \theta_t) = \mathcal{N}(\theta_t, \sigma^2 I)\). When combined with the statistical model \( P_\theta \), we induce a joint distribution in both the data and parameter space that has probability density \( p(\mathbf{x} \mid \theta) q(\theta \mid \theta_t) \). Note that by drawing parameter samples from the local proposal distribution and then drawing corresponding data samples for the parameter samples, we can easily draw samples from this joint distribution. 

To estimate the score function, we use a score model \(S_{W} : \mathbb{R}^{d_\mathbf{x}} \to \mathbb{R}^{d_\theta} \), where \( S_{W}(\mathbf{x}) \) has parameters \( W \). Our starting point is the adapted, localized score matching least-squares loss for the Fisher score.

\begin{align}
\label{eq:lsm_def}
  \mathcal{J}(W; \theta_t) \;=\;
  \mathbb{E}_{\mathbf{x}\sim p(\mathbf{x} \mid \theta), \theta \sim q(\theta \mid \theta_t)}
  \Bigl[
    \bigl\|\nabla_{\theta}\log p(\mathbf{x} \mid \theta)\,-\,S_{W}(\mathbf{x})\bigr\|^{2}
  \Bigr]
\end{align}

As we are within the simulation-based inference framework, we do not have a closed form expression for the Fisher score \(\nabla_{\theta}\log p(\mathbf{x} \mid \theta) \) and hence this objective function is not tractable. We first expand the square of Equation~\eqref{eq:lsm_def}, which allows us to rewrite \( \mathcal{J}(W; \theta_t) \) as:

\[
  \mathcal{J}(W; \theta_t) =
  \mathbb{E}_{\mathbf{x}\sim p(\mathbf{x} \mid \theta), \theta \sim q(\theta \mid \theta_t)}
  \Bigl[
      \bigl\|S_{W}(\mathbf{x})\bigr\|^{2}
      \;-\;
      2\,S_{W}(\mathbf{x})^{\top}
      \,\nabla_{\theta}\log p(\mathbf{x} \mid \theta)
    \Bigr] + ( \text{constants w.r.t. } W)
\]

We focus on the cross-term, \( \mathbb{E}_{\mathbf{x}\sim p(\mathbf{x} \mid \theta), \theta \sim q(\theta \mid \theta_t)} \bigl[S_{W}(\mathbf{x})^{\top} \nabla_{\theta}\log p(\mathbf{x} \mid \theta)\bigr] \). Using an integration-by-parts trick, this term can be transformed to \( - \mathbb{E}_{\mathbf{x}\sim p(\mathbf{x} \mid \theta), \theta \sim q(\theta \mid \theta_t)} \Bigl[S_{W}(\mathbf{x})^{\top} \nabla_{\theta}\log q(\theta \mid \theta_t)\Bigr] \). Note that we have eliminated the dependence on the intractable likelihood function \( \log p(\mathbf{x} \mid \theta)  \). Thus, this allows us to rewrite \( \mathcal{J}(W; \theta_t) \) as follows.

\begin{theorem}[Local Fisher Score Matching (FSM)]
\label{thm:lsm}

Let \(\mathcal{J}(W)\) be defined as in Equation~\eqref{eq:lsm_def}. Under suitable boundary conditions, it can be rewritten (up to an additive constant w.r.t.\ \( W \)) as
\begin{align}
  \mathcal{J}(W; \theta_t) 
  \;=\;
  \mathbb{E}_{\mathbf{x}\sim p(\mathbf{x} \mid \theta), \theta \sim q(\theta \mid \theta_t)}
  \Bigl[
      \bigl\|S_{W}(\mathbf{x}) \bigr\|^{2}
      \;+\;
      2\,S_{W}(\mathbf{x})^{\top}
      \,\nabla_{\theta}\log q(\theta \mid \theta_t)
    \Bigr]
  \label{eq:lsm_simplified}
\end{align}
\end{theorem}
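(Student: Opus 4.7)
The plan is a direct calculation: expand the squared norm in $J(W; \theta_t)$, discard the terms independent of $W$, and then use integration by parts in $\theta$ to rewrite the remaining cross term so that the intractable $\nabla_\theta \log p(\mathbf{x}\mid\theta)$ is replaced by the known $\nabla_\theta \log q(\theta\mid\theta_t)$.

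First, expanding
\[
\bigl\|\nabla_\theta \log p(\mathbf{x}\mid\theta) - S_W(\mathbf{x})\bigr\|^2 = \bigl\|\nabla_\theta \log p(\mathbf{x}\mid\theta)\bigr\|^2 - 2\, S_W(\mathbf{x})^\top \nabla_\theta \log p(\mathbf{x}\mid\theta) + \bigl\|S_W(\mathbf{x})\bigr\|^2,
\]
and noting that the first term on the right has no $W$-dependence, reduces the task to rewriting the cross term $\mathbb{E}[S_W(\mathbf{x})^\top \nabla_\theta \log p(\mathbf{x}\mid\theta)]$ taken under the joint density $p(\mathbf{x}\mid\theta)\, q(\theta\mid\theta_t)$.

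Next, I would use the identity $p(\mathbf{x}\mid\theta)\, \nabla_\theta \log p(\mathbf{x}\mid\theta) = \nabla_\theta p(\mathbf{x}\mid\theta)$ to absorb the score into the density, apply Fubini to put the $\theta$-integral on the inside, and integrate by parts componentwise in $\theta$:
\[
\int q(\theta\mid\theta_t)\, \nabla_\theta p(\mathbf{x}\mid\theta)\, d\theta \;=\; -\int p(\mathbf{x}\mid\theta)\, \nabla_\theta q(\theta\mid\theta_t)\, d\theta \;+\; \text{boundary term}.
\]
Rewriting $\nabla_\theta q = q\, \nabla_\theta \log q$ and reassembling, the cross term becomes $-\mathbb{E}[S_W(\mathbf{x})^\top \nabla_\theta \log q(\theta\mid\theta_t)]$. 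Substituting back into $J(W;\theta_t)$ turns $-2$ times the cross term into $+2\,\mathbb{E}[S_W(\mathbf{x})^\top \nabla_\theta \log q(\theta\mid\theta_t)]$, giving the claimed identity up to the additive constant $\mathbb{E}[\|\nabla_\theta \log p(\mathbf{x}\mid\theta)\|^2]$.

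The main obstacle, and the content of the ``suitable boundary conditions'' clause, is the vanishing of the surface term $S_W(\mathbf{x})^\top\, p(\mathbf{x}\mid\theta)\, q(\theta\mid\theta_t)$ at the edge of the parameter domain, along with the integrability needed for Fubini. On $\Omega = \mathbb{R}^d$ this comes almost for free: the Gaussian $q(\theta\mid\theta_t)$ decays super-exponentially in $\|\theta\|$, so it suffices to assume that $p(\mathbf{x}\mid\theta)$ grows at most polynomially in $\theta$ for almost every $\mathbf{x}$ and that $S_W$ is square-integrable against the marginal of $\mathbf{x}$. On a bounded parameter space one instead needs either an extra assumption that $p(\mathbf{x}\mid\theta)$ vanishes on $\partial\Omega$ (the Gaussian $q$ certainly does not) or a careful truncation argument; either way this is a regularity condition rather than a computational obstacle.
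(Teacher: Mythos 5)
Your proposal matches the paper's own proof in Appendix~\ref{appendix:lsm_proof}: expand the square, drop the $W$-independent term, use $p\,\nabla_\theta \log p = \nabla_\theta p$ with Fubini and componentwise integration by parts in $\theta$, and rewrite $\nabla_\theta q = q\,\nabla_\theta \log q$, with the boundary term killed by the assumption $\lim_{\|\theta\|\to\infty} p(\mathbf{x}\mid\theta)\,q(\theta\mid\theta_t) = 0$. Your extra discussion of when that surface term vanishes (Gaussian decay of $q$ versus bounded parameter domains) is a reasonable elaboration of the paper's stated assumption, not a different route.
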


The complete details for Theorem~\ref{thm:lsm} are provided in Appendix~\ref{appendix:lsm_proof}. Given that we can draw proposal samples \( \{ \theta^{(j)} \}_{j=1}^m \) where \( \theta^{(j)} \sim q(\theta \mid \theta_t) \) and corresponding data samples \( \{ \mathbf{x}^{(j)}_k \}_{k=1}^n \) where \( \mathbf{x}^{(j)}_k \sim p(\mathbf{x} \mid \theta^{(j)})\), the objective \( \mathcal{J}(W; \theta_t) \) can be approximated by Monte Carlo estimation.

\begin{equation}
    \hat{\mathcal{J}}(W; \theta_t) = \frac{1}{m} \sum_{j=1}^m \frac{1}{n} \sum_{k=1}^n \Bigl[
        \bigl\|S_{W}(\mathbf{x}^{(j)}_k)\bigr\|^{2}
      \;+\;
      2\,S_{W}(\mathbf{x}^{(j)}_k)^{\top}
      \,\nabla_{\theta}\log q(\theta \mid \theta_t)|_{\theta = \theta^{(j)}} \Bigr]
    \label{eq:lsm_est}
\end{equation}

Next, we show the optimal solution for the local FSM objective, \( \mathcal{J}(W; \theta_t) \). The proof of Theorem~\ref{thm:bo-fsm} in Appendix~\ref{appendix:bo-proof}. 

\begin{theorem}[Bayes-optimal Local Fisher Score]
\label{thm:bo-fsm}

    The optimal score model for the FSM objective \( \mathcal{J}(W; \theta_t) \), is
    \[
    S^*(\mathbf{x}; \theta_t) = \mathbb{E}_{\theta \sim p(\theta \mid \mathbf{x}, \theta_t)} \bigl[\nabla_\theta \log p(\mathbf{x} \mid \theta)\bigr]
    \]
\end{theorem}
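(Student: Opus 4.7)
The plan is to recognize that the FSM objective in Equation~\eqref{eq:lsm_def} is a standard vector-valued least-squares regression problem in which $\mathbf{x}$ plays the role of the predictor and the quantity $\nabla_{\theta}\log p(\mathbf{x}\mid\theta)$ plays the role of the response, with the joint distribution governed by $p(\mathbf{x}\mid\theta)\,q(\theta\mid\theta_t)$. The Bayes-optimal predictor for any such squared-error problem is the conditional expectation of the response given the predictor, so the proof reduces to carrying out that conditioning explicitly.

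First, I would factor the joint density in a Bayes-rule-friendly way. Writing
\[
p(\mathbf{x}\mid\theta)\,q(\theta\mid\theta_t)
\;=\;
p(\mathbf{x}\mid\theta_t)\,p(\theta\mid\mathbf{x},\theta_t),
\]
where $p(\mathbf{x}\mid\theta_t) := \int p(\mathbf{x}\mid\theta)\,q(\theta\mid\theta_t)\,d\theta$ is the $\mathbf{x}$-marginal under the joint at the anchor $\theta_t$, and $p(\theta\mid\mathbf{x},\theta_t)$ is the induced posterior over $\theta$ given $\mathbf{x}$ with $q$ as prior. Using the tower property, the FSM objective decomposes as
\[
J(W;\theta_t) \;=\; \mathbb{E}_{\mathbf{x}\sim p(\mathbf{x}\mid\theta_t)}\!\left[
\mathbb{E}_{\theta\sim p(\theta\mid\mathbf{x},\theta_t)}\!\left[
\bigl\|S_W(\mathbf{x}) - \nabla_\theta\log p(\mathbf{x}\mid\theta)\bigr\|^{2}
\right]\right].
\]

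Next, I would minimize the inner conditional expectation pointwise in $\mathbf{x}$. Since $S_W(\mathbf{x})$ depends only on $\mathbf{x}$, I can treat it as a fixed vector $v \in \mathbb{R}^d$ when the inner expectation is evaluated at a given $\mathbf{x}$. Adding and subtracting the conditional mean $\mu(\mathbf{x}):=\mathbb{E}_{\theta\sim p(\theta\mid\mathbf{x},\theta_t)}[\nabla_\theta\log p(\mathbf{x}\mid\theta)]$ inside the norm and expanding gives the standard bias-variance-style identity
\[
\mathbb{E}_{\theta\mid\mathbf{x},\theta_t}\!\left[\|v - \nabla_\theta\log p(\mathbf{x}\mid\theta)\|^2\right]
= \|v - \mu(\mathbf{x})\|^2 + \mathbb{E}_{\theta\mid\mathbf{x},\theta_t}\!\left[\|\nabla_\theta\log p(\mathbf{x}\mid\theta) - \mu(\mathbf{x})\|^2\right],
\]
in which the cross term vanishes because $v-\mu(\mathbf{x})$ is deterministic given $\mathbf{x}$ and $\mathbb{E}[\nabla_\theta\log p(\mathbf{x}\mid\theta)-\mu(\mathbf{x})\mid\mathbf{x},\theta_t]=0$ by construction. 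The second summand is independent of $v$, so the inner expectation is minimized by choosing $v = \mu(\mathbf{x})$.

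Finally, I would assemble the conclusion: optimizing pointwise in $\mathbf{x}$ over a sufficiently expressive function class for $S_W$ yields
\[
S^*(\mathbf{x};\theta_t) = \mu(\mathbf{x}) = \mathbb{E}_{\theta\sim p(\theta\mid\mathbf{x},\theta_t)}\!\left[\nabla_\theta\log p(\mathbf{x}\mid\theta)\right],
\]
which is exactly the stated Bayes-optimal score. The only real subtlety — and the closest thing to an obstacle — is the expressiveness caveat: the argument gives a nonparametric optimum, and to conclude that this is attainable one must either assume $S_W$ ranges over all measurable functions $\mathbb{R}^k\to\mathbb{R}^d$ (the usual ``Bayes-optimal'' convention) or note that under mild integrability the pointwise minimizer $\mu(\mathbf{x})$ lies in $L^2(p(\mathbf{x}\mid\theta_t))$, so misspecification only introduces an approximation error on top of the irreducible conditional variance term identified above.
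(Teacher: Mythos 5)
Your proposal is correct, and it reaches the same conclusion by the textbook route: factor the joint as $p(\mathbf{x}\mid\theta_t)\,p(\theta\mid\mathbf{x},\theta_t)$, apply the tower property, and minimize pointwise in $\mathbf{x}$ via the completion-of-square (bias--variance) identity, so the conditional mean is the minimizer. The paper's proof in Appendix~\ref{appendix:bo-proof} is driven by the same underlying fact but is organized differently: it expands the square, discards $\|\nabla_\theta\log p(\mathbf{x}\mid\theta)\|^2$ as a constant in $S$, and then invokes a first-order condition for the functional minimization, written (somewhat tersely) as an equality of expectations of $S^*(\mathbf{x})$ and $\nabla_\theta\log p(\mathbf{x}\mid\theta)$ under the joint, before simplifying to the conditional expectation. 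Your conditioning-plus-pointwise-minimization argument is the more self-contained of the two, since it makes explicit why the cross term vanishes and why the optimum is attained for each $\mathbf{x}$ separately, rather than relying on a loosely stated variational first-order condition. Your closing caveat about the function class is also apt and is exactly how the paper handles it: the appendix restates the theorem for a general (nonparametric) function $S:\mathbb{R}^k\to\mathbb{R}^d$, with the linear model $S_W$ entering only later as an approximation to this Bayes-optimal target.
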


As the score matching objective Equation~\eqref{eq:lsm_def} is taken as an expectation over the parameter proposal distribution \( q(\theta \mid \theta_t) \), the Bayes-optimal score model for this objective is generally biased and instead of being the true score at the point \( \theta_t \), it is instead an average of the score over the posterior induced from the proposal distribution and the statistical model, that is, \( p(\theta \mid \mathbf{x}, \theta_t) \). Thus, this score matching objective targets a smoothed likelihood around \( \theta_t \). We elaborate on this in more detail in Section~\ref{sec:gaussian-smoothing}.

\subsection{Score Model Parameterization}
\label{subsec:score_model_param}

A key aspect of the Fisher score matching technique is the choice of parameterization for the surrogate score model, \( S_{W}(\mathbf{x}; \theta_t) \), which approximates the Fisher score at the target parameter iterate \( \theta_t \), \( \nabla_\theta \log p_\theta(\mathbf{x})|_{\theta = \theta_t} \). For computational tractability, we propose using a lightweight linear surrogate score model based on the following derivation.

Let the surrogate score model be defined as \( S_W(\mathbf{x}; \theta_t) = W^\top \mathbf{x} \), where \( W \in \mathbb{R}^{d_\mathbf{x} \times d_\theta} \) is the weight matrix for our model. Recall that we first draw a set of parameters \( \{\theta^{(j)} \}_{j=1}^m \) from the proposal distribution \( q(\theta \mid \theta_t) \). Then, define the \( j\)-th data matrix as \( X_j \in \mathbb{R}^{n \times d_\mathbf{x}} \) constructed from \(n \) training samples \( \{ \mathbf{x}^{(j)}_k \}_{k=1}^n \) drawn from the model \( p(\mathbf{x} \mid \theta) \) at \( \theta^{(j)} \), and the \(j\)-th Gram matrix as \( G_j = X_j^\top X_j\). Using the linear score model for the local Fisher score matching objective function in Equation~\eqref{eq:lsm_est} and solving for the first-order conditions, we obtain the normal equation,

\begin{equation}
    (\sum_{j=1}^m G_j) \hat{W} = - \sum_{j=1}^m \sum_{k=1}^n \bigl[\mathbf{x}^{(j)}_k \nabla_{\theta}\log q(\theta \mid \theta_t)|_{\theta = \theta^{(j)}} ^\top \bigr]
    \label{eq:normal_lsm}
\end{equation}

We can thus obtain a closed-form solution for the linear Fisher score matching estimator as:
\begin{equation}
    \hat{W} =  - (\sum\limits_{j=1}^{m} G_j)^{-1} \sum\limits_{j=1}^{m} \sum\limits_{k=1}^{n} \bigl[ \mathbf{x}^{(j)}_k \nabla_{\theta}\log q(\theta \mid \theta_t)|_{\theta = \theta^{(j)}}^\top \bigr]
    \label{eq:linear_fish}
\end{equation} 

Once \( \hat{W} \) is obtained, we can use this to construct our Fisher score estimator \( \hat{S}(\mathbf{x}; \theta_t) = \hat{W}^\top \mathbf{x} \). We provide a complete derivation and further discussion in the Appendix~\ref{appendix:linear_model}. Although the local Fisher score matching objective is a general framework that is agnostic to the choice of parameterization of the model, using a linear model essentially recasts the model estimation procedure as multivariate linear regression, benefiting from well-understood theory and efficient implementations. Although a linear model might not be sufficient to fully capture the full data-parameter relationship, it provides a strong baseline that we find works well empirically compared to a more flexible neural network-based model, which incurs significant computational costs in the form of an inner optimization loop and increased variance. We provide empirical comparisons with the neural network-based score model in the relevant experimental sections of the appendix, and details of the implementation in Appendix~\ref{appendix:nn_model}.

\section{Likelihood-free MLE with Approximate Fisher Score}
\label{sec:FSM-MLE}

Using our local Fisher score matching (FSM) method as described in Section~\ref{sec:fisher_score_matching}, we describe how maximum likelihood estimation (MLE) can be performed in the likelihood-free setting. Unlike many SBI methods that attempt to estimate the likelihood globally, our method is inherently sequential by focusing only on a local Fisher score estimation at the parameter point \( \theta_t \). 

Given a set of \( N \) independent and identically distributed observations, \( \mathcal{D} = \{ \mathbf{x}_{i} \}_{i=1}^N \), at a fixed parameter point \( \theta_t \), we obtain an estimated FSM model \( \hat{S}(\mathbf{x}; \theta_t) \) using training samples \( \{\theta^{(j)}\}_{j=1}^m, \{ \mathbf{x}^{(j)}_k\}_{k=1}^n \), drawn from \( \theta^{(j)} \sim q(\theta \mid \theta_t) \), \( \mathbf{x}^{(j)}_k \sim p(\mathbf{x} \mid \theta^{(j)}) \). As the FSM model is a function of \( \mathbf{x} \), we can evaluate it at any observation \( \mathbf{x}_i \), providing us with an approximate gradient of the log-likelihood evaluated at \(\theta_t \), \( \hat{\nabla}_\theta \ell(\theta_t; \mathcal{D}) = \sum_{i=1}^N \hat{S}(\mathbf{x}_{i}; \theta_t) \). This can then be used directly in any iterative stochastic gradient-based algorithm such as stochastic gradient descent (SGD) \citep{robbins1951stochastic}, Adam \citep{kingma2017adam}, or RMSProp \citep{tieleman2012lecture}, where at each parameter iteration \( \theta_t \), a new FSM model \( \hat{S}(\mathbf{x}; \theta_t) \) is estimated. The FSM-MLE algorithm with SGD is presented in Algorithm~\ref{alg:cap}

\begin{algorithm}[H]
\caption{FSM-MLE Algorithm (SGD)}\label{alg:cap}
\begin{algorithmic}
\STATE Input: \( N \) independent and identically distributed observations \(\mathcal{D} = \{\mathbf{x}_i \}_{i=1}^{N}\), initial parameter \(\theta_0\), step size \(\eta\), and proposal distribution \(q(\theta \mid \theta_t)\)
\STATE Initialize \(t \leftarrow 0\)
\WHILE{\(t < T\)}
    \STATE 1. For current iterate \(\theta_t\), sample \(\{\theta^{(j)}\}_{j=1}^m\) from proposal distribution \(\theta \sim q(\theta \mid \theta_t)\), and then sample corresponding data samples \( \{\mathbf{x}^{(j)}_k \}_{k=1}^n \), from \( \mathbf{x}^{(j)}_k \sim p(\mathbf{x} \mid \theta^{(j)}) \). 
    \STATE 2. Estimate Fisher score model \( \hat{S}(\mathbf{x}; \theta_t) \) using training samples \( (\{\theta^{(j)}\}_{j=1}^m, \{\mathbf{x}^{(j)}_k \}_{k=1}^n) \)
    \STATE 3. Set \(\theta_{t+1} \leftarrow \theta_t + \eta \hat{S}(\mathcal{D}; \theta_t) \), where \( \hat{S}(\mathcal{D}; \theta_t) = \sum_{i=1}^N \hat{S}(\mathbf{x}_i;\theta_t)\)
    \STATE 4. \(t \leftarrow t + 1\)
\ENDWHILE
\end{algorithmic}
\end{algorithm}

\subsection{Fisher Score Proposal Distribution}
\label{subsec:choice-of-proposal}

Our local FSM approach crucially uses a proposal distribution \( q(\theta \mid \theta_t) \) in the parameter space, defining a local region for the estimation of our Fisher score model. Although most distributions with a differentiable and unbounded density can be used, we use an isotropic Gaussian distribution \( q(\theta \mid \theta_t) = \mathcal{N}(\theta \mid \theta_t, \sigma^2 I) \), which has a simple, closed-form solution and direct theoretical interpretation as discussed in Section~\ref{sec:theory}. This introduces a single scalar hyperparameter \( \sigma \) that controls the width of the proposal distribution. While further extensions such as a diagonal covariance matrix or an adaptive covariance could be explored, we keep to the isotropic Gaussian proposal distribution as it provides a simple and effective baseline. We provide further discussion on the choice of proposal distribution and a calibration scheme for \( \sigma \) in Appendix~\ref{appendix:fsm_proposal}.

\section{Theoretical Analysis}
\label{sec:theory}
In this section, we provide a theoretical analysis of our proposed local Fisher score matching technique and the stochastic gradient optimization based on this technique.

\subsection{Connection to Gaussian Smoothing}
\label{sec:gaussian-smoothing}

Gaussian smoothing is a popular zeroth-order optimization technique that estimates gradients using only function evaluations when the gradient function is not known \citep{nesterov2017random, duchi2012randomized}. As the Gaussian smoothing gradient estimator targets a smoothed function, it is widely applicable even for non-smooth functions, which would not be amenable with standard gradient estimation, and has been shown to be robust to local optima \citep{starnes2023gaussian} and applicable for many challenging machine learning problems \citep{salimans2017evolution}. 

Although standard Gaussian smoothing is straightforward for black-box optimization problems, note that it is not directly applicable in the simulation-based inference setting as the intractable likelihood \( L(\theta) = p(\mathbf{x} \mid \theta) \) is not explicitly accessible. Nonetheless, we show here that our proposed local Fisher score matching technique can be directly cast as a \emph{likelihood-free} analogue of Gaussian smoothing. Specifically, under a Gaussian proposal distribution, \(q(\theta \mid \theta_t) = \mathcal{N}(\theta \mid \theta_t, \sigma^2 I)\), the Bayes-optimal Fisher score is exactly the gradient of a smoothed likelihood. We provide a full proof of Theorem~\ref{thm:gs} in Appendix~\ref{appendix:gs_proof}.

\begin{theorem}[Equivalence as Gaussian Smoothing]
    \label{thm:gs}
    Under an isotropic Gaussian proposal, \(q(\theta \mid \theta_t) = \mathcal{N}(\theta \mid \theta_t, \sigma^2 I)\), the optimal FSM estimator is equivalent to the gradient of the smoothed likelihood
    \[
    \nabla_{\theta_t} \tilde{\ell}(\theta_t ; \mathbf{x}) = \mathbb{E}_{\theta \sim p(\theta \mid \mathbf{x}, \theta_t)} \nabla_\theta \log p(\mathbf{x} \mid \theta)
    \]

    where \(\tilde{\ell}(\theta_t;\mathbf{x})
  \;=\;
  \log \int 
  p\bigl(\mathbf{x} \mid \theta\bigr)\,q\bigl(\theta \mid \theta_t\bigr)
  \,d\theta\) and \( p(\theta \mid \mathbf{x}, \theta_t) \propto p(\mathbf{x} \mid \theta) q( \theta \mid \theta_t) \) is the induced posterior from the proposal distribution \( q(\theta \mid \theta_t) \)
  
\end{theorem}

\begin{wrapfigure}[23]{r}{0.5\textwidth} 
    \captionsetup{skip=0.1pt}
    \centering
    \includegraphics[width=\linewidth]{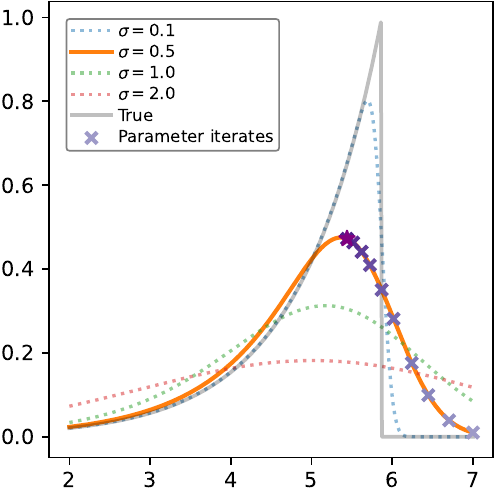} 
    \caption{Optimizing a non-smooth, exponential likelihood with FSM estimator (\(\sigma = 0.5\)) for 10 parameter iterates from initial point \( \theta_0 = 7\)} 
    \label{fig:shifted-exp}
\end{wrapfigure}

Observe that the smoothed likelihood can be further rewritten as
\[
  \tilde{\ell}(\theta_t;\mathbf{x})
  \;=\;
  \log 
  \mathbb{E}_{\mathbf{z} \sim \mathcal{N}(0,I)}
    \Bigl[L(\theta_t + \sigma\,\mathbf{z}; \mathbf{x})\Bigr].
\]

where \( L(\theta;\mathbf{x}) = p(\mathbf{x} \mid \theta) \). This is exactly the Gaussian-smoothed likelihood function, except importantly that \emph{explicit evaluations of the likelihood \( L(\theta) \) were not used}. Instead, our Fisher score matching technique only obtains samples from the model \( p( \mathbf{x} \mid \theta) \) for the FSM estimation. Hence, our method directly inherits many of the robustness benefits of Gaussian smoothing while still being applicable in the SBI setting. 

Figure~\ref{fig:shifted-exp} demonstrates the effects of smoothing in a one-dimensional, shifted exponential likelihood model with a single observation. The true likelihood is zero for \( \theta \geq \hat{\theta}_{\mathrm{MLE}} \), and hence any gradient-based optimization which is initialized beyond the boundary will be stuck in that region. However, using our smoothed likelihood (depicted with differing values of proposal variance \( \sigma^2 \)), we are able to obtain a non-zero gradient even outside the nominal support, allowing us to successfully optimize the likelihood function.

We can also further view the FSM procedure as a form of Empirical Bayes (EB) \citep{morris1983parametric}, by interpreting the proposal distribution \( q(\theta \mid \theta_t) \) as a local prior centered at \( \theta_t \), which, together with the simulator model, defines an EB marginal likelihood function \( \tilde{\ell}(\theta_t;\mathbf{x}) \). Theorem~\ref{thm:gs} then shows that our Bayes-optimal FSM estimator is exactly the hyperparameter gradient of the EB marginal likelihood. Hence, this provides a complementary Bayesian interpretation of our FSM method in addition to the optimization viewpoint of Gaussian smoothing.

\subsection{Properties of the FSM estimator}
\label{subsec:fsm_properties}

We now provide theoretical guarantees for our FSM estimator under a Gaussian proposal distribution by characterizing its bias. In particular, by establishing the bias in terms of the smoothing hyperparameter \( \sigma \), we highlight a fundamental trade-off in the FSM estimation procedure.

\begin{theorem}[Bias characterization of the FSM estimator]
\label{thm:bias_variance}
Let \( \theta^* \) be the true parameter, and denote \( \mathbf{x}_0 \sim P_{\theta^*} \) as random observations sampled from the true model. Suppose there exists a unique maximum likelihood estimator for this model, and that the log-likelihood is \(L\)-smooth. Recall that \( g(\mathbf{x}_0; \theta_t) = \nabla_\theta \log p(\mathbf{x}_0 \mid \theta)|_{\theta = \theta_t} \) is the true Fisher score, \( S^*(\mathbf{x}_0; \theta_t) = \mathbb{E}_{\theta \sim p(\theta \mid \mathbf{x}, \theta_t)} \nabla_\theta \log p(\mathbf{x} \mid \theta) \) is the optimal FSM estimator. For a fixed parameter point \( \theta_t \),

  The bias at \(\theta_t\) is bounded by
  \[
    \mathbb{E}_{\mathbf{x}_0}\!\bigl\|S^{*}(\mathbf{x}_0;\theta_t)-g(\mathbf{x}_0;\theta_t)\bigr\|
    \;\le\;
    L\; \sqrt{d}\,\sigma\, \mathbb{E}_{\mathbf{x}_0} [R(\mathbf{x}_0)]
  \]

where \( R(\mathbf{x}) = \frac{p(\mathbf{x} \mid \theta^*)}{ p(\mathbf{x} \mid \theta_t)} \) is a likelihood ratio term and \( d \) is the dimension of the parameter space %and \( m \) is the number of proposal parameter draws

\end{theorem}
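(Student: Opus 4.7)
The plan is to combine the Bayes-optimal characterization from Theorem~\ref{thm:bo-fsm} with the $L$-smoothness hypothesis to bound the bias pointwise in $\mathbf{x}_0$, and then integrate over $\mathbf{x}_0 \sim P_{\theta^*}$ via a change of measure that produces the likelihood ratio $R$. First I would invoke Theorem~\ref{thm:bo-fsm} to rewrite the bias as a single posterior expectation: since $g(\mathbf{x}_0;\theta_t)$ is constant with respect to the integration variable $\theta$,
\[
S^{*}(\mathbf{x}_0;\theta_t) - g(\mathbf{x}_0;\theta_t)
  \;=\; \mathbb{E}_{\theta \sim p(\theta \mid \mathbf{x}_0, \theta_t)}\bigl[\nabla_\theta \log p(\mathbf{x}_0 \mid \theta) - \nabla_\theta \log p(\mathbf{x}_0 \mid \theta_t)\bigr].
\]
Applying Jensen's inequality to pull the norm inside the posterior expectation, then the $L$-smoothness hypothesis on the log-likelihood, yields the pointwise bound $\|S^{*}(\mathbf{x}_0;\theta_t) - g(\mathbf{x}_0;\theta_t)\| \le L\,\mathbb{E}_{\theta \sim p(\theta \mid \mathbf{x}_0, \theta_t)}\|\theta - \theta_t\|$.

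Next I would take the outer expectation over $\mathbf{x}_0 \sim P_{\theta^*}$ and expand the posterior as $p(\theta \mid \mathbf{x}_0, \theta_t) = p(\mathbf{x}_0 \mid \theta)\,q(\theta \mid \theta_t)/m(\mathbf{x}_0, \theta_t)$, where $m(\mathbf{x}_0, \theta_t) = \int p(\mathbf{x}_0 \mid \theta')\,q(\theta' \mid \theta_t)\,d\theta'$. Fubini then lets me integrate over $\theta$ before $\mathbf{x}_0$, at which point the Gaussian structure of $q$ enters through its second moment $\mathbb{E}_{\theta \sim q}\|\theta - \theta_t\|^2 = d\sigma^2$, which by Jensen gives $\mathbb{E}_{\theta \sim q}\|\theta-\theta_t\| \le \sqrt{d}\,\sigma$. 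A Cauchy–Schwarz step against the Gaussian prior is meant to produce the $\sqrt{d}\,\sigma$ factor in the final bound, while the remaining $\mathbf{x}_0$-dependent pieces are arranged so that the change of measure $p(\mathbf{x}_0 \mid \theta^*) = R(\mathbf{x}_0)\,p(\mathbf{x}_0 \mid \theta_t)$ collapses them into $\mathbb{E}_{\mathbf{x}_0}[R(\mathbf{x}_0)]$.

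The hard part will be handling the normalizer $m(\mathbf{x}_0, \theta_t)$ in the denominator of the posterior: it couples $\mathbf{x}_0$ and $\theta$ nontrivially, so cleanly separating the $\sqrt{d}\,\sigma$ factor from the likelihood-ratio factor is delicate. My strategy would be to apply Cauchy–Schwarz in the joint $(\mathbf{x}_0,\theta)$ space against the joint density $p(\mathbf{x}_0 \mid \theta)\,q(\theta \mid \theta_t)$, whose $\theta$-marginal is exactly $q(\theta \mid \theta_t)$ and whose $\mathbf{x}_0$-marginal is $m(\mathbf{x}_0,\theta_t)$. By splitting the integrand so that $\|\theta-\theta_t\|$ is paired with the Gaussian prior and the data-dependent ratio $p(\mathbf{x}_0 \mid \theta^*)/m(\mathbf{x}_0,\theta_t)$ is paired with $p(\mathbf{x}_0 \mid \theta)\,q(\theta \mid \theta_t)$, the identity $\int p(\mathbf{x}_0 \mid \theta)\,d\mathbf{x}_0 = 1$ and the change of measure above should cancel the $m(\mathbf{x}_0,\theta_t)$ term and deliver the stated bound $L\sqrt{d}\,\sigma\,\mathbb{E}_{\mathbf{x}_0}[R(\mathbf{x}_0)]$. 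I would also expect the uniqueness of the MLE and the unbounded-support conditions to enter only to justify the integration-by-parts used implicitly in Theorem~\ref{thm:bo-fsm} and to rule out degenerate marginals for $m$.
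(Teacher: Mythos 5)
The first half of your plan coincides with the paper's proof: invoke the Bayes-optimal form, pull the norm inside the posterior expectation by Jensen, and use $L$-smoothness to get the pointwise bound $\|S^{*}(\mathbf{x}_0;\theta_t)-g(\mathbf{x}_0;\theta_t)\| \le L\,\mathbb{E}_{\theta \sim p(\theta \mid \mathbf{x}_0,\theta_t)}\|\theta-\theta_t\|$. The divergence is in how the posterior moment is handled. The paper does not integrate jointly: it bounds the posterior by the proposal through the sup importance weight, $p(\theta\mid\mathbf{x},\theta_t)\le q(\theta\mid\theta_t)\,\sup_{\theta}\frac{p(\theta\mid\mathbf{x},\theta_t)}{q(\theta\mid\theta_t)}$, so that the Gaussian moment $\int\|\theta-\theta_t\|\,q(\theta\mid\theta_t)\,d\theta\le\sqrt{d}\,\sigma$ factors out pointwise in $\mathbf{x}$; the sup equals $\sup_\theta p(\mathbf{x}\mid\theta)$ divided by the normalizer, i.e.\ the likelihood evaluated at $\hat{\theta}_{\mathrm{MLE}}(\mathbf{x})$ --- this is precisely where the unique-MLE hypothesis is used, not (as you guessed) merely to justify integration by parts --- and the final step replaces $p(\mathbf{x}_0\mid\hat{\theta}_{\mathrm{MLE}}(\mathbf{x}_0))$ by $p(\mathbf{x}_0\mid\theta^*)$ under $\mathbb{E}_{\mathbf{x}_0}$ (an explicitly approximate step in the paper) to arrive at $\mathbb{E}_{\mathbf{x}_0}[R(\mathbf{x}_0)]$.

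Your joint Cauchy--Schwarz step, as described, does not deliver the stated bound. Carrying it out against the joint density $p(\mathbf{x}_0\mid\theta)\,q(\theta\mid\theta_t)$, the first factor is indeed $\bigl(\int\|\theta-\theta_t\|^2 q(\theta\mid\theta_t)\,d\theta\bigr)^{1/2}=\sqrt{d}\,\sigma$, but the second factor is $\bigl(\int p(\mathbf{x}_0\mid\theta^*)^2/m(\mathbf{x}_0,\theta_t)\,d\mathbf{x}_0\bigr)^{1/2}$: because the data-dependent ratio enters squared, only one copy of the normalizer $m(\mathbf{x}_0,\theta_t)=\int p(\mathbf{x}_0\mid\theta')q(\theta'\mid\theta_t)\,d\theta'$ cancels, and what survives involves the smoothed marginal $m$, not $p(\mathbf{x}_0\mid\theta_t)$. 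There is no general inequality between $m(\mathbf{x},\theta_t)$ and $p(\mathbf{x}\mid\theta_t)$ (the Gaussian average of the likelihood over $\theta$ can be smaller or larger than its value at $\theta_t$), so the ``delicate'' step you flagged is in fact the gap: your bound is a legitimate, rigorous alternative, but it cannot be converted into $L\sqrt{d}\,\sigma\,\mathbb{E}_{\mathbf{x}_0}[R(\mathbf{x}_0)]$ without extra assumptions. To land on the theorem as stated you need something like the paper's sup-weight argument, in which the unique MLE and the subsequent approximation $p(\mathbf{x}_0\mid\hat{\theta}_{\mathrm{MLE}}(\mathbf{x}_0))\approx p(\mathbf{x}_0\mid\theta^*)$ are the essential (and admittedly heuristic) ingredients producing the likelihood-ratio term.
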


\begin{wrapfigure}[14]{r}{0.5\textwidth} 
    \captionsetup{skip=0.1pt}
    \centering
    \includegraphics[width=\linewidth]{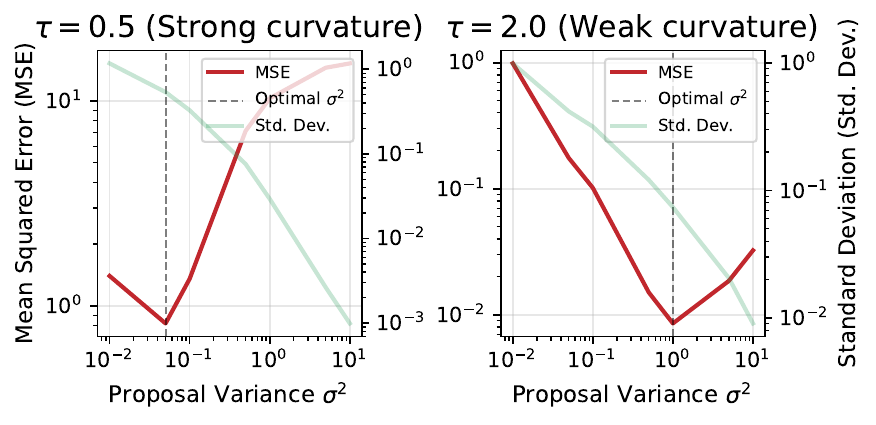}  % half of text block
    \caption{Mean-squared error and standard deviation of the FSM estimator \( \hat{S} \) with varying proposal hyperparameter \( \sigma^2 \), for two Gaussian likelihoods \( \mathbf{x} \sim \mathcal{N}(\theta, \tau^2) \) with differing curvature.} 
    \label{fig:optimal-sigma}
\end{wrapfigure}

We provide a full proof in Appendix~\ref{appendix:FSM-BV}. From Theorem~\ref{thm:bias_variance}, we can see that, increasing \( \sigma \), we increase the bias of the FSM estimator. Intuitively, this is because \( \sigma \) governs the degree of smoothing, which induces a "smearing" effect of the FSM gradient estimates. On the other hand, for the linear FSM estimator, note that in the estimator \( \hat{W} \), we have the proposal gradient term \( \nabla_{\theta}\log q(\theta \mid \theta_t)|_{\theta = \theta^{(j)}} = - \frac{1}{\sigma^2} (\theta^{(j)} - \theta_t) \), and hence taking \( \sigma \to 0 \) inflates the variance of \( \hat{W} \). Thus, there is a fundamental bias-variance trade-off in the choice of \( \sigma \). 

Figure~\ref{fig:optimal-sigma} empirically illustrates the bias-variance trade-off of the linear FSM estimator with an isotropic Gaussian proposal distribution for two Gaussian likelihood models with differing curvature. In particular, the figure also shows the effect of the log-likelihood curvature, or the gradient-Lipschitz constant \( L \) from Theorem~\ref{thm:bias_variance}, on the MSE-optimal choice of the proposal scale \( \sigma \). When the curvature is stronger (larger \( L \)), smoothing tends to introduce more bias, and the optimal \( \sigma \) is smaller to control the bias. Conversely, when curvature is weaker (smaller \(L \)), a larger \( \sigma \) is optimal to reduce the variance of the score estimator. 

Furthermore, note that the likelihood ratio term, \( R(\mathbf{x}) = \frac{p(\mathbf{x} \mid \theta^*)}{ p(\mathbf{x} \mid \theta_t)} \) encodes the estimation error from using training samples around the parameter iterate points \( \theta_t \) to estimate an FSM estimator that is evaluated at observations \( \mathbf{x}_0 \sim P_{\theta^*} \). Hence, for parameter iterates \( \theta_t \) that are far from the true parameter \( \theta^* \), we are likely to get a subpar estimation of the true gradient, while as we approach the true parameter, our estimation is likely to improve. However, increasing \( \sigma \), we can sample from a wider parameter space and are therefore more likely to obtain parameter samples that cover \( \theta^* \). Thus, \( \sigma \) also encodes an inherent exploration-exploitation trade-off.

\subsection{Convergence Guarantees}

As we have shown that our FSM gradient estimator closely relates to the Gaussian smoothing gradient estimator in Section~\ref{sec:gaussian-smoothing}, we can leverage established results showing the asymptotic convergence of stochastic gradient-based optimization methods with such biased gradient estimators. In particular, instead of using the final parameter iterate of the gradient-based optimization procedure as the approximated MLE \( \theta_T \approx \hat{\theta}_{\mathrm{MLE}} \), we instead propose using an averaged SGD estimator \( \bar{\theta}_T = \frac{1}{T} \sum_{t=1}^T \theta_t \) based on Polyak-Ruppert averaging \citep{Polyak_Juditsky_1992,ruppert1988efficient}, which enjoys stronger theoretical guarantees. We provide the relevant convergence arguments in Appendix~\ref{appendix:conv_guard}. 

A further benefit is that since we can obtain the quantification of the algorithmic uncertainty using the averaged SGD, \( \bar{\theta}_T - \hat{\theta}_{\mathrm{MLE}} \) from Appendix~\ref{appendix:conv_guard} and the statistical uncertainty of the MLE \( \hat{\theta}_{\mathrm{MLE}} - \theta^* \) from standard statistical theory, we can provide a result showing the quantification of the joint uncertainty using the averaged SGD \( \bar{\theta}_T \) as an approximate MLE.

\begin{theorem}
    \label{thm:uq}
    Let \( \hat{\theta}_{\mathrm{MLE},N} \) be the MLE for N i.i.d. samples. Suppose that the number of iterations in the optimization algorithm \( T \) dominates the number of observations \( N \) such that \( \sqrt{\frac{N}{T}} \to 0 \) as \( N, T \to \infty \). Then, assuming that \( \sqrt{T}(\bar{\theta}_T - \hat{\theta}_{\mathrm{MLE},N}) = \mathcal{O}_p(1) \) uniformly over both \(N, T\) and that the standard regularity conditions for the MLE are met, we have as \( N,T \to \infty\), 
    \[ \sqrt{N} (\bar{\theta}_T - \theta^*) \to_d \mathcal{N}(0, \mathcal{I}(\theta^*)^{-1} )\]
    where \( \mathcal{I}(\theta^*) \) is the Fisher information matrix evaluated at the true parameter
\end{theorem}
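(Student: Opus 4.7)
The plan is to decompose the target quantity into two independent sources of error and control each separately. Specifically, I would write
\[
\sqrt{n}\,(\bar{\theta}_T - \theta^*)
\;=\;
\underbrace{\sqrt{n}\,(\bar{\theta}_T - \hat{\theta}_{\mathrm{MLE},n})}_{\text{algorithmic error}}
\;+\;
\underbrace{\sqrt{n}\,(\hat{\theta}_{\mathrm{MLE},n} - \theta^*)}_{\text{statistical error}},
\]
so that the result will follow by showing the first term is $o_p(1)$ and the second converges to the stated normal limit, then invoking Slutsky's theorem.

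For the algorithmic term, I would rewrite it as
\[
\sqrt{n}\,(\bar{\theta}_T - \hat{\theta}_{\mathrm{MLE},n})
\;=\;
\sqrt{\tfrac{n}{T}}\;\cdot\;\sqrt{T}\,(\bar{\theta}_T - \hat{\theta}_{\mathrm{MLE},n}).
\]
By the theorem's hypothesis that $\sqrt{T}(\bar{\theta}_T - \hat{\theta}_{\mathrm{MLE},n}) = O_p(1)$ uniformly in $n,T$, and the rate assumption $\sqrt{n/T} \to 0$, the product is a deterministic $o(1)$ times an $O_p(1)$, hence $o_p(1)$. Here the uniformity of the $O_p(1)$ bound is what makes the argument rigorous when both indices diverge jointly; I would briefly note that this uniformity is inherited from the convergence guarantees of the averaged-SGD analysis invoked in the preceding subsection (Appendix~\ref{appendix:conv_guard}).

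For the statistical term, I would appeal directly to the classical asymptotic normality of the MLE under the stated regularity conditions (identifiability, interchange of differentiation and integration, finiteness and nonsingularity of $\mathcal{I}(\theta^*)$, local quadratic expansion of the log-likelihood, and consistency of $\hat{\theta}_{\mathrm{MLE},n}$); see e.g.\ van der Vaart, Chapter 5. This yields
\[
\sqrt{n}\,(\hat{\theta}_{\mathrm{MLE},n} - \theta^*)
\;\xrightarrow{d}\;
N\bigl(0,\;\mathcal{I}(\theta^*)^{-1}\bigr).
\]

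Finally, combining the two displays via Slutsky's theorem (since adding an $o_p(1)$ term preserves the weak limit) delivers the conclusion. The only genuine subtlety, and the place I would spend the most care, is the uniformity of the $O_p(1)$ control on $\sqrt{T}(\bar{\theta}_T - \hat{\theta}_{\mathrm{MLE},n})$ as $n$ grows: the implicit constants in the Polyak--Ruppert convergence rate depend on curvature and gradient-noise constants tied to the empirical log-likelihood at $\hat{\theta}_{\mathrm{MLE},n}$, and one must verify these constants are themselves stochastically bounded in $n$ (which follows from consistency of the MLE and continuity of the Fisher information near $\theta^*$). Everything else in the argument is a one-line application of Slutsky.
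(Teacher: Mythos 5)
Your proposal is correct and follows essentially the same route as the paper's own proof: the same decomposition into algorithmic and sampling error, the same factorization $\sqrt{n}(\bar{\theta}_T - \hat{\theta}_{\mathrm{MLE},n}) = \sqrt{n/T}\cdot\sqrt{T}(\bar{\theta}_T - \hat{\theta}_{\mathrm{MLE},n})$ giving $o_p(1)$, classical MLE asymptotics for the second term, and Slutsky to conclude. Your added remark on verifying the uniformity of the $O_p(1)$ bound in $n$ is a sensible extra caution that the paper simply takes as a hypothesis.
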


We provide the proof in Appendix \ref{appendix:uq_proof}. Given that the Fisher information matrix \( \mathcal{I}(\theta^*) \) can be approximated using the Fisher score by drawing samples \( \mathbf{x}_i \sim P_{\hat{\theta}} \) and evaluating \( \mathcal{I}(\theta^*) \approx \frac{1}{N} \sum_{i=1}^N \nabla_\theta \log p(\mathbf{x}_i \mid \hat{\theta}_{\mathrm{MLE}} ) \nabla_\theta \log p(\mathbf{x}_i \mid \hat{\theta}_{\mathrm{MLE}} )^\top \), we can also estimate this with our FSM method and take advantage of this result to obtain uncertainty quantification based on Theorem~\ref{thm:uq}.

\section{Related Work}

The method closest to ours is the approximate MLE approach of \citet{bertl2017approximate}, which first estimates the likelihood through kernel density estimation (KDE) before applying a simultaneous perturbation stochastic approximation (SPSA) \citep{119632} algorithm, which amounts to using a finite-differences gradient estimator on the likelihood function estimated using KDE. In contrast, our FSM method directly estimates the Fisher score, merging density and gradient estimation into one step and thereby reducing both model complexity and computational overhead. After posting the first version of this manuscript on arXiv, we became aware of related independent work by \citet{sui2025fisherscorematchingsimulationbased}, which proposes a similar Fisher score matching estimator. Their focus is on Fisher score estimation more broadly, whereas our work targets simulation-based MLE specifically.

The use of MLE in the simulation-based model setting was first addressed in the seminal work on SBI of \citet{Diggle_Gratton_1984}, although the inference of SBI is more typically addressed within the Bayesian framework, as exemplified by the ABC algorithm. Naturally, since the maximum a posteriori estimate (MAP) of the posterior distribution under a uniform prior corresponds to the MLE within the prior support, \citet{Rubio_Johansen_2013} suggested leveraging the ABC algorithm and using KDE to obtain the MLE, and more recent neural surrogate SBI methods, such as SNLE \citep{papamakarios2019sequential}, while not specifically targeted for MLE, could be used in the same way. Another similar line of research is the work of \citet{Ionides_Breto_Park_Smith_King_2017} and \citet{Park_2023}, which develop the MLE methodology in the SBI setting for partially observed Markov models. Research focused on developing SBI methods using score matching is a growing field \citep{geffner2023compositionalscoremodelingsimulationbased,sharrock2024sequentialneuralscoreestimation,jiang2025simulationbasedinferencelangevindynamics}, however, this has been limited to amortized Bayesian inference, and, to our knowledge, we are the first work that has adapted score matching for the purpose of direct Fisher score estimation and MLE in the SBI setting.

\section{Experimental Results}
\label{sec:exp_result}
We evaluate our local Fisher score matching (FSM) technique on both controlled numerical studies and challenging real-world SBI problems.\footnote{Code is available at: \url{https://github.com/Shermjj/Direct_FSM}} For all experiments, we use an isotropic Gaussian proposal distribution \( q(\theta \mid \theta_t) = \mathcal{N}(\theta_t, \sigma^2 I) \) with a linear FSM estimator, and an empirical comparison with the neural network-based FSM estimator is provided in the relevant experiment sections in the Appendix.

As a primary baseline, we compare against the approximate MLE method of \citet{bertl2017approximate}, here referred to as \emph{KDE-SP}, which estimates a log-likelihood via kernel density estimation (KDE) and then uses a simultaneous perturbation (SP) estimator to compute gradients:
\[
\hat{\nabla} \ell(\theta) = \delta \frac{\hat{\ell}(\theta^+) - \hat{\ell} (\theta^-)}{2 c}
\]
where \( \delta \) is a Rademacher random vector with i.i.d. entries, \( \theta^\pm = \theta \pm c \delta \), \( c \) is a perturbation constant, and \( \hat{\ell}(\theta; \mathbf{x}_{obs}) = \log \hat{p}(\mathbf{x}_{obs} \mid \theta)  \) is the log-likelihood estimated from the KDE by simulating data samples around the target parameter \( \theta \) and evaluating at the observations \( \mathbf{x}_{obs} \). We provide further details about the implementation in Appendix~\ref{appendix:kde_sp}. 

\subsection{Numerical Studies}
To investigate the accuracy of gradient estimation and parameter estimation, we begin with a multivariate Gaussian model that features a fixed covariance. This model has a closed-form Fisher score, allowing us to directly compare the estimated gradients from FSM and KDE-SP against the ground truth. Further details and results of this experiment are presented in Appendix~\ref{appendix:exp_7.1}.

\begin{figure}
    \centering
    \includegraphics[width=\linewidth]{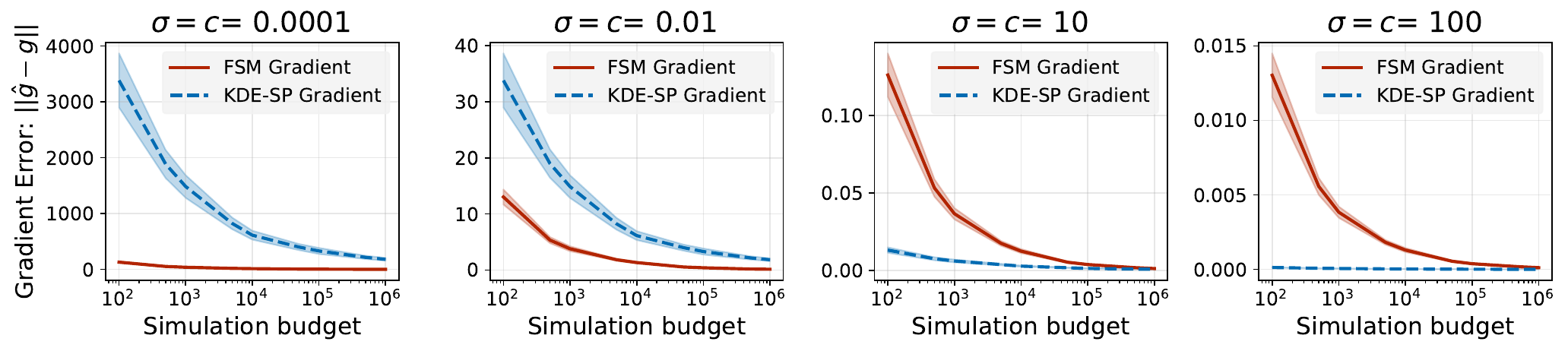}
    \caption{Comparison of FSM and KDE-SP in a 2D Gaussian model under varying hyperparameters (proposal variance or perturbation constants). Error bars show 95\% CIs over 100 repeated gradient approximations.}
    \label{fig:grad_comp_sim_mini}
\end{figure}

\begin{wrapfigure}[15]{r}{0.5\textwidth}  
    \captionsetup{skip=0.1pt}
    \centering
    \includegraphics[width=\linewidth]{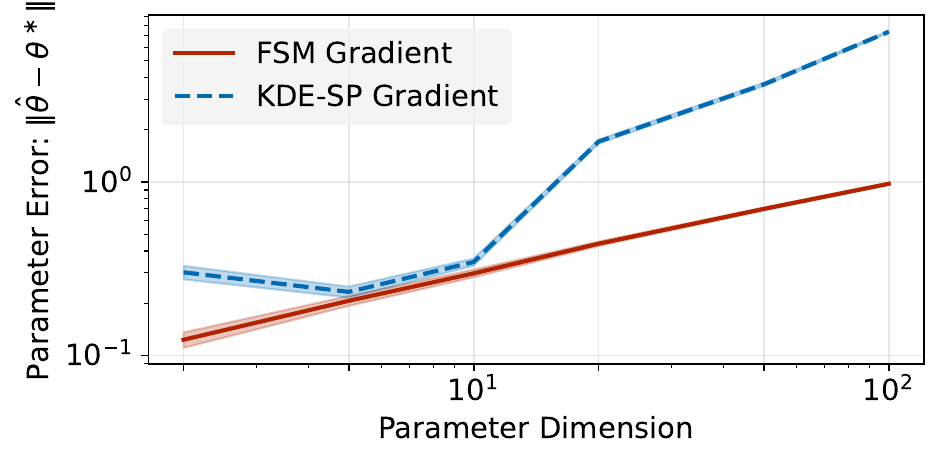}
    \caption{Parameter estimation accuracy of both the FSM and KDE-SP methods under increasing parameter dimensions, over 100 repeated optimization runs.} 
    \label{fig:mvt_gaussian_param_dim_scaling}
\end{wrapfigure}

One key aspect of both the FSM and KDE-SP approach is the choice of the hyperparameters, specifically the perturbation constant in KDE-SP and the proposal variance in FSM. In Figure~\ref{fig:grad_comp_sim_mini}, we show the sensitivity of the gradient approximation quality to different choices of this hyperparameter, as the simulation budget increases. Although the gradient approximation of both methods depends strongly on the choice of hyperparameters, we see that the FSM estimate is always able to match the accuracy of the KDE-SP estimate given sufficient simulation budget, even when the hyperparameters are not favorably tuned. We provide the same ablation study in higher-dimensional settings in Appendix~\ref{appendix:exp_7.1}.

\begin{wrapfigure}[10]{r}{0.5\textwidth}  
    \captionsetup{skip=0.1pt}
    \centering
    \includegraphics[width=\linewidth]{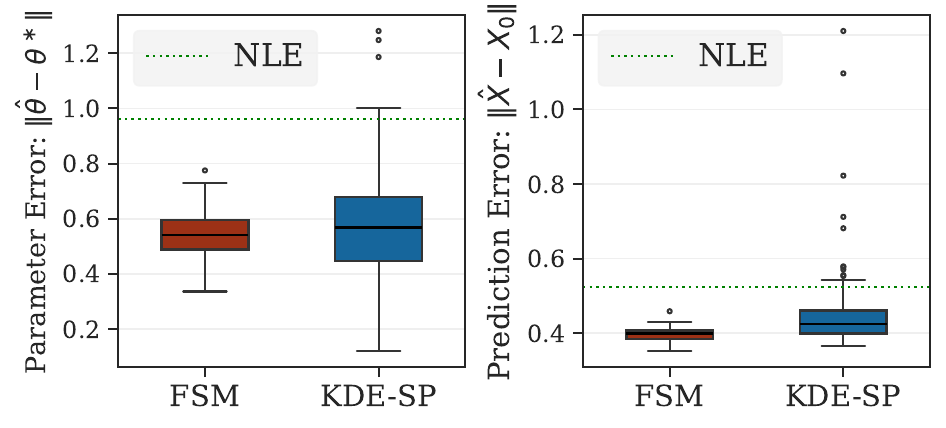}
    \caption{Parameter estimation and prediction accuracy of the NLE, FSM and KDE-SP methods.} 
    \label{fig:cosmo_boxplots_mini}
\end{wrapfigure}

In Figure~\ref{fig:mvt_gaussian_param_dim_scaling}, we show the quality of the resulting parameter estimate for the same multivariate Gaussian model with increasing parameter dimension while keeping the simulation budget fixed. While the FSM gradient is able to maintain the quality of the parameter estimate, the KDE-SP struggles in higher dimensional parameter spaces, likely due to the additional kernel density estimation required.

\subsection{LSST Weak Lensing Cosmology Model}

In this example, we use the log-normal forward model proposed by \citet{zeghal2024simulationbasedinferencebenchmarklsst} and \citet{lanzieri2025optimalneuralsummarisationfullfield}, which simulates the non-Gaussian structure in gravitational weak-lensing. Using the model in the full LSST-Y10 setting, this model is representative of real-world weak-lensing data. Since the generated data are high-dimensional tomographic convergence maps (\( 5 \times 256 \times 256 \)), we use a trained ResNet-18 compressor in \citet{alsing2018massive}, producing a 6-dimensional summary statistic. As an additional benchmark beyond the KDE-SP method, we further implement a standard neural likelihood estimator (NLE) using the \textsc{sbi} package \citep{BoeltsDeistler_sbi_2025}, trained with the same total simulation budget given to both the KDE-SP and FSM gradient-based optimization methods. Evaluated at the observations, NLE can be directly optimized to obtain an approximate maximum likelihood estimator. Further details and results of this experiment are presented in Appendix~\ref{appendix:exp_7.2}

In Figure~\ref{fig:cosmo_boxplots_mini}, we show both the parameter estimation and the accuracy of the prediction. Given the limited simulation budget available, we observe that sequential gradient-based optimization methods outperform the more simulation-intensive NLE approach and that the FSM approach is generally able to achieve better performance with a smaller variance.

\subsection{Generator Inversion Task}
\begin{wrapfigure}[16]{r}{0.5\textwidth}
    \captionsetup{skip=0.1pt}
    \centering
    \includegraphics[width=\linewidth]{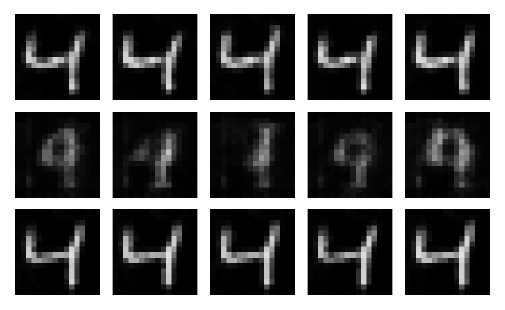}
    \caption{Images from different latent mean optimization procedure. Top row: FSM, Middle row: KDE-SP, Bottom row: Direct optimization} 
    \label{fig:mnist_half_image}
\end{wrapfigure}

In this section, we tackle the canonical problem of latent inversion of a generator network \citep{xia2022gan}. For a fixed generator \( G_w \) and a query image \( \mathbf{x}_0 \), the goal is to recover a latent vector \(\mathbf{z}\) such that \(G_w(\mathbf{z}) \approx \mathbf{x}_0 \). Although typically \( \mathbf{z} \) is treated as a point estimate, in this setting, we treat it as a latent variable, \(\mathbf{z} \sim \mathcal{N}\left(\theta, \sigma_{\mathbf{z}}^2 I\right) \) and focus on \( \theta \) as the parameter of interest. Note the marginal likelihood

\[
p_w(\mathbf{x} \mid \theta)=\int \delta\left(\mathbf{x}-G_w(\mathbf{z})\right) \mathcal{N}\left(\mathbf{z} \mid \theta, \sigma_{\mathbf{z}}^2 I\right) \mathrm{d} \mathbf{z}
\]

is intractable because the push-forward density under \(G_w\) has no closed form. However, our FSM approach allows us to estimate the Fisher score \(\nabla_\theta \log p_w(\mathbf{x} \mid \theta) \) at \(\mathbf{x}_0\), enabling us to maximize the likelihood \( \ell\left(\theta ; \mathbf{x}_0\right)=\log p_w\left(\mathbf{x}_0 \mid \theta\right) \) without directly estimating the likelihood. Conceptually, this turns the generator inversion problem into likelihood-based inference. Alternatively, given a differentiable generator, we can directly optimize the reconstruction loss to obtain an estimated latent mean (referred to as \emph{direct optimization}).

\begin{wrapfigure}[10]{r}{0.5\textwidth}  
    \captionsetup{skip=0.1pt}
    \centering
    \includegraphics[width=\linewidth]{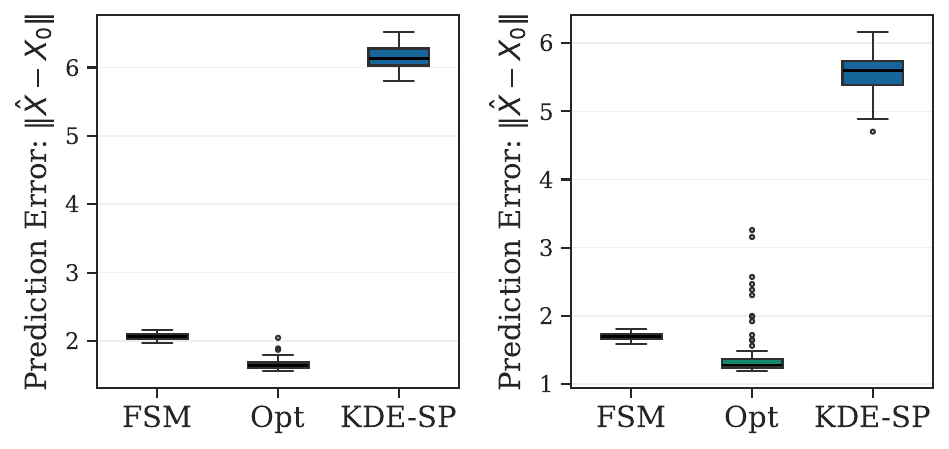} 
    \caption{Prediction error for the FSM, KDE-SP and direct optimization method.} 
    \label{fig:mnist_single}
\end{wrapfigure}

We train a GAN model on a \( 16 \times 16 \) MNIST dataset and apply the generator inversion task, comparing the direct optimization approach, FSM, and KDE-SP method. From Figure~\ref{fig:mnist_half_image} we can see that while the FSM and direct optimization is able to recover the target observation, the KDE-SP struggles to achieve the same pixel quality. This is also reflected in Figure~\ref{fig:mnist_single}, which shows the reconstruction loss for the different methods. More details and results for this experiment are provided in Appendix~\ref{appendix:exp_7.3}.

\section{Conclusion}
\label{sec:discussion}
We introduced FSM-MLE, a novel likelihood-free maximum likelihood estimation technique based on local Fisher score matching. By directly estimating the Fisher score in a simulation-based setting, our method circumvents the need to approximate the likelihood. This significantly reduces the complexity of existing approaches that either rely on kernel density estimation or train expensive neural density estimators. We further showed that under an isotropic Gaussian proposal, our local Fisher score matching estimator admits a natural Gaussian smoothing interpretation, thereby inheriting robustness properties from well-studied Gaussian smoothing techniques in black-box optimization. Empirical results on synthetic examples, a cosmological weak-lensing model, and a generator inversion task highlight the simulation efficiency and robustness of our approach. Further work includes development of a more principled selection of the proposal variance \( \sigma^2 \), a richer parameterization of the Fisher score model beyond the linear model, and further investigation into better leveraging the smoothing behavior to tackle challenging likelihood optimization in the SBI setting, as well as utilizing the approximate Fisher information matrix for uncertainty quantification. Since our method is inherently sequential, a promising extension is a "semi-amortized" variant which would leverage a pretrained neural network encoder with a training dataset, which would be coupled with our proposed linear FSM model during inference, thereby enabling a more expressive model while still preserving the benefits of fast, closed-form updates. 

\begin{ack}
        The authors thank the four anonymous reviewers for their constructive feedback. SK was supported by the EPSRC Center for Doctoral Training in Computational Statistics and Data Science, grant number EP/S023569/1.
    
\end{ack}
%\section*{References}
{ % Start a group to limit the scope of \small
\small % Switch font size for the references
\bibliographystyle{plainnat} % Or your chosen style
\bibliography{ref}    % Your .bib file

@article{alsing2018massive,
  title={Massive optimal data compression and density estimation for scalable, likelihood-free inference in cosmology},
  author={Alsing, Justin and Wandelt, Benjamin and Feeney, Stephen},
  journal={Monthly Notices of the Royal Astronomical Society},
  volume={477},
  number={3},
  pages={2874--2885},
  year={2018},
  publisher={Oxford University Press}
}

@article{bharti2021general,
  title={A general method for calibrating stochastic radio channel models with kernels},
  author={Bharti, Ayush and Briol, Fran{\c{c}}ois-Xavier and Pedersen, Troels},
  journal={Ieee transactions on antennas and propagation},
  volume={70},
  number={6},
  pages={3986--4001},
  year={2021},
  publisher={IEEE}
}

@article{hyvarinen2005estimation,
  title={Estimation of non-normalized statistical models by score matching.},
  author={Hyv{\"a}rinen, Aapo and Dayan, Peter},
  journal={Journal of Machine Learning Research},
  volume={6},
  number={4},
  year={2005}
}

@article{b416fef4a8b244fc872ac0ec03971993,
title = "Approximate Bayesian Computation (ABC) in practice",
abstract = "Understanding the forces that influence natural variation within and among populations has been a major objective of evolutionary biologists for decades. Motivated by the growth in computational power and data complexity, modern approaches to this question make intensive use of simulation methods. Approximate Bayesian Computation (ABC) is one of these methods. Here we review the foundations of ABC, its recent algorithmic developments, and its applications in evolutionary biology and ecology. We argue that the use of ABC should incorporate all aspects of Bayesian data analysis: formulation, fitting, and improvement of a model. ABC can be a powerful tool to make inferences with complex models if these principles are carefully applied.",
keywords = "DYNAMICAL-SYSTEMS, DEMOGRAPHIC HISTORY, MODEL SELECTION, DNA-SEQUENCE DATA, GENETIC DIVERSITY, COALESCENT SIMULATION, CHAIN MONTE-CARLO, POPULATION HISTORY, DROSOPHILA-MELANOGASTER, STATISTICAL EVALUATION",
author = "Katalin Csillery and Blum, {Michael G. B.} and Gaggiotti, {Oscar E.} and Olivier Francois",
year = "2010",
month = jul,
doi = "10.1016/j.tree.2010.04.001",
language = "English",
volume = "25",
pages = "410--418",
journal = "Trends in Ecology and Evolution",
issn = "0169-5347",
publisher = "ELSEVIER SCIENCE LONDON",
number = "7",

}

@article{article3,
author = {Schafer, Chad and Freeman, Peter},
year = {2012},
month = {01},
pages = {3-19},
title = {Likelihood-Free Inference in Cosmology: Potential for the Estimation of Luminosity Functions},
volume = {209},
isbn = {978-1-4614-3519-8},
journal = {Lecture Notes in Statistics},
doi = {10.1007/978-1-4614-3520-4-1}
}

@article{article,
author = {Toni, Tina and Welch, David and Strelkowa, Natalja and Ipsen, Andreas and Stumpf, Michael},
year = {2009},
month = {03},
pages = {187-202},
title = {Toni T, Welch D, Strelkowa N, Ipsen A, Stumpf MPApproximate Bayesian computation scheme for parameter inference and model selection in dynamical systems. J R Soc Interface 6: 187-202},
volume = {6},
journal = {Journal of the Royal Society, Interface / the Royal Society},
doi = {10.1098/rsif.2008.0172}
}

@book{Sterratt,
author = {Sterratt, David and Graham, Bruce and Gillies, Andrew and Willshaw, David},
year = {2011},
month = {07},
pages = {},
title = {Principles of Computational Modelling in Neuroscience},
isbn = {798-0-521-87795-4},
journal = {Principles of Computational Modelling in Neuroscience},
doi = {10.1017/CBO9780511975899}
}

@article{Cranmer_2020,
   title={The frontier of simulation-based inference},
   volume={117},
   ISSN={1091-6490},
   url={http://dx.doi.org/10.1073/pnas.1912789117},
   DOI={10.1073/pnas.1912789117},
   number={48},
   journal={Proceedings of the National Academy of Sciences},
   publisher={Proceedings of the National Academy of Sciences},
   author={Cranmer, Kyle and Brehmer, Johann and Louppe, Gilles},
   year={2020},
   month=may, pages={30055–30062} }

@article{beaumont2002approximate,
  title={Approximate Bayesian computation in population genetics},
  author={Beaumont, Mark A and Zhang, Wenyang and Balding, David J},
  journal={Genetics},
  volume={162},
  number={4},
  pages={2025--2035},
  year={2002},
  publisher={Oxford University Press}
}

@article{Mohamed_Lakshminarayanan_2017, title={Learning in Implicit Generative Models}, url={http://arxiv.org/abs/1610.03483}, DOI={10.48550/arXiv.1610.03483}, abstractNote={Generative adversarial networks (GANs) provide an algorithmic framework for constructing generative models with several appealing properties: they do not require a likelihood function to be specified, only a generating procedure; they provide samples that are sharp and compelling; and they allow us to harness our knowledge of building highly accurate neural network classifiers. Here, we develop our understanding of GANs with the aim of forming a rich view of this growing area of machine learning---to build connections to the diverse set of statistical thinking on this topic, of which much can be gained by a mutual exchange of ideas. We frame GANs within the wider landscape of algorithms for learning in implicit generative models--models that only specify a stochastic procedure with which to generate data--and relate these ideas to modelling problems in related fields, such as econometrics and approximate Bayesian computation. We develop likelihood-free inference methods and highlight hypothesis testing as a principle for learning in implicit generative models, using which we are able to derive the objective function used by GANs, and many other related objectives. The testing viewpoint directs our focus to the general problem of density ratio estimation. There are four approaches for density ratio estimation, one of which is a solution using classifiers to distinguish real from generated data. Other approaches such as divergence minimisation and moment matching have also been explored in the GAN literature, and we synthesise these views to form an understanding in terms of the relationships between them and the wider literature, highlighting avenues for future exploration and cross-pollination.}, note={arXiv:1610.03483 [cs, stat]}, number={arXiv:1610.03483}, publisher={arXiv}, author={Mohamed, Shakir and Lakshminarayanan, Balaji}, year={2017}, month=feb }

@article{Durkan_Papamakarios_Murray_2018, title={Sequential Neural Methods for Likelihood-free Inference}, url={http://arxiv.org/abs/1811.08723}, abstractNote={Likelihood-free inference refers to inference when a likelihood function cannot be explicitly evaluated, which is often the case for models based on simulators. While much of the literature is concerned with sample-based ‘Approximate Bayesian Computation’ methods, recent work suggests that approaches relying on deep neural conditional density estimators can obtain state-of-the-art results with fewer simulations. The neural approaches vary in how they choose which simulations to run and what they learn: an approximate posterior or a surrogate likelihood. This work provides some direct controlled comparisons between these choices.}, note={arXiv:1811.08723 [cs, stat]}, number={arXiv:1811.08723}, publisher={arXiv}, author={Durkan, Conor and Papamakarios, George and Murray, Iain}, year={2018}, month=nov, language={en} }

@article{Papamakarios_Pavlakou_Murray_2018,
  title={Masked autoregressive flow for density estimation},
  author={Papamakarios, George and Pavlakou, Theo and Murray, Iain},
  journal={Advances in neural information processing systems},
  volume={30},
  year={2017}
}

@article{Song_Ermon_2020,
  title={Generative modeling by estimating gradients of the data distribution},
  author={Song, Yang and Ermon, Stefano},
  journal={Advances in neural information processing systems},
  volume={32},
  year={2019}
}

@article{yang2024diffusion,
  title={Diffusion models: A comprehensive survey of methods and applications},
  author={Yang, Ling and Zhang, Zhilong and Song, Yang and Hong, Shenda and Xu, Runsheng and Zhao, Yue and Zhang, Wentao and Cui, Bin and Yang, Ming-Hsuan},
  journal={ACM computing surveys},
  volume={56},
  number={4},
  pages={1--39},
  year={2023},
  publisher={ACM New York, NY, USA}
}

@article{Vincent_2011, title={A Connection Between Score Matching and Denoising Autoencoders}, volume={23}, ISSN={0899-7667, 1530-888X}, DOI={10.1162/NECO_a_00142}, abstractNote={Denoising autoencoders have been previously shown to be competitive alternatives to restricted Boltzmann machines for unsupervised pretraining of each layer of a deep architecture. We show that a simple denoising autoencoder training criterion is equivalent to matching the score (with respect to the data) of a specific energy-based model to that of a nonparametric Parzen density estimator of the data. This yields several useful insights. It defines a proper probabilistic model for the denoising autoencoder technique, which makes it in principle possible to sample from them or rank examples by their energy. It suggests a different way to apply score matching that is related to learning to denoise and does not require computing second derivatives. It justifies the use of tied weights between the encoder and decoder and suggests ways to extend the success of denoising autoencoders to a larger family of energy-based models.}, number={7}, journal={Neural Computation}, author={Vincent, Pascal}, year={2011}, month=jul, pages={1661–1674}, language={en} }

@inproceedings{song2019sliced,
  title={Sliced score matching: A scalable approach to density and score estimation},
  author={Song, Yang and Garg, Sahaj and Shi, Jiaxin and Ermon, Stefano},
  booktitle={Uncertainty in artificial intelligence},
  pages={574--584},
  year={2020},
  organization={PMLR}
}

@article{robbins1951stochastic,
  title={A stochastic approximation method},
  author={Robbins, Herbert and Monro, Sutton},
  journal={The annals of mathematical statistics},
  pages={400--407},
  year={1951},
  publisher={JSTOR}
}

@article{bertl2017approximate,
  title={Approximate maximum likelihood estimation for population genetic inference},
  author={Bertl, Johanna and Ewing, Gregory and Kosiol, Carolin and Futschik, Andreas},
  journal={Statistical Applications in Genetics and Molecular Biology},
  volume={16},
  number={5-6},
  pages={291--312},
  year={2017},
  publisher={De Gruyter}
}

@inproceedings{papamakarios2019sequential,
  title={Sequential neural likelihood: Fast likelihood-free inference with autoregressive flows},
  author={Papamakarios, George and Sterratt, David and Murray, Iain},
  booktitle={The 22nd international conference on artificial intelligence and statistics},
  pages={837--848},
  year={2019},
  organization={PMLR}
}

@ARTICLE{119632,

  author={Spall, J.C.},

  journal={IEEE Transactions on Automatic Control}, 

  title={Multivariate stochastic approximation using a simultaneous perturbation gradient approximation}, 

  year={1992},

  volume={37},

  number={3},

  pages={332-341},

  keywords={Stochastic processes;Finite difference methods;Approximation algorithms;Q measurement;Convergence;Adaptive control;Design for experiments;Neural networks;Differential equations;Acceleration},

  doi={10.1109/9.119632}}

@inproceedings{kingma2017adam,
  author       = {Diederik P. Kingma and
                  Jimmy Ba},
  editor       = {Yoshua Bengio and
                  Yann LeCun},
  title        = {Adam: {A} Method for Stochastic Optimization},
  booktitle    = {3rd International Conference on Learning Representations, {ICLR} 2015,
                  San Diego, CA, USA, May 7-9, 2015, Conference Track Proceedings},
  year         = {2015},
  url          = {http://arxiv.org/abs/1412.6980},
  timestamp    = {Thu, 25 Jul 2019 14:25:37 +0200},
  biburl       = {https://dblp.org/rec/journals/corr/KingmaB14.bib},
  bibsource    = {dblp computer science bibliography, https://dblp.org}
}

@article{Diggle_Gratton_1984, title={Monte Carlo Methods of Inference for Implicit Statistical Models}, volume={46}, ISSN={0035-9246}, abstractNote={A prescribed statistical model is a parametric specification of the distribution of a random vector, whilst an implicit statistical model is one defined at a more fundamental level in terms of a generating stochastic mechanism. This paper develops methods of inference which can be used for implicit statistical models whose distribution theory is intractable. The kernel method of probability density estimation is advocated for estimating a log-likelihood from simulations of such a model. The development and testing of an algorithm for maximizing this estimated log-likelihood function is described. An illustrative example involving a stochastic model for quantal response assays is given. Possible applications of the maximization algorithm to ad hoc methods of parameter estimation are noted briefly, and illustrated by an example involving a model for the spatial pattern of displaced amacrine cells in the retina of a rabbit.}, number={2}, journal={Journal of the Royal Statistical Society. Series B (Methodological)}, publisher={[Royal Statistical Society, Wiley]}, author={Diggle, Peter J. and Gratton, Richard J.}, year={1984}, pages={193–227} }

@article{Rubio_Johansen_2013,
  title={A simple approach to maximum intractable likelihood estimation},
  author={Rubio, FJ and Johansen, Adam M},
  journal={Electronic Journal of Statistics},
  volume={7},
  pages={1632--1654},
  year={2013}
}

@article{Ionides_Breto_Park_Smith_King_2017, title={Monte Carlo profile confidence intervals for dynamic systems}, volume={14}, DOI={10.1098/rsif.2017.0126}, abstractNote={Monte Carlo methods to evaluate and maximize the likelihood function enable the construction of confidence intervals and hypothesis tests, facilitating scientific investigation using models for which the likelihood function is intractable. When Monte Carlo error can be made small, by sufficiently exhaustive computation, then the standard theory and practice of likelihood-based inference applies. As datasets become larger, and models more complex, situations arise where no reasonable amount of computation can render Monte Carlo error negligible. We develop profile likelihood methodology to provide frequentist inferences that take into account Monte Carlo uncertainty. We investigate the role of this methodology in facilitating inference for computationally challenging dynamic latent variable models. We present examples arising in the study of infectious disease transmission, demonstrating our methodology for inference on nonlinear dynamic models using genetic sequence data and panel time-series data. We also discuss applicability to nonlinear time-series and spatio-temporal data.}, number={132}, journal={Journal of The Royal Society Interface}, publisher={Royal Society}, author={Ionides, E. L. and Breto, C. and Park, J. and Smith, R. A. and King, A. A.}, year={2017}, month=jul, pages={20170126} }

@article{Park_2023, title={On simulation-based inference for implicitly defined models}, url={http://arxiv.org/abs/2311.09446}, DOI={10.48550/arXiv.2311.09446}, abstractNote={In many applications, a stochastic system is studied using a model implicitly defined via a simulator. We develop a simulation-based parameter inference method for implicitly defined models. Our method differs from traditional likelihood-based inference in that it uses a metamodel for the distribution of a log-likelihood estimator. The metamodel is built on a local asymptotic normality (LAN) property satisfied by the simulation-based log-likelihood estimator under certain conditions. A method for hypothesis test is developed under the metamodel. Our method can enable accurate parameter estimation and uncertainty quantification where other Monte Carlo methods for parameter inference become highly inefficient due to large Monte Carlo variance. We demonstrate our method using numerical examples including a mechanistic model for the population dynamics of infectious disease.}, note={arXiv:2311.09446 [math, stat]}, number={arXiv:2311.09446}, publisher={arXiv}, author={Park, Joonha}, year={2023}, month=nov }

@misc{jin2021statisticalinferencepolyakruppertaveraged,
      title={Statistical Inference for Polyak-Ruppert Averaged Zeroth-order Stochastic Gradient Algorithm}, 
      author={Yanhao Jin and Tesi Xiao and Krishnakumar Balasubramanian},
      year={2021},
      eprint={2102.05198},
      archivePrefix={arXiv},
      primaryClass={stat.ML},
      url={https://arxiv.org/abs/2102.05198}, 
}

@article{Polyak_Juditsky_1992, title={Acceleration of Stochastic Approximation by Averaging}, volume={30}, ISSN={0363-0129, 1095-7138}, DOI={10.1137/0330046}, abstractNote={A new recursive algorithm of stochastic approximation type with the averaging of trajectories is investigated. Convergence with probability one is proved for a variety of classical optimization and identification problems. It is also demonstrated for these problems that the proposed algorithm achieves the highest possible rate of convergence.}, number={4}, journal={SIAM Journal on Control and Optimization}, author={Polyak, B. T. and Juditsky, A. B.}, year={1992}, month=jul, pages={838–855}, language={en} }

@article{duchi2012randomized,
  title={Randomized smoothing for stochastic optimization},
  author={Duchi, John C and Bartlett, Peter L and Wainwright, Martin J},
  journal={SIAM Journal on Optimization},
  volume={22},
  number={2},
  pages={674--701},
  year={2012},
  publisher={SIAM}
}

@article{nesterov2017random,
  title={Random gradient-free minimization of convex functions},
  author={Nesterov, Yurii and Spokoiny, Vladimir},
  journal={Foundations of Computational Mathematics},
  volume={17},
  number={2},
  pages={527--566},
  year={2017},
  publisher={Springer}
}

@article{salimans2017evolution,
  title={Evolution strategies as a scalable alternative to reinforcement learning},
  author={Salimans, Tim and Ho, Jonathan and Chen, Xi and Sidor, Szymon and Sutskever, Ilya},
  journal={arXiv preprint arXiv:1703.03864},
  year={2017}
}

@article{starnes2023gaussian,
  title={Gaussian smoothing gradient descent for minimizing functions (GSmoothGD)},
  author={Starnes, Andrew and Dereventsov, Anton and Webster, Clayton},
  journal={arXiv preprint arXiv:2311.00521},
  year={2023}
}

@misc{zeghal2024simulationbasedinferencebenchmarklsst,
      title={Simulation-Based Inference Benchmark for LSST Weak Lensing Cosmology}, 
      author={Justine Zeghal and Denise Lanzieri and François Lanusse and Alexandre Boucaud and Gilles Louppe and Eric Aubourg and Adrian E. Bayer and The LSST Dark Energy Science Collaboration},
      year={2024},
      eprint={2409.17975},
      archivePrefix={arXiv},
      primaryClass={astro-ph.CO},
      url={https://arxiv.org/abs/2409.17975}, 
}

@misc{lanzieri2025optimalneuralsummarisationfullfield,
      title={Optimal Neural Summarisation for Full-Field Weak Lensing Cosmological Implicit Inference}, 
      author={Denise Lanzieri and Justine Zeghal and T. Lucas Makinen and Alexandre Boucaud and Jean-Luc Starck and François Lanusse},
      year={2025},
      eprint={2407.10877},
      archivePrefix={arXiv},
      primaryClass={astro-ph.CO},
      url={https://arxiv.org/abs/2407.10877}, 
}

@book{casella2024statistical,
  title={Statistical inference},
  author={Casella, George and Berger, Roger},
  year={2024},
  publisher={CRC press}
}

@book{van2000asymptotic,
  title={Asymptotic statistics},
  author={Van der Vaart, Aad W},
  volume={3},
  year={2000},
  publisher={Cambridge university press}
}

@incollection{rao1992information,
  title={Information and the accuracy attainable in the estimation of statistical parameters},
  author={Rao, C Radhakrishna},
  booktitle={Breakthroughs in Statistics: Foundations and basic theory},
  pages={235--247},
  year={1992},
  publisher={Springer}
}

@article{tieleman2012lecture,
  title={Lecture 6.5-rmsprop: Divide the gradient by a running average of its recent magnitude},
  author={Tieleman, Tijmen},
  journal={COURSERA: Neural networks for machine learning},
  volume={4},
  number={2},
  pages={26},
  year={2012}
}

@techreport{ruppert1988efficient,
  title={Efficient estimations from a slowly convergent Robbins-Monro process},
  author={Ruppert, David},
  year={1988},
  institution={Cornell University Operations Research and Industrial Engineering}
}

@article{BoeltsDeistler_sbi_2025,
  doi = {10.21105/joss.07754},
  url = {https://doi.org/10.21105/joss.07754},
  year = {2025},
  publisher = {The Open Journal},
  volume = {10},
  number = {108},
  pages = {7754},
  author = {Jan Boelts and Michael Deistler and Manuel Gloeckler and Álvaro Tejero-Cantero and Jan-Matthis Lueckmann and Guy Moss and Peter Steinbach and Thomas Moreau and Fabio Muratore and Julia Linhart and Conor Durkan and Julius Vetter and Benjamin Kurt Miller and Maternus Herold and Abolfazl Ziaeemehr and Matthijs Pals and Theo Gruner and Sebastian Bischoff and Nastya Krouglova and Richard Gao and Janne K. Lappalainen and Bálint Mucsányi and Felix Pei and Auguste Schulz and Zinovia Stefanidi and Pedro Rodrigues and Cornelius Schröder and Faried Abu Zaid and Jonas Beck and Jaivardhan Kapoor and David S. Greenberg and Pedro J. Gonçalves and Jakob H. Macke},
  title = {sbi reloaded: a toolkit for simulation-based inference workflows},
  journal = {Journal of Open Source Software}
}

@article{xia2022gan,
  title={Gan inversion: A survey},
  author={Xia, Weihao and Zhang, Yulun and Yang, Yujiu and Xue, Jing-Hao and Zhou, Bolei and Yang, Ming-Hsuan},
  journal={IEEE transactions on pattern analysis and machine intelligence},
  volume={45},
  number={3},
  pages={3121--3138},
  year={2022},
  publisher={IEEE}
}

@inproceedings{geffner2023compositionalscoremodelingsimulationbased,
  title={Compositional score modeling for simulation-based inference},
  author={Geffner, Tomas and Papamakarios, George and Mnih, Andriy},
  booktitle={International Conference on Machine Learning},
  pages={11098--11116},
  year={2023},
  organization={PMLR}
}

@inproceedings{sharrock2024sequentialneuralscoreestimation,
  title={Sequential neural score estimation: likelihood-free inference with conditional score based diffusion models},
  author={Sharrock, Louis and Simons, Jack and Liu, Song and Beaumont, Mark},
  booktitle={Proceedings of the 41st International Conference on Machine Learning},
  pages={44565--44602},
  year={2024}
}

@misc{jiang2025simulationbasedinferencelangevindynamics,
      title={Simulation-based Inference via Langevin Dynamics with Score Matching}, 
      author={Haoyu Jiang and Yuexi Wang and Yun Yang},
      year={2025},
      eprint={2509.03853},
      archivePrefix={arXiv},
      primaryClass={stat.ME},
      url={https://arxiv.org/abs/2509.03853}, 
}

@article{morris1983parametric,
  title={Parametric empirical Bayes inference: theory and applications},
  author={Morris, Carl N},
  journal={Journal of the American statistical Association},
  volume={78},
  number={381},
  pages={47--55},
  year={1983},
  publisher={Taylor \& Francis}
}

@misc{sui2025fisherscorematchingsimulationbased,
      title={Fisher Score Matching for Simulation-Based Forecasting and Inference}, 
      author={Ce Sui and Shivam Pandey and Benjamin D. Wandelt},
      year={2025},
      eprint={2507.07833},
      archivePrefix={arXiv},
      primaryClass={astro-ph.CO},
      url={https://arxiv.org/abs/2507.07833}, 
}
} % End the group, font size reverts after this

\newpage

\section*{NeurIPS Paper Checklist}

\begin{enumerate}

\item {\bf Claims}
    \item[] Question: Do the main claims made in the abstract and introduction accurately reflect the paper's contributions and scope?
    \item[] Answer: \answerYes{} % Replace by \answerYes{}, \answerNo{}, or \answerNA{}.
    \item[] Justification: See Section \ref{sec:fisher_score_matching}
    \item[] Guidelines:
    \begin{itemize}
        \item The answer NA means that the abstract and introduction do not include the claims made in the paper.
        \item The abstract and/or introduction should clearly state the claims made, including the contributions made in the paper and important assumptions and limitations. A No or NA answer to this question will not be perceived well by the reviewers. 
        \item The claims made should match theoretical and experimental results, and reflect how much the results can be expected to generalize to other settings. 
        \item It is fine to include aspirational goals as motivation as long as it is clear that these goals are not attained by the paper. 
    \end{itemize}

\item {\bf Limitations}
    \item[] Question: Does the paper discuss the limitations of the work performed by the authors?
    \item[] Answer: \answerYes{}% Replace by \answerYes{}, \answerNo{}, or \answerNA{}.
    \item[] Justification: The limitations are briefly discussed in Section \ref{sec:discussion} and left for future work.
    \item[] Guidelines:
    \begin{itemize}
        \item The answer NA means that the paper has no limitation while the answer No means that the paper has limitations, but those are not discussed in the paper. 
        \item The authors are encouraged to create a separate "Limitations" section in their paper.
        \item The paper should point out any strong assumptions and how robust the results are to violations of these assumptions (e.g., independence assumptions, noiseless settings, model well-specification, asymptotic approximations only holding locally). The authors should reflect on how these assumptions might be violated in practice and what the implications would be.
        \item The authors should reflect on the scope of the claims made, e.g., if the approach was only tested on a few datasets or with a few runs. In general, empirical results often depend on implicit assumptions, which should be articulated.
        \item The authors should reflect on the factors that influence the performance of the approach. For example, a facial recognition algorithm may perform poorly when image resolution is low or images are taken in low lighting. Or a speech-to-text system might not be used reliably to provide closed captions for online lectures because it fails to handle technical jargon.
        \item The authors should discuss the computational efficiency of the proposed algorithms and how they scale with dataset size.
        \item If applicable, the authors should discuss possible limitations of their approach to address problems of privacy and fairness.
        \item While the authors might fear that complete honesty about limitations might be used by reviewers as grounds for rejection, a worse outcome might be that reviewers discover limitations that aren't acknowledged in the paper. The authors should use their best judgment and recognize that individual actions in favor of transparency play an important role in developing norms that preserve the integrity of the community. Reviewers will be specifically instructed to not penalize honesty concerning limitations.
    \end{itemize}

\item {\bf Theory Assumptions and Proofs}
    \item[] Question: For each theoretical result, does the paper provide the full set of assumptions and a complete (and correct) proof?
    \item[] Answer: \answerYes{} % Replace by \answerYes{}, \answerNo{}, or \answerNA{}.
    \item[] Justification: See Appendix \ref{appendix:conv_guard}
    \item[] Guidelines:
    \begin{itemize}
        \item The answer NA means that the paper does not include theoretical results. 
        \item All the theorems, formulas, and proofs in the paper should be numbered and cross-referenced.
        \item All assumptions should be clearly stated or referenced in the statement of any theorems.
        \item The proofs can either appear in the main paper or the supplemental material, but if they appear in the supplemental material, the authors are encouraged to provide a short proof sketch to provide intuition. 
        \item Inversely, any informal proof provided in the core of the paper should be complemented by formal proofs provided in appendix or supplemental material.
        \item Theorems and Lemmas that the proof relies upon should be properly referenced. 
    \end{itemize}

    \item {\bf Experimental Result Reproducibility}
    \item[] Question: Does the paper fully disclose all the information needed to reproduce the main experimental results of the paper to the extent that it affects the main claims and/or conclusions of the paper (regardless of whether the code and data are provided or not)?
    \item[] Answer: \answerYes{} % Replace by \answerYes{}, \answerNo{}, or \answerNA{}.
    \item[] Justification: See Section \ref{sec:exp_result} and Algorithm \ref{alg:cap}
    \item[] Guidelines:
    \begin{itemize}
        \item The answer NA means that the paper does not include experiments.
        \item If the paper includes experiments, a No answer to this question will not be perceived well by the reviewers: Making the paper reproducible is important, regardless of whether the code and data are provided or not.
        \item If the contribution is a dataset and/or model, the authors should describe the steps taken to make their results reproducible or verifiable. 
        \item Depending on the contribution, reproducibility can be accomplished in various ways. For example, if the contribution is a novel architecture, describing the architecture fully might suffice, or if the contribution is a specific model and empirical evaluation, it may be necessary to either make it possible for others to replicate the model with the same dataset, or provide access to the model. In general. releasing code and data is often one good way to accomplish this, but reproducibility can also be provided via detailed instructions for how to replicate the results, access to a hosted model (e.g., in the case of a large language model), releasing of a model checkpoint, or other means that are appropriate to the research performed.
        \item While NeurIPS does not require releasing code, the conference does require all submissions to provide some reasonable avenue for reproducibility, which may depend on the nature of the contribution. For example
        \begin{enumerate}
            \item If the contribution is primarily a new algorithm, the paper should make it clear how to reproduce that algorithm.
            \item If the contribution is primarily a new model architecture, the paper should describe the architecture clearly and fully.
            \item If the contribution is a new model (e.g., a large language model), then there should either be a way to access this model for reproducing the results or a way to reproduce the model (e.g., with an open-source dataset or instructions for how to construct the dataset).
            \item We recognize that reproducibility may be tricky in some cases, in which case authors are welcome to describe the particular way they provide for reproducibility. In the case of closed-source models, it may be that access to the model is limited in some way (e.g., to registered users), but it should be possible for other researchers to have some path to reproducing or verifying the results.
        \end{enumerate}
    \end{itemize}

\item {\bf Open access to data and code}
    \item[] Question: Does the paper provide open access to the data and code, with sufficient instructions to faithfully reproduce the main experimental results, as described in supplemental material?
    \item[] Answer: \answerYes{} % Replace by \answerYes{}, \answerNo{}, or \answerNA{}.
    \item[] Justification: Only publicly accessible datasets are used and code is provided. 
    \item[] Guidelines:
    \begin{itemize}
        \item The answer NA means that paper does not include experiments requiring code.
        \item Please see the NeurIPS code and data submission guidelines (\url{https://nips.cc/public/guides/CodeSubmissionPolicy}) for more details.
        \item While we encourage the release of code and data, we understand that this might not be possible, so “No” is an acceptable answer. Papers cannot be rejected simply for not including code, unless this is central to the contribution (e.g., for a new open-source benchmark).
        \item The instructions should contain the exact command and environment needed to run to reproduce the results. See the NeurIPS code and data submission guidelines (\url{https://nips.cc/public/guides/CodeSubmissionPolicy}) for more details.
        \item The authors should provide instructions on data access and preparation, including how to access the raw data, preprocessed data, intermediate data, and generated data, etc.
        \item The authors should provide scripts to reproduce all experimental results for the new proposed method and baselines. If only a subset of experiments are reproducible, they should state which ones are omitted from the script and why.
        \item At submission time, to preserve anonymity, the authors should release anonymized versions (if applicable).
        \item Providing as much information as possible in supplemental material (appended to the paper) is recommended, but including URLs to data and code is permitted.
    \end{itemize}

\item {\bf Experimental Setting/Details}
    \item[] Question: Does the paper specify all the training and test details (e.g., data splits, hyperparameters, how they were chosen, type of optimizer, etc.) necessary to understand the results?
    \item[] Answer: \answerYes{} % Replace by \answerYes{}, \answerNo{}, or \answerNA{}.
    \item[] Justification: See Appendix \ref{appendix:exp_7.1}, \ref{appendix:exp_7.2},  \ref{appendix:exp_7.3}
    \item[] Guidelines:
    \begin{itemize}
        \item The answer NA means that the paper does not include experiments.
        \item The experimental setting should be presented in the core of the paper to a level of detail that is necessary to appreciate the results and make sense of them.
        \item The full details can be provided either with the code, in appendix, or as supplemental material.
    \end{itemize}

\item {\bf Experiment Statistical Significance}
    \item[] Question: Does the paper report error bars suitably and correctly defined or other appropriate information about the statistical significance of the experiments?
    \item[] Answer: \answerYes{} % Replace by \answerYes{}, \answerNo{}, or \answerNA{}.
    \item[] Justification: All figures in Section \ref{sec:exp_result} and Appendix \ref{appendix:exp_7.1}, \ref{appendix:exp_7.2}, and \ref{appendix:exp_7.3} include either error bars representing the 95\% confidence intervals over 100 repeated runs, or boxplots that visualize the distribution of the results.
    \item[] Guidelines:
    \begin{itemize}
        \item The answer NA means that the paper does not include experiments.
        \item The authors should answer "Yes" if the results are accompanied by error bars, confidence intervals, or statistical significance tests, at least for the experiments that support the main claims of the paper.
        \item The factors of variability that the error bars are capturing should be clearly stated (for example, train/test split, initialization, random drawing of some parameter, or overall run with given experimental conditions).
        \item The method for calculating the error bars should be explained (closed form formula, call to a library function, bootstrap, etc.)
        \item The assumptions made should be given (e.g., Normally distributed errors).
        \item It should be clear whether the error bar is the standard deviation or the standard error of the mean.
        \item It is OK to report 1-sigma error bars, but one should state it. The authors should preferably report a 2-sigma error bar than state that they have a 96\% CI, if the hypothesis of Normality of errors is not verified.
        \item For asymmetric distributions, the authors should be careful not to show in tables or figures symmetric error bars that would yield results that are out of range (e.g. negative error rates).
        \item If error bars are reported in tables or plots, The authors should explain in the text how they were calculated and reference the corresponding figures or tables in the text.
    \end{itemize}

\item {\bf Experiments Compute Resources}
    \item[] Question: For each experiment, does the paper provide sufficient information on the computer resources (type of compute workers, memory, time of execution) needed to reproduce the experiments?
    \item[] Answer: \answerYes{} % Replace by \answerYes{}, \answerNo{}, or \answerNA{}.
    \item[] Justification: See Appendix \ref{appendix:exp_7.1}, \ref{appendix:exp_7.2}, and \ref{appendix:exp_7.3}
    \item[] Guidelines:
    \begin{itemize}
        \item The answer NA means that the paper does not include experiments.
        \item The paper should indicate the type of compute workers CPU or GPU, internal cluster, or cloud provider, including relevant memory and storage.
        \item The paper should provide the amount of compute required for each of the individual experimental runs as well as estimate the total compute. 
        \item The paper should disclose whether the full research project required more compute than the experiments reported in the paper (e.g., preliminary or failed experiments that didn't make it into the paper). 
    \end{itemize}
    
\item {\bf Code Of Ethics}
    \item[] Question: Does the research conducted in the paper conform, in every respect, with the NeurIPS Code of Ethics \url{https://neurips.cc/public/EthicsGuidelines}?
    \item[] Answer: \answerYes{} % Replace by \answerYes{}, \answerNo{}, or \answerNA{}.
    \item[] Justification: The research presented in this paper fully complies with the NeurIPS Code of Ethics
    \item[] Guidelines:
    \begin{itemize}
        \item The answer NA means that the authors have not reviewed the NeurIPS Code of Ethics.
        \item If the authors answer No, they should explain the special circumstances that require a deviation from the Code of Ethics.
        \item The authors should make sure to preserve anonymity (e.g., if there is a special consideration due to laws or regulations in their jurisdiction).
    \end{itemize}

\item {\bf Broader Impacts}
    \item[] Question: Does the paper discuss both potential positive societal impacts and negative societal impacts of the work performed?
    \item[] Answer: \answerNA{} % Replace by \answerYes{}, \answerNo{}, or \answerNA{}.
    \item[] Justification: This paper is purely a mathematical work and does not involve direct societal applications.
    \item[] Guidelines:
    \begin{itemize}
        \item The answer NA means that there is no societal impact of the work performed.
        \item If the authors answer NA or No, they should explain why their work has no societal impact or why the paper does not address societal impact.
        \item Examples of negative societal impacts include potential malicious or unintended uses (e.g., disinformation, generating fake profiles, surveillance), fairness considerations (e.g., deployment of technologies that could make decisions that unfairly impact specific groups), privacy considerations, and security considerations.
        \item The conference expects that many papers will be foundational research and not tied to particular applications, let alone deployments. However, if there is a direct path to any negative applications, the authors should point it out. For example, it is legitimate to point out that an improvement in the quality of generative models could be used to generate deepfakes for disinformation. On the other hand, it is not needed to point out that a generic algorithm for optimizing neural networks could enable people to train models that generate Deepfakes faster.
        \item The authors should consider possible harms that could arise when the technology is being used as intended and functioning correctly, harms that could arise when the technology is being used as intended but gives incorrect results, and harms following from (intentional or unintentional) misuse of the technology.
        \item If there are negative societal impacts, the authors could also discuss possible mitigation strategies (e.g., gated release of models, providing defenses in addition to attacks, mechanisms for monitoring misuse, mechanisms to monitor how a system learns from feedback over time, improving the efficiency and accessibility of ML).
    \end{itemize}
    
\item {\bf Safeguards}
    \item[] Question: Does the paper describe safeguards that have been put in place for responsible release of data or models that have a high risk for misuse (e.g., pretrained language models, image generators, or scraped datasets)?
    \item[] Answer: \answerNA{} % Replace by \answerYes{}, \answerNo{}, or \answerNA{}.
    \item[] Justification: This paper does not work on language models.
    \item[] Guidelines:
    \begin{itemize}
        \item The answer NA means that the paper poses no such risks.
        \item Released models that have a high risk for misuse or dual-use should be released with necessary safeguards to allow for controlled use of the model, for example by requiring that users adhere to usage guidelines or restrictions to access the model or implementing safety filters. 
        \item Datasets that have been scraped from the Internet could pose safety risks. The authors should describe how they avoided releasing unsafe images.
        \item We recognize that providing effective safeguards is challenging, and many papers do not require this, but we encourage authors to take this into account and make a best faith effort.
    \end{itemize}

\item {\bf Licenses for existing assets}
    \item[] Question: Are the creators or original owners of assets (e.g., code, data, models), used in the paper, properly credited and are the license and terms of use explicitly mentioned and properly respected?
    \item[] Answer: \answerYes{} % Replace by \answerYes{}, \answerNo{}, or \answerNA{}.
    \item[] Justification: See Section \ref{sec:exp_result}
    \item[] Guidelines:
    \begin{itemize}
        \item The answer NA means that the paper does not use existing assets.
        \item The authors should cite the original paper that produced the code package or dataset.
        \item The authors should state which version of the asset is used and, if possible, include a URL.
        \item The name of the license (e.g., CC-BY 4.0) should be included for each asset.
        \item For scraped data from a particular source (e.g., website), the copyright and terms of service of that source should be provided.
        \item If assets are released, the license, copyright information, and terms of use in the package should be provided. For popular datasets, \url{paperswithcode.com/datasets} has curated licenses for some datasets. Their licensing guide can help determine the license of a dataset.
        \item For existing datasets that are re-packaged, both the original license and the license of the derived asset (if it has changed) should be provided.
        \item If this information is not available online, the authors are encouraged to reach out to the asset's creators.
    \end{itemize}

\item {\bf New Assets}
    \item[] Question: Are new assets introduced in the paper well documented and is the documentation provided alongside the assets?
    \item[] Answer: \answerNA{} % Replace by \answerYes{}, \answerNo{}, or \answerNA{}.
    \item[] Justification: This paper does not release new assets.
    \item[] Guidelines:
    \begin{itemize}
        \item The answer NA means that the paper does not release new assets.
        \item Researchers should communicate the details of the dataset/code/model as part of their submissions via structured templates. This includes details about training, license, limitations, etc. 
        \item The paper should discuss whether and how consent was obtained from people whose asset is used.
        \item At submission time, remember to anonymize your assets (if applicable). You can either create an anonymized URL or include an anonymized zip file.
    \end{itemize}

\item {\bf Crowdsourcing and Research with Human Subjects}
    \item[] Question: For crowdsourcing experiments and research with human subjects, does the paper include the full text of instructions given to participants and screenshots, if applicable, as well as details about compensation (if any)? 
    \item[] Answer: \answerNA{} % Replace by \answerYes{}, \answerNo{}, or \answerNA{}.
    \item[] Justification: This paper does not involve crowdsourcing and human subjects.
    \item[] Guidelines:
    \begin{itemize}
        \item The answer NA means that the paper does not involve crowdsourcing nor research with human subjects.
        \item Including this information in the supplemental material is fine, but if the main contribution of the paper involves human subjects, then as much detail as possible should be included in the main paper. 
        \item According to the NeurIPS Code of Ethics, workers involved in data collection, curation, or other labor should be paid at least the minimum wage in the country of the data collector. 
    \end{itemize}

\item {\bf Institutional Review Board (IRB) Approvals or Equivalent for Research with Human Subjects}
    \item[] Question: Does the paper describe potential risks incurred by study participants, whether such risks were disclosed to the subjects, and whether Institutional Review Board (IRB) approvals (or an equivalent approval/review based on the requirements of your country or institution) were obtained?
    \item[] Answer: \answerNA{} % Replace by \answerYes{}, \answerNo{}, or \answerNA{}.
    \item[] Justification: This paper does not involve crowdsourcing and human subjects.
    \item[] Guidelines:
    \begin{itemize}
        \item The answer NA means that the paper does not involve crowdsourcing nor research with human subjects.
        \item Depending on the country in which research is conducted, IRB approval (or equivalent) may be required for any human subjects research. If you obtained IRB approval, you should clearly state this in the paper. 
        \item We recognize that the procedures for this may vary significantly between institutions and locations, and we expect authors to adhere to the NeurIPS Code of Ethics and the guidelines for their institution. 
        \item For initial submissions, do not include any information that would break anonymity (if applicable), such as the institution conducting the review.
    \end{itemize}

\end{enumerate}

\newpage

\appendix

\section{Appendix / Supplemental Material}
\subsection{Fisher score matching objective}
\label{appendix:lsm_proof}

Here, we provide a complete theorem and a proof for Theorem~\ref{thm:lsm}.

\begin{theorem}[Local Fisher Score Matching]
\label{thm:A1}
Let \(\mathcal{J}(W\)) be defined as in Equation~\eqref{eq:lsm_def}. Given the following assumptions:
\begin{itemize}
    \item \( p(\mathbf{x} \mid \theta)\), \( q(\theta \mid \theta_t) \) are differentiable with respect to \( \theta \),  \(S_W(\mathbf{x}) \) is differentiable with respect to \( \mathbf{x} \) 
    
    \item \(\forall \mathbf{x} \in \mathbb{R}^{d_\mathbf{x}}, \underset{\|\theta \| \to \infty}{\lim} p(\mathbf{x} \mid \theta) q(\theta \mid \theta_t) = 0\)
\end{itemize}

\( \mathcal{J}(W) \) can be rewritten (up to an additive constant w.r.t.\ \( W \)) as
\begin{align}
  \mathcal{J}(W; \theta_t) 
  \;=\;
  \mathbb{E}_{\mathbf{x}\sim p(\mathbf{x} \mid \theta), \theta \sim q(\theta \mid \theta_t)}
  [
      \|S_{W}(\mathbf{x}) \|^{2}
      \;+\;
      2\,S_{W}(\mathbf{x})^{\top}
      \,\nabla_{\theta}\log q(\theta \mid \theta_t)
    ]
\end{align}
\end{theorem}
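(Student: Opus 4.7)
The plan is to expand the squared norm inside the expectation, identify the single intractable cross term, and use integration by parts (in $\theta$) together with the log-derivative identity to convert $\nabla_\theta \log p(\mathbf{x}\mid\theta)$ into $\nabla_\theta \log q(\theta\mid\theta_t)$, which is a quantity we can evaluate. This is the standard Hyvärinen-style trick, but carried out in the parameter variable $\theta$ rather than in the data variable $\mathbf{x}$.

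Concretely, I would first expand $\|\nabla_\theta \log p(\mathbf{x}\mid\theta) - S_W(\mathbf{x})\|^2$ into three pieces: $\|\nabla_\theta \log p(\mathbf{x}\mid\theta)\|^2$, which does not depend on $W$ and may be absorbed into the additive constant; the tractable term $\|S_W(\mathbf{x})\|^2$; and the cross term $-2\, S_W(\mathbf{x})^{\top} \nabla_\theta \log p(\mathbf{x}\mid\theta)$. The entire content of the theorem reduces to showing
\[
\mathbb{E}_{p(\mathbf{x}\mid\theta)\,q(\theta\mid\theta_t)}\!\bigl[S_W(\mathbf{x})^{\top}\nabla_\theta \log p(\mathbf{x}\mid\theta)\bigr]
\;=\;
-\,\mathbb{E}_{p(\mathbf{x}\mid\theta)\,q(\theta\mid\theta_t)}\!\bigl[S_W(\mathbf{x})^{\top}\nabla_\theta \log q(\theta\mid\theta_t)\bigr].
\]

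To establish this identity, I would write the left-hand side as a double integral over $\mathbf{x}$ and $\theta$, apply the log-derivative identity $p(\mathbf{x}\mid\theta)\,\nabla_\theta \log p(\mathbf{x}\mid\theta) = \nabla_\theta p(\mathbf{x}\mid\theta)$, and then swap the order of integration (justifiable under mild integrability) so that the inner integral is in $\theta$. Applying integration by parts componentwise in $\theta$ moves the gradient off $p$ and onto $q$, producing a boundary term plus $-\int S_W(\mathbf{x})^{\top} p(\mathbf{x}\mid\theta)\,\nabla_\theta q(\theta\mid\theta_t)\,d\theta$. The boundary term is precisely $[S_W(\mathbf{x})^{\top}\, p(\mathbf{x}\mid\theta)\,q(\theta\mid\theta_t)]$ evaluated at $\|\theta\|\to\infty$, and vanishes by the decay hypothesis $\lim_{\|\theta\|\to\infty} p(\mathbf{x}\mid\theta)\,q(\theta\mid\theta_t) = 0$. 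A second application of the log-derivative identity, this time to $q$, converts $\nabla_\theta q(\theta\mid\theta_t)$ into $q(\theta\mid\theta_t)\,\nabla_\theta \log q(\theta\mid\theta_t)$, yielding the right-hand side above. Substituting back flips the sign of the cross term from $-2\,\mathbb{E}[S_W^{\top}\nabla_\theta\log p]$ to $+2\,\mathbb{E}[S_W^{\top}\nabla_\theta\log q]$, which is the stated form.

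The only genuine obstacle is the rigor of the integration-by-parts step: one needs (i) Fubini to swap the $\mathbf{x}$ and $\theta$ integrals, and (ii) sufficient decay of $p(\mathbf{x}\mid\theta)\,q(\theta\mid\theta_t)$ at infinity to kill the surface term. The stated differentiability and decay assumptions handle (ii) directly; (i) is a standard regularity condition that would be absorbed into the phrase ``suitable boundary conditions'' in the theorem statement. Once these are granted, the remaining manipulations are purely algebraic and require no further work.
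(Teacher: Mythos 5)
Your proposal is correct and follows essentially the same route as the paper's proof: expand the square, absorb the $W$-independent term into the constant, then handle the cross term by the log-derivative identity and componentwise integration by parts in $\theta$ (after swapping the order of integration), with the surface term killed by the stated decay of $p(\mathbf{x}\mid\theta)\,q(\theta\mid\theta_t)$ as $\|\theta\|\to\infty$. No gaps beyond the regularity caveats you already flag, which the paper likewise treats as standing assumptions.
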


\begin{proof}
We denote the joint distribution over \( (\mathbf{x}, \theta) \) from the distributions \( p(\mathbf{x} \mid \theta) \) and \( q(\theta \mid \theta_t) \) as \( p(\mathbf{x}, \theta \mid \theta_t) \). First, we expand the square, and remove terms which are not dependent on the score model parameters \( W \). 

\begin{flalign*}
\mathcal{J}(W) &= \mathbb{E}_{p(\mathbf{x}, \theta \mid \theta_t)}(\| \nabla_\theta \log p(\mathbf{x} \mid \theta) - S_W(\mathbf{x}) \|^2) && \\
&= \int_{\theta \in \mathbb{R}^{d_\theta}}q(\theta \mid \theta_t) \int_{\mathbf{x} \in \mathbb{R}^{d_\mathbf{x}}} p(\mathbf{x} \mid \theta) \| \nabla_\theta \log p(\mathbf{x} \mid \theta) - S_W(\mathbf{x}) \|^2 d\mathbf{x} \; d\theta && \\
&= \int_{\theta \in \mathbb{R}^{d_\theta}}q(\theta \mid \theta_t) \int_{\mathbf{x} \in \mathbb{R}^{d_\mathbf{x}}} p(\mathbf{x} \mid \theta) \{ \| \nabla_\theta \log p(\mathbf{x} \mid \theta) \|^2 + \| S_W(\mathbf{x}) \|^2 && \\
&\qquad - 2 \nabla_\theta \log p(\mathbf{x} \mid \theta)^{\top} S_W(\mathbf{x}) \} d\mathbf{x} \; d\theta && \\
&= \int_{\theta \in \mathbb{R}^{d_\theta}}q(\theta \mid \theta_t) \int_{\mathbf{x} \in \mathbb{R}^{d_\mathbf{x}}} p(\mathbf{x} \mid \theta) \left\{ \| S_W(\mathbf{x}) \|^2 - 2 \nabla_\theta \log p(\mathbf{x} \mid \theta)^\top S_W(\mathbf{x}) \right\} d\mathbf{x} \; d\theta && \\
&\qquad + (\text{constants w.r.t. } W ) &&
\end{flalign*}

Next, by exchanging integrals and using the integration by parts tricks similar to Theorem 1 in \citet{hyvarinen2005estimation},

\begin{flalign*}
\mathcal{J}(W) &= \int_{\theta \in \mathbb{R}^{d_\theta}}q(\theta \mid \theta_t) \int_{\mathbf{x} \in \mathbb{R}^{d_\mathbf{x}}} p(\mathbf{x} \mid \theta) \| S_W(\mathbf{x}) \|^2 d\mathbf{x} \; d\theta && \\
&\qquad - 2 \int_{\theta \in \mathbb{R}^{d_\theta}}q(\theta \mid \theta_t) \int_{\mathbf{x} \in \mathbb{R}^{d_\mathbf{x}}} p(\mathbf{x} \mid \theta) \nabla_\theta \log p(\mathbf{x} \mid \theta)^\top S_W(\mathbf{x}) d\mathbf{x} \; d\theta && \\
&= \int_{\theta \in \mathbb{R}^{d_\theta}}q(\theta \mid \theta_t) \int_{\mathbf{x} \in \mathbb{R}^{d_\mathbf{x}}} p(\mathbf{x} \mid \theta) \| S_W(\mathbf{x}) \|^2 d\mathbf{x} \; d\theta && \\
&\qquad - 2 \int_{\mathbf{x} \in \mathbb{R}^{d_\mathbf{x}}} \int_{\theta \in \mathbb{R}^{d_\theta}} q(\theta \mid \theta_t) \nabla_\theta p(\mathbf{x} \mid \theta)^\top S_W(\mathbf{x}) d\theta \; d\mathbf{x} && \\
&= \int_{\theta \in \mathbb{R}^{d_\theta}}q(\theta \mid \theta_t) \int_{\mathbf{x} \in \mathbb{R}^{d_\mathbf{x}}} p(\mathbf{x} \mid \theta) \| S_W(\mathbf{x}) \|^2 d\mathbf{x} \; d\theta && \\
&\qquad - 2 \int_{\mathbf{x} \in \mathbb{R}^{d_\mathbf{x}}} \int_{\theta \in \mathbb{R}^{d_\theta}} q(\theta \mid \theta_t) \sum\limits_{i=1}^{d_
\theta} \frac{\partial}{\partial \theta_i} p(\mathbf{x} \mid \theta) S^{(i)}_W(\mathbf{x}) d\theta \; d\mathbf{x} &&
\end{flalign*}

\begin{flalign*}
\mathcal{J}(W) &= \int_{\theta \in \mathbb{R}^{d_\theta}}q(\theta \mid \theta_t) \int_{\mathbf{x} \in \mathbb{R}^{d_\mathbf{x}}} p(\mathbf{x} \mid \theta) \| S_W(\mathbf{x}) \|^2 d\mathbf{x} \; d\theta && \\
&\qquad - 2 \int_{\mathbf{x} \in \mathbb{R}^{d_\mathbf{x}}} \sum\limits_{i=1}^{d_\theta} S^{(i)}_W(\mathbf{x}) \int_{\theta \in \mathbb{R}^{d_\theta}} q(\theta \mid \theta_t) \frac{\partial}{\partial \theta_i} p(\mathbf{x} \mid \theta)  d\theta \; d\mathbf{x} && \\
&= \int_{\theta \in \mathbb{R}^{d_\theta}}q(\theta \mid \theta_t) \int_{\mathbf{x} \in \mathbb{R}^{d_\mathbf{x}}} p(\mathbf{x} \mid \theta) \| S_W(\mathbf{x}) \|^2 d\mathbf{x} \; d\theta && \\
&\qquad + 2 \int_{\mathbf{x} \in \mathbb{R}^{d_\mathbf{x}}} \sum\limits_{i=1}^{d_\theta} S^{(i)}_W(\mathbf{x}) \int_{\theta \in \mathbb{R}^{d_\theta}} \frac{\partial}{\partial \theta_i} q(\theta \mid \theta_t)  p(\mathbf{x} \mid \theta)  d\theta \; d\mathbf{x} &&
\end{flalign*}

Finally, by further simplification
\begin{flalign*}
\mathcal{J}(W) &= \int_{\theta \in \mathbb{R}^{d_\theta}}q(\theta \mid \theta_t) \int_{\mathbf{x} \in \mathbb{R}^{d_\mathbf{x}}} p(\mathbf{x} \mid \theta) \| S_W(\mathbf{x}) \|^2 d\mathbf{x} \; d\theta && \\
&\qquad + 2 \int_{\mathbf{x} \in \mathbb{R}^{d_\mathbf{x}}} \sum\limits_{i=1}^{d_\theta} S^{(i)}_W(\mathbf{x}) \int_{\theta \in \mathbb{R}^{d_\theta}} q(\theta \mid \theta_t) \frac{\partial}{\partial \theta_i}\log q(\theta \mid \theta_t)  p(\mathbf{x} \mid \theta)  d\theta \; d\mathbf{x} && \\
&= \int_{\theta \in \mathbb{R}^{d_\theta}}q(\theta \mid \theta_t) \int_{\mathbf{x} \in \mathbb{R}^{d_\mathbf{x}}} p(\mathbf{x} \mid \theta) \| S_W(\mathbf{x}) \|^2 d\mathbf{x} \; d\theta && \\
&\qquad + 2 \int_{\theta \in \mathbb{R}^{d_\theta}} q(\theta \mid \theta_t) \int_{\mathbf{x} \in \mathbb{R}^{d_\mathbf{x}}} \sum\limits_{i=1}^{d_\theta} S^{(i)}_W(\mathbf{x})  \frac{\partial}{\partial \theta_i}\log q(\theta \mid \theta_t)  p(\mathbf{x} \mid \theta)  d\mathbf{x} \; d\theta && \\
&= \int_{\theta \in \mathbb{R}^{d_\theta}} q(\theta \mid \theta_t) \int_{\mathbf{x} \in \mathbb{R}^{d_\mathbf{x}}} p(\mathbf{x} \mid \theta) \sum\limits_{i=1}^{d_\theta} \left[ S^{(i)}_W(\mathbf{x})^2 + 2 S^{(i)}_W(\mathbf{x})  \frac{\partial}{\partial \theta_i}\log q(\theta \mid \theta_t) \right]   d\mathbf{x} \; d\theta && \\
&= \underset{q(\theta \mid \theta_t)}{\mathbb{E}}  \underset{p(\mathbf{x} \mid \theta)}{\mathbb{E}}\sum\limits_{i=1}^{d_\theta} \left[ S^{(i)}_W(\mathbf{x})^2 + 2 S^{(i)}_W(\mathbf{x})  \frac{\partial}{\partial \theta_i}\log q(\theta \mid \theta_t) \right]  && \\
&= \mathbb{E}_{\mathbf{x}\sim p(\mathbf{x} \mid \theta), \theta \sim q(\theta \mid \theta_t)}
  [
      \|S_{W}(\mathbf{x}) \|^{2}
      \;+\;
      2\,S_{W}(\mathbf{x})^{\top}
      \,\nabla_{\theta}\log q(\theta \mid \theta_t)
    ] &&
\end{flalign*}

\end{proof}

\subsection{Bayes-optimal solution to Fisher score matching objective }
\label{appendix:bo-proof}

We present the complete theorem and proof for Theorem~\ref{thm:bo-fsm} here.
\begin{theorem}
    For a general differentiable function \(S: \mathbb{R}^{d_\mathbf{x}} \to \mathbb{R}^{d_\theta} \), 
\[
    S^* = \underset{S}{\mathrm{argmin}} \underset{p(\mathbf{x},\theta \mid \theta_t)}{\mathbb{E}} \| \nabla_\theta \log p(\mathbf{x} \mid \theta) - S(\mathbf{x}) \|^2 = \underset{p(\theta \mid \mathbf{x}, \theta_t)}{\mathbb{E}} \nabla_\theta \log p(\mathbf{x} \mid \theta) 
\]

\end{theorem}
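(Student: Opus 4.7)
The claim is the standard fact that the minimizer of a quadratic loss is a conditional expectation, applied here to the joint distribution $p(\mathbf{x},\theta \mid \theta_t) = p(\mathbf{x} \mid \theta) q(\theta \mid \theta_t)$. The plan is to first decompose the objective via the tower property, then identify the minimizer of the inner conditional loss pointwise in $\mathbf{x}$, and finally use Bayes' rule to recognize the induced posterior $p(\theta \mid \mathbf{x}, \theta_t)$.

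Concretely, I would write
\[
\mathbb{E}_{p(\mathbf{x},\theta \mid \theta_t)} \bigl\| \nabla_\theta \log p(\mathbf{x} \mid \theta) - S(\mathbf{x}) \bigr\|^2
= \mathbb{E}_{p(\mathbf{x} \mid \theta_t)}\Bigl[\,\mathbb{E}_{p(\theta \mid \mathbf{x}, \theta_t)} \bigl\| \nabla_\theta \log p(\mathbf{x} \mid \theta) - S(\mathbf{x}) \bigr\|^2\,\Bigr],
\]
where $p(\mathbf{x} \mid \theta_t) = \int p(\mathbf{x} \mid \theta) q(\theta \mid \theta_t)\, d\theta$ is the marginal over data under the local proposal, and the inner expectation is with respect to the induced posterior $p(\theta \mid \mathbf{x},\theta_t) \propto p(\mathbf{x} \mid \theta) q(\theta \mid \theta_t)$ obtained by Bayes' rule. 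Since $S(\mathbf{x})$ depends only on $\mathbf{x}$, for each fixed $\mathbf{x}$ the inner expectation is a standard mean-squared-error problem whose unique minimizer over $\mathbb{R}^d$ is the conditional mean,
\[
S^*(\mathbf{x}) = \mathbb{E}_{p(\theta \mid \mathbf{x}, \theta_t)} \bigl[\nabla_\theta \log p(\mathbf{x} \mid \theta)\bigr].
\]
Because this pointwise minimizer is itself a (measurable) function of $\mathbf{x}$, it is admissible as a choice of $S$, and the outer expectation is minimized simultaneously at every $\mathbf{x}$.

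The only non-routine subtlety is to justify pulling the minimization inside the integral, i.e.\ that we may optimize pointwise in $\mathbf{x}$. This is handled by the standard argument that for any candidate $S$, expanding the square and using $\mathbb{E}[Y - \mathbb{E}[Y \mid \mathbf{x}] \mid \mathbf{x}] = 0$ with $Y = \nabla_\theta \log p(\mathbf{x} \mid \theta)$ gives
\[
\mathbb{E}\bigl\|Y - S(\mathbf{x})\bigr\|^2 = \mathbb{E}\bigl\|Y - S^*(\mathbf{x})\bigr\|^2 + \mathbb{E}\bigl\|S^*(\mathbf{x}) - S(\mathbf{x})\bigr\|^2,
\]
so the loss is minimized if and only if $S = S^*$ almost everywhere under $p(\mathbf{x} \mid \theta_t)$. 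This identifies $S^*$ as the unique Bayes-optimal score model, yielding exactly the claimed expression. The ``main obstacle,'' if any, is purely bookkeeping: verifying the interchange of integration order needed to identify the posterior (which is immediate under the Fubini-type conditions already assumed for Theorem~\ref{thm:lsm}), and noting that $S$ is taken to range over a sufficiently rich class of functions so that the pointwise minimizer $S^*$ is attainable within the hypothesis class.
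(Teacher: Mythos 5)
Your proposal is correct, and it reaches the result by a slightly different route than the paper. The paper's proof expands the square, drops the term that is constant in $S$, and then imposes a first-order (stationarity) condition on the resulting functional, from which the conditional mean is read off; as written there, the optimality condition appears only as an equality of full expectations, and the pointwise identification of $S^*(\mathbf{x})$ is left implicit. You instead condition on $\mathbf{x}$ first via the tower property, minimize the inner mean-squared error pointwise, and certify global optimality and uniqueness (up to $p(\mathbf{x}\mid\theta_t)$-null sets) with the Pythagorean decomposition $\mathbb{E}\|Y-S(\mathbf{x})\|^2=\mathbb{E}\|Y-S^*(\mathbf{x})\|^2+\mathbb{E}\|S^*(\mathbf{x})-S(\mathbf{x})\|^2$. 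Both arguments rest on the same underlying fact that the conditional expectation is the $L^2$ projection onto functions of $\mathbf{x}$, so neither buys extra generality, but your version makes the interchange of minimization and integration and the uniqueness statement explicit, which the paper's first-order-condition phrasing glosses over; the paper's version is shorter and mirrors the algebraic expansion it already needs for Theorem~\ref{thm:lsm}. One small caveat: integrability of $\nabla_\theta \log p(\mathbf{x}\mid\theta)$ under the joint (so that the conditional mean exists and the cross term vanishes) should be stated as a hypothesis, just as the paper implicitly assumes it.
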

\begin{proof}
First, observe that since the function \(S\) is only a function of \(\mathbf{x}\), we have
\[
\underset{p(\mathbf{x}, \theta \mid \theta_t)}{\mathbb{E}} S(\mathbf{x}) = \underset{p(\mathbf{x}\mid \theta_t)}{\mathbb{E}} S(\mathbf{x})
\]

We can decompose the objective function by expanding the square,
\[
\underset{S}{\mathrm{argmin}}  \underset{p(\mathbf{x}, \theta \mid \theta_t)}{\mathbb{E}}\| \nabla_\theta \log p(\mathbf{x} \mid \theta) - S(\mathbf{x}) \|^2  = \underset{S}{\mathrm{argmin}}  \underset{p(\mathbf{x}, \theta \mid \theta_t)}{\mathbb{E}}\left(\|S(\mathbf{x})\|^2 - 2 S(\mathbf{x})^\top \nabla_\theta \log p(\mathbf{x} \mid \theta)\right)
\]

Then, our objective can be equivalently expressed as 
\[
\underset{p(\mathbf{x} \mid \theta_t)}{\mathbb{E}}\left[\|S(\mathbf{x})\|^2 - 2 S(\mathbf{x})^\top 
\underset{p(\theta\mid\mathbf{x},\theta_t)}{\mathbb{E}}[\nabla_\theta \log p(\mathbf{x} \mid \theta)]\right]
\]

Which has the optimal solution \(S^*(\mathbf{x}) = \underset{p(\theta \mid \mathbf{x}, \theta_t)}{\mathbb{E}} \nabla_\theta \log p(\mathbf{x} \mid \theta)\)
\end{proof}

\subsection{Linear Fisher score model parameterization}
\label{appendix:linear_model}
Here, we provide details of the linear Fisher score model derivation.

Recall that the parameter and data space are \( \theta \in \mathbb{R}^{d_\theta} \), \( \mathbf{x}^{(j)}_k \in \mathbb{R}^{d_\mathbf{x}} \), the linear score model weights are \( W \in \mathbb{R}^{d_\mathbf{x} \times d_\theta} \), and we defined the data matrix as \( X_j = \begin{bmatrix} 
                                                \mathbf{x}_{1}^{(j) \top} \\
                                                \vdots \\
                                                \mathbf{x}_{n}^{(j) \top}
                                                \end{bmatrix} \in \mathbb{R}^{n \times d_\mathbf{x}} \) and the corresponding Gram matrix as \( G_j = X_j^\top X_j \).

In practice, we include an intercept term in our regression by augmenting the data matrix with a column of ones, i.e., \( \begin{bmatrix}
    \mathbf{x}_{k}^{(j)} \\
    1
\end{bmatrix} \in \mathbb{R}^{d_\mathbf{x} +1} \) and \( W \) as a \( (d_\mathbf{x}+1) \times d_\theta \) matrix. For simplicity, we omit this intercept term in our derivation.

We start from the empirical version of the local Fisher score matching objective, Equation~\eqref{eq:lsm_est} (replacing averages by sums for simplicity),

\[
\hat{\mathcal{J}}(W) = \sum\limits_{j=1}^{m} \sum\limits_{k=1}^{n}   [
      \|S_{W}(\mathbf{x}^{(j)}_k) \|^{2}
      \;+\;
      2\,S_{W}(\mathbf{x}^{(j)}_k)^{\top}
      \, \nabla_{\theta}\log q(\theta \mid \theta_t)|_{\theta = \theta^{(j)}}
    ]
\]

Substituting our linear score model, \( S(\mathbf{x}; \theta_t) = W^\top \mathbf{x} \),
\[
    \hat{\mathcal{J}}(W) = \sum\limits_{j=1}^{m} \sum\limits_{k=1}^{n} \| W^\top \mathbf{x}^{(j)}_k \|^2 + 2 \sum\limits_{j=1}^{m} \sum\limits_{k=1}^{n}  ( \mathbf{x}_k^{(j) \top} W \nabla_{\theta}\log q(\theta \mid \theta_t)|_{\theta = \theta^{(j)}} )
\]

To obtain the first-order conditions, we take derivative with respect to \( W \), for each of the terms separately. 

For the first term,
\begin{align*}
    \sum\limits_{j=1}^{m} \sum\limits_{k=1}^{n} \| W^\top \mathbf{x}^{(j)}_k \|^2 &=  \sum\limits_{j=1}^{m} \mathrm{tr}[(X_j W)^\top (X_j W)] \\   
    &= \sum\limits_{j=1}^{m} \mathrm{tr}(W^\top X_j^{\top} X_j W)
\end{align*}

Applying \( \frac{\partial}{\partial W}\) gives:
\[  \sum\limits_{j=1}^{m} 2 X_j^{\top} X_j W = 2 \sum\limits_{j=1}^{m} G_j W\]

For the second term, we can similarly apply \( \frac{\partial}{\partial W}\) to give:
\[ 
2 \sum\limits_{j=1}^{m} \sum\limits_{k=1}^{n}  \frac{\partial}{\partial W} [\mathbf{x}_k^{(j)\top} W  \nabla_{\theta}\log q(\theta \mid \theta_t)|_{\theta = \theta^{(j)}}]  = 2 \sum\limits_{j=1}^{m} \sum\limits_{k=1}^{n} \mathbf{x}^{(j)}_k \nabla_{\theta}\log q(\theta \mid \theta_t)|_{\theta = \theta^{(j)}}^\top 
\]

Combining the two terms, we obtain

\[ 
\frac{\partial}{\partial W} \hat{\mathcal{J}}(W) = 2\sum\limits_{j=1}^{m} G_j W + 2\sum\limits_{j=1}^{m} \sum\limits_{k=1}^{n} \mathbf{x}^{(j)}_k \nabla_{\theta}\log q(\theta \mid \theta_t)|_{\theta = \theta^{(j)}}^\top 
\]

Setting this to 0 gives us the normal equations in Equation~\eqref{eq:normal_lsm}.

If the sum of the Gram matrices, \( \sum\limits_{j=1}^{m} G_j \) is invertible (otherwise, we may opt to use the ridge penalty), we can directly obtain the linear Fisher score matching estimator in Equation~\eqref{eq:linear_fish}.

Naturally, our linear score model setup can be extended to include a Frobenius norm penalty \(\lambda \|W\|^{2}_{F}\) in the objective, leading to a ridge-type solution:
\[
\hat{W}
\;=\;
-\,\bigl[\sum_{j=1}^{m} G_j + \lambda \, I_{d_\mathbf{x}} \bigr]^{-1}
\sum_{j=1}^{m}\sum_{k=1}^n
\left[ \mathbf{x}^{(j)}_k  \nabla_{\theta}\log q(\theta \mid \theta_t)|_{\theta = \theta^{(j)}}^{\top} \right]
\]

This stabilises the inverse and helps prevent overfitting in finite-sample regimes. In practice, we implement the ridge-type linear Fisher score model. 

\subsection{Neural Network Fisher score model parameterization}
\label{appendix:nn_model}
An alternative to the linear Fisher score model provided in Appendix~\ref{appendix:linear_model} is a neural network parameterization of the score model. In this setting, we denote \( S(\mathbf{x}; \theta_t) = S_{\phi}(\mathbf{x}) \) where \( S_{\phi} \) is a neural network with parameters \( \phi \) for a fixed parameter iterate \( \theta_t \). The parameters \( \phi \) can be obtained by optimizing the FSM objective \( \mathcal{J}(\phi; \theta_t) \) from Equation~\eqref{eq:lsm_simplified} (with its Monte Carlo estimate Equation~\eqref{eq:lsm_est}) using standard neural network backpropagation. Thus, following Algorithm~\ref{alg:cap}, using a neural network parameterization requires a potentially costly inner optimization loop for each parameter iterate \( \theta_t \).    

\subsection{Fisher score matching proposal distribution}
\label{appendix:fsm_proposal}

As discussed in Section~\ref{subsec:fsm_properties}, the theoretical optimal choice of the hyperparameter \( \sigma \) depends on the curvature of the log-likelihood function. In practice, the curvature is unknown, and thus selecting the optimal \( \sigma \) is challenging in general. We propose a simple pilot calibration based on grid search, before running the main FSM-MLE procedure, we execute a short pilot FSM-MLE procedure with different candidate value \( \sigma_k \) yielding corresponding candidate parameter estimate \(\hat{\theta}_{\sigma_k}\). For each \( \sigma_k \), we simulate data at \(\hat{\theta}_{\sigma_k}\), and select the candidate \( \sigma_k \) which minimizes the discrepancy between the observed data and the simulated data at the candidate parameter estimate. Thus, this procedure selects hyperparameters \(\sigma\) which can produce simulations most consistent with the observations. 

A simple annealing schedule which would reduce the hyperparameter \( \sigma \) over the course of the FSM-MLE optimization procedure could also be considered. This would ensure that the smoothing bias discussed in Section~\ref{sec:theory} vanishes asymptotically, however, such a procedure would be complicated by the increase in the variance of the FSM estimator and numerical instability when \( \sigma \) is too small. While we attempted to implement such an annealing scheme in our experiments, we found that it introduced additional complexity without meaningful performance gains over a fixed \( \sigma \) scheme.

When the likelihood function exhibits strong anisotropic curvature, an isotropic Gaussian proposal is suboptimal. Extending the calibration scheme to diagonal covariances, however, would make grid search scale exponentially with the parameter dimension, making the method computationally prohibitive for higher dimensional problems. Hence, it remains an open question on how to efficiently design scalable procedure to select more expressive proposal distributions.

\subsection{Gaussian smoothing equivalence}
\label{appendix:gs_proof}

We provide here a more detailed derivation of Theorem~\ref{thm:gs}.

\begin{theorem}[Equivalence as Gaussian Smoothing]
    Under an isotropic Gaussian proposal, \(q(\theta \mid \theta_t) = \mathcal{N}(\theta \mid \theta_t, \sigma^2 I)\), with the assumptions as Theorem~\ref{thm:A1}, the optimal score matching estimator is equivalent to the gradient of the smoothed likelihood
    \[
    \nabla_{\theta_t} \tilde{\ell}(\theta_t ; \mathbf{x}) = \mathbb{E}_{\theta \sim p(\theta \mid \mathbf{x}, \theta_t)} \nabla_\theta \log p(\mathbf{x} \mid \theta)
    \]

    where \(\tilde{\ell}(\theta_t;\mathbf{x})
  \;=\;
  \log \int 
  p\bigl(\mathbf{x} \mid \theta\bigr)\,q\bigl(\theta \mid \theta_t\bigr)
  \,d\theta\) and \( p(\theta \mid \mathbf{x}, \theta_t) \propto p(\mathbf{x} \mid \theta) q( \theta \mid \theta_t) \) is the induced posterior from the proposal distribution \( q(\theta \mid \theta_t) \)
  
\end{theorem}

\begin{proof}

First, define \( Z(\theta_t) = \int p(\mathbf{x} \mid \theta) q(\theta \mid \theta_t) d \theta \) such that \( \tilde{\ell}(\theta_t; \mathbf{x}) = \log Z(\theta_t) \).

Now, observe that,
\[
   \nabla_{\theta_t}\,\widetilde{\ell}(\theta_t;\mathbf{x})
   \;=\;
   \frac{\nabla_{\theta_t}\,Z(\theta_t)}{Z(\theta_t)}\,
   \;=\;
   \frac{1}{Z(\theta_t)}
   \int p(\mathbf{x} \mid \theta) \nabla_{\theta_t} q(\theta \mid \theta_t) d\theta
\]

For an isotropic Gaussian proposal, \(q(\theta \mid \theta_t) = \mathcal{N}(\theta \mid \theta_t,\sigma^2I) \), we have that 

\[
\nabla_{\theta_t} q(\theta \mid \theta_t) = -\,\nabla_{\theta} q(\theta \mid \theta_t)
\]

Using the integration-by-parts trick (similarly to the proof in Appendix~\ref{appendix:gs_proof}), we have,

\begin{align*}
    \int p(\mathbf{x} \mid \theta) \nabla_{\theta_t} q(\theta \mid \theta_t) d \theta &= - \int p( \mathbf{x} \mid \theta) \nabla_\theta q(\theta \mid \theta_t) d \theta \\
    &= \int \nabla_\theta p(\mathbf{x} \mid \theta) q(\theta \mid \theta_t) d\theta \\
    &= \int \nabla_\theta \log p(\mathbf{x} \mid \theta) p(\mathbf{x} \mid \theta) q(\theta \mid \theta_t) d\theta
\end{align*}

Substituting this expression into \( \nabla_{\theta_t}\,\widetilde{\ell}(\theta_t;\mathbf{x}) \), we have,

\begin{align*}
    \nabla_{\theta_t}\,\widetilde{\ell}(\theta_t;\mathbf{x}) &= \frac{1}{Z(\theta_t)}
   \int p(\mathbf{x} \mid \theta) \nabla_{\theta_t} q(\theta \mid \theta_t) d\theta \\
   &= \frac{1}{Z(\theta_t)}
   \int \nabla_\theta \log p(\mathbf{x} \mid \theta) p(\mathbf{x} \mid \theta) q(\theta \mid \theta_t) d\theta \\
   &= \int \nabla_\theta \log p(\mathbf{x} \mid \theta) \frac{p(\mathbf{x} \mid \theta) q(\theta \mid \theta_t)}{Z(\theta_t)} d\theta \\
   &= \mathbb{E}_{\theta \sim p(\theta \mid \mathbf{x}, \theta_t)} \nabla_\theta \log p(\mathbf{x} \mid \theta)
\end{align*}
\end{proof}

\subsection{Bias of FSM}
\label{appendix:FSM-BV}
\begin{theorem}[Bias characterization of the FSM estimator]
\label{thm:bias_variance-app}
Let \( \theta^* \) be the true parameter, and denote \( \mathbf{x}_0 \sim P_{\theta^*} \) as random observations sampled from the true model. Suppose there exists a unique maximum likelihood estimator for this model, and that the log-likelihood is \(L\)-smooth. Recall that \( g(\mathbf{x}_0; \theta_t) = \nabla_\theta \log p(\mathbf{x}_0 \mid \theta)|_{\theta = \theta_t} \) is the true Fisher score, \( S^*(\mathbf{x}_0; \theta_t) = \mathbb{E}_{\theta \sim p(\theta \mid \mathbf{x}, \theta_t)} \nabla_\theta \log p(\mathbf{x} \mid \theta) \) is the optimal FSM estimator. For a fixed parameter point \( \theta_t \),

  The bias at \(\theta_t\) is bounded by
  \[
    \mathbb{E}_{\mathbf{x}_0}\!\bigl\|S^{*}(\mathbf{x}_0;\theta_t)-g(x_0;\theta_t)\bigr\|
    \;\le\;
    L\; \sqrt{d}\,\sigma\, \mathbb{E}_{\mathbf{x}_0} [R(\mathbf{x}_0)]
  \] 
      
where \( R(\mathbf{x}) = \frac{p(\mathbf{x} \mid \theta^*)}{ p(\mathbf{x} \mid \theta_t)} \) is a likelihood ratio term and \( d \) is the dimension of the parameter space %and \( m \) is the number of proposal parameter draws

\end{theorem}

\begin{proof}

Recall that \( g(\mathbf{x};\theta_t) = \nabla_\theta \log p(\mathbf{x} \mid \theta)|_{\theta = \theta_t}\) is the true Fisher score and the optimal FSM estimator is defined as \( S^*(\mathbf{x}; \theta_t) = \mathbb{E}_{\theta \sim p(\theta \mid \mathbf{x}, \theta_t)} \nabla_\theta \log p(\mathbf{x} \mid \theta)\).

1. We first show the bias bound \( \| S^*(\mathbf{x}; \theta_t) - g(\mathbf{x};\theta_t) \| \), at a fixed data point \( \mathbf{x} \).

\begin{align*}
    \| S^*(\mathbf{x}; \theta_t) - g(\mathbf{x};\theta_t) \| &= \Bigl \| \mathbb{E}_{\theta \sim p(\theta \mid \mathbf{x}, \theta_t)} \bigl[ \nabla_\theta \log p(\mathbf{x} \mid \theta) - \nabla_\theta \log p(\mathbf{x} \mid \theta)|_{\theta = \theta_t} \bigr] \Bigr\| \\
    &\leq \mathbb{E}_{\theta \sim p(\theta \mid \mathbf{x}, \theta_t)} \Bigl[ \bigl \|\nabla_\theta \log p(\mathbf{x} \mid \theta) - \nabla_\theta \log p(\mathbf{x} \mid \theta)|_{\theta = \theta_t} \bigr \| \Bigr]\\
    &\leq L \int \| \theta - \theta_t \| p( \theta \mid \mathbf{x} ,\theta_t) d \theta  \\
    &\leq L \sup_\theta \frac{p(\theta \mid \mathbf{x}, \theta_t)}{q(\theta \mid \theta_t)}    \int \| \theta - \theta_t \| q( \theta \mid \theta_t)  d\theta  \\
    &\leq L \sqrt{d} \sigma \sup_\theta \frac{p(\theta \mid \mathbf{x}, \theta_t)}{q(\theta \mid \theta_t)}  
\end{align*}

Denoting \( p(\mathbf{x}\mid\theta_t) := \int p(\mathbf{x}\mid\theta) \, q(\theta\mid\theta_t) \, d\theta \), note that \( \sup_\theta \frac{p(\theta \mid \mathbf{x}, \theta_t)}{q(\theta \mid \theta_t)} = \frac{p(\mathbf{x} \mid \hat{\theta}_{\mathrm{MLE}}(\mathbf{x}))}{p(\mathbf{x} \mid \theta_t)} \)

2. We now take expectation with respect to the true model, \( \mathbf{x}_0 \sim p(\mathbf{x} \mid \theta^*) \), and the only non-constant term is \( \sup_\theta \frac{p(\theta \mid \mathbf{x}, \theta_t)}{q(\theta \mid \theta_t)} \).

\begin{align*}
    \mathbb{E}_{\mathbf{x}_0} \sup_\theta \frac{p(\theta \mid \mathbf{x}_0, \theta_t)}{q(\theta \mid \theta_t)} &= \int \frac{p(\mathbf{x}_0 \mid \hat{\theta}_{\mathrm{MLE}}(\mathbf{x}_0))}{p(\mathbf{x}_0 \mid \theta_t)}\ p(\mathbf{x}_0 \mid \theta^*) d\mathbf{x}_0 \\
    &\approx \int \frac{p(\mathbf{x}_0 \mid \theta^*)}{p(\mathbf{x}_0 \mid \theta_t)}\ p(\mathbf{x}_0 \mid \theta^*) d\mathbf{x}_0  \\
    &= \mathbb{E}_{\mathbf{x}_0} \frac{p(\mathbf{x}_0 \mid \theta^*)}{p(\mathbf{x}_0 \mid \theta_t)}
\end{align*}

\end{proof}

\subsection{Convergence guarantees of FSM}
\label{appendix:conv_guard}

In this section, we provide an asymptotic convergence analysis of the stochastic gradient method based on the local Fisher score matching gradient under a Gaussian proposal distribution. Recall that, based on theoretical development in Section~\ref{sec:gaussian-smoothing} and Appendix~\ref{appendix:gs_proof}, we have shown that the FSM estimator, under a isotropic Gaussian proposal distribution, targets a smoothed log-likelihood,
\[
\tilde{\ell}_\sigma(\theta;\mathbf{x}) = \log \int p(\mathbf{x} \mid \theta') \mathcal{N}(\theta' \mid \theta,\sigma^2I)  d \theta'
\]

Let the \( N \) independent and identically distributed observations be \( \mathcal{D} = \{\mathbf{x}_i\}_{i=1}^N \) and the corresponding smoothed likelihood objective be
\[
\tilde{\ell}_\sigma(\theta; \mathcal{D}) = \sum_{i=1}^N \tilde{\ell}_\sigma(\theta;\mathbf{x}_i)
\]

We define the smoothed maximum likelihood estimator for the dataset \( \mathcal{D} \) as
\[
\hat{\theta}_\sigma = \mathrm{argmax}_\theta \sum_{i=1}^N \tilde{\ell}_\sigma(\theta;\mathbf{x}_i)
\]

Equivalently, assuming the concavity of the smoothed likelihood function, we can characterize the smoothed maximum likelihood estimator with its first-order optimality condition.

\[
\nabla_\theta \tilde{\ell}_\sigma(\theta;\mathcal{D}) = \sum_{i=1}^N S^*(\mathbf{x}_i; \theta) = 0
\]

where \( S^*(\mathbf{x}; \theta) = \nabla_\theta \tilde{\ell}_\sigma(\theta; \mathbf{x}) \) is the Bayes-optimal FSM estimator.

In practice, however, we utilize the linear FSM estimator \( \hat{S}(\mathbf{x}; \theta) \) as discussed in Section~\ref{subsec:score_model_param} and Appendix~\ref{appendix:linear_model}. In order to reduce the variance of the resulting approximate maximum likelihood estimator, as well as to provide stronger theoretical guarantees, we use the averaged parameter estimate \citep{Polyak_Juditsky_1992}

\[
\bar{\theta}_T = \frac{1}{T}\sum_{t=1}^T \theta_t
\]

In the following, we state an asymptotic convergence result, which can be found in Proposition 2.1 of \citet{jin2021statisticalinferencepolyakruppertaveraged}, which is based on \citet{Polyak_Juditsky_1992}. 

\begin{assumption}[Smoothness and concavity of the true log-likelihood] \label{ass:smooth_con}
The log-likelihood function \( \ell(\theta ; \mathcal{D}) = \sum_{i=1}^N \log p(\mathbf{x}_i \mid \theta) \) is \(L\) smooth and \(\mu\) strongly concave.
\end{assumption}

Note that this is a sufficient condition for the strong concavity of the smoothed log-likelihood.

\begin{assumption}[Step Size Condition]
\label{ass:step_size}
    The step-sizes \( \eta_t > 0\) satisfies for all \( t \), \( \frac{\eta_t - \eta_{t+1}}{\eta_t} = o( \eta_t) \) and \( \sum_{t=1}^\infty \eta^{(1+\lambda)/2} t^{-1/2} < \infty \)
\end{assumption}

\begin{assumption}[Unbiasedness and Martingale Noise Control]
\label{ass:mart_con}
    Define the noise term \( \xi_t = \hat{S}(\theta_{t-1}, u_t; \sigma) - S^*(\theta_{t-1}; \sigma) \), which is a martingale difference sequence with respect to \( \mathcal{F}_{t-1} = \sigma(u_1, \ldots,u_{t-1}) \), where \( u_t = \{ (\theta_{i,t}, X_{i,t}) \}_{i=1}^M \) represents all the simulations used for the score model estimation at iteration \( t \).
    
    \begin{enumerate}
        \item For all iterations \( t \geq 1 \), the linear FSM estimator is unbiased:
        \[
        \mathbb{E}[\hat{S}(\theta_{t-1}, u_t; \sigma) \mid \mathcal{F}_{t-1}] = S^*(\theta_{t-1}; \sigma)
        \]
        \item Assume that there exists a constant \( K > 0 \) such that for all \( t \geq 1 \), almost surely:
    \[
    \mathbb{E}[\| \xi_t \|^2 \mid \mathcal{F}_{t-1} ] + \| S^*(\theta_{t-1}; \sigma )\|^2 \leq K ( 1 + \|\theta_{t-1} - \hat{\theta}_\sigma \|^2 )
    \]
    \end{enumerate}
\end{assumption}

\begin{assumption}[Hessian Bound]
\label{ass:hess_bound}
    There is a function \( H(u) \) with bounded fourth moments, such that the operator norm of \( \nabla_\theta \hat{S}(\theta,u) \) is bounded, \( \|\nabla_\theta \hat{S}(\theta,u)\| \leq H(u) \) for all \( \theta \)
\end{assumption}

\begin{theorem}
    \label{thm:asy_cov}
    Suppose Assumptions \ref{ass:smooth_con}, \ref{ass:mart_con} and \ref{ass:hess_bound} hold and the sequence of step sizes fulfills \ref{ass:step_size}. Using the updates of the gradient descent \( \theta_{t+1} \leftarrow \theta_t + \eta_t \hat{S}_t(\mathbf{x}; \theta_t, \sigma) \), we have that the averaged parameter iterates \( \bar{\theta}_T = \frac{1}{T} \sum_{t=1}^T \theta_t\) satisfies as \( T \to \infty \):
    \begin{enumerate}
        \item \( \bar{\theta}_T \to_{a.s.} \hat{\theta}_\sigma \) 
        \item \( \sqrt{T} (\bar{\theta}_T - \hat{\theta}_\sigma) \to_d \mathcal{N}(0, V) \)
    \end{enumerate}
    where \( V = (\nabla_\theta^2 \tilde{\ell}_\sigma(\hat{\theta}_\sigma;\mathcal{D}))^{-1} \mathbb{E}[\hat{S}(\hat{\theta}_\sigma; \mathcal{D}) \hat{S}(\hat{\theta}_\sigma; \mathcal{D})^\top] (\nabla_\theta^2 \tilde{\ell}_\sigma(\hat{\theta}_\sigma;\mathcal{D}))^{-1} \)
\end{theorem}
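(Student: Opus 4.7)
The natural approach is the classical ``algorithmic error plus statistical error'' decomposition, followed by Slutsky's theorem. Specifically, write
\begin{equation*}
\sqrt{n}\,(\bar{\theta}_T - \theta^*)
\;=\;
\underbrace{\sqrt{n}\,(\bar{\theta}_T - \hat{\theta}_{\mathrm{MLE},n})}_{\text{algorithmic}}
\;+\;
\underbrace{\sqrt{n}\,(\hat{\theta}_{\mathrm{MLE},n} - \theta^*)}_{\text{statistical}},
\end{equation*}
and analyze the two summands separately under the joint limit $n,T\to\infty$.

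For the statistical term, the plan is to invoke the standard asymptotic normality theorem for maximum likelihood estimators under the stated regularity conditions (e.g.\ \cite[Ch.~5]{van2000asymptotic}): correct specification, identifiability, smoothness of the log-density in $\theta$, interchangeability of differentiation and integration, and nonsingular Fisher information $\mathcal{I}(\theta^*)$. This yields
\begin{equation*}
\sqrt{n}\,(\hat{\theta}_{\mathrm{MLE},n} - \theta^*) \;\to_d\; N\!\bigl(0,\ \mathcal{I}(\theta^*)^{-1}\bigr).
\end{equation*}

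For the algorithmic term, the plan is to factor
\begin{equation*}
\sqrt{n}\,(\bar{\theta}_T - \hat{\theta}_{\mathrm{MLE},n})
\;=\;
\sqrt{\tfrac{n}{T}}\;\cdot\;\sqrt{T}\,(\bar{\theta}_T - \hat{\theta}_{\mathrm{MLE},n}).
\end{equation*}
By the hypothesis $\sqrt{T}(\bar{\theta}_T - \hat{\theta}_{\mathrm{MLE},n}) = O_p(1)$ (uniformly in $n,T$), the second factor is bounded in probability, while the deterministic first factor satisfies $\sqrt{n/T}\to 0$. Hence the product is $o_p(1)$. The ``uniform over $n,T$'' qualifier is important here: it means the $O_p(1)$ tightness does not degrade as $n$ grows, which is precisely what lets us multiply by a deterministic vanishing scalar and conclude convergence to zero in probability under the joint limit (a non-uniform version would only give sequential limits). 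Finally, applying Slutsky's theorem to combine the $o_p(1)$ algorithmic term with the asymptotically normal statistical term yields the stated conclusion.

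The main obstacle, if any, is justifying the uniform-in-$(n,T)$ $O_p(1)$ statement itself --- but this is an assumption of the theorem rather than something to prove, since it is delivered (at a fixed smoothing level) by the Polyak--Ruppert analysis in Appendix~\ref{appendix:conv_guard}. A secondary subtlety worth flagging in the write-up is that Theorem~\ref{thm:asy_cov} actually gives tightness around $\hat{\theta}_\sigma$, the smoothed MLE, rather than around $\hat{\theta}_{\mathrm{MLE},n}$; the gap $\hat{\theta}_\sigma - \hat{\theta}_{\mathrm{MLE},n}$ is controlled by $\sigma$ via Theorem~\ref{thm:bias_variance}, so the cleanest route is to absorb this smoothing bias into the assumed uniform $O_p(1)$ bound (implicitly requiring $\sigma\to 0$ sufficiently fast relative to $T$), and otherwise the argument is a one-line Slutsky application.
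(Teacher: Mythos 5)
You have proved the wrong statement. What you have written is, almost verbatim, the paper's proof of Theorem~\ref{thm:uq} (the uncertainty-quantification result $\sqrt{n}(\bar{\theta}_T-\theta^*)\to_d N(0,\mathcal{I}(\theta^*)^{-1})$, proved in Appendix~\ref{appendix:uq_proof} via exactly your decomposition into algorithmic and statistical error plus Slutsky). The statement you were asked to prove is Theorem~\ref{thm:asy_cov}, which is an entirely different claim: it concerns only the limit $T\to\infty$ at a \emph{fixed} dataset $\mathcal{D}$ and fixed smoothing level $\sigma$, and asserts (i) almost-sure convergence of the Polyak--Ruppert average $\bar{\theta}_T$ to the \emph{smoothed} MLE $\hat{\theta}_\sigma$, and (ii) the central limit theorem $\sqrt{T}(\bar{\theta}_T-\hat{\theta}_\sigma)\to_d N(0,V)$ with the sandwich covariance $V=(\nabla_\theta^2\tilde{\ell}_\sigma)^{-1}\,\mathbb{E}[\hat{S}\hat{S}^{\intercal}]\,(\nabla_\theta^2\tilde{\ell}_\sigma)^{-1}$. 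There is no $n\to\infty$, no true parameter $\theta^*$, no classical MLE asymptotics, and no Fisher information $\mathcal{I}(\theta^*)$ anywhere in that theorem. Your argument establishes none of its two conclusions; worse, it takes as an input (``$\sqrt{T}(\bar{\theta}_T-\cdot)=O_p(1)$, delivered by the Polyak--Ruppert analysis'') essentially the tightness that Theorem~\ref{thm:asy_cov} is supposed to supply, so the proposal is circular with respect to the target statement.

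What an actual proof of Theorem~\ref{thm:asy_cov} requires is the Polyak--Ruppert averaging machinery: the paper obtains it by verifying that Assumptions~\ref{ass:smooth_con}--\ref{ass:hess_bound} (strong concavity and smoothness of the smoothed objective, the step-size decay conditions, the martingale-difference structure and second-moment control of the gradient noise $\xi_t=\hat{S}(\theta_{t-1},u_t;\sigma)-S^*(\theta_{t-1};\sigma)$, and the Hessian bound) match the hypotheses of Proposition~2.1 of \cite{jin2021statisticalinferencepolyakruppertaveraged}, which in turn rests on \cite{Polyak_Juditsky_1992}. The substance of that route is a linearization of the mean gradient field $S^*(\cdot;\sigma)$ around its root $\hat{\theta}_\sigma$, a martingale CLT for the averaged noise, and the resulting sandwich form of $V$. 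None of these ingredients appears in your write-up, so the gap is not a missing technicality but the entire argument.
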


\subsubsection{Relationship between smoothed MLE and the true MLE}
Furthermore, we note here that we can establish an upper bound on the distance between the smoothed MLE \( \hat{\theta}_\sigma \) and the true MLE \( \hat{\theta} \). For simplicity, assume that we only have a single observation in our dataset, \( \mathbf{x} \). Then, using the strong concavity in Assumption~\ref{ass:smooth_con}, \(L\)-smoothness of the log-likelihood as in Theorem~\ref{thm:bias_variance-app}, and the result from \ref{appendix:FSM-BV} for a fixed point \( \mathbf{x} \), and denoting the gradient of the true log-likelihood as \( g(\theta; \mathbf{x}) = \nabla_\theta \ell(\theta; \mathbf{x}) \),

\begin{align*}
    \| \hat{\theta}_\sigma - \hat{\theta} \| &\leq \frac{1}{\mu} \| g(\hat{\theta}_\sigma; \mathbf{x}) - g(\hat{\theta} ; \mathbf{x}) \| = \frac{1}{\mu} \| g(\hat{\theta}_\sigma; \mathbf{x}) \| \\
    &= \frac{1}{\mu} \| g(\hat{\theta}_\sigma; \mathbf{x}) - S^*( \mathbf{x}; \hat{\theta}_\sigma) \| \\
    &= \frac{L \sigma \sqrt{d_\theta}}{\mu} \sup_\theta \frac{p(\theta \mid \mathbf{x}, \hat{\theta}_\sigma)}{q(\theta \mid \hat{\theta}_\sigma)}
\end{align*}

Thus, we have shown that \( \| \hat{\theta}_\sigma - \hat{\theta} \| \) is approximately of the order \( \mathcal{O}(\sigma) \).

\subsection{Uncertainty quantification of FSM}
\label{appendix:uq_proof}

Here, we show the uncertainty quantification by leveraging the result from Appendix~\ref{appendix:conv_guard} with classical MLE theory. As before, we denote \( \bar{\theta}_T = \frac{1}{T} \sum_{t=1}^T \theta_t\) as the averaged parameter iterate from the FSM-SGD procedure, and for clarity, we denote \( \hat{\theta}_{\mathrm{MLE},N} \) as the MLE of the true likelihood based on \( N \) i.i.d. observations. Our goal will be to characterize the distribution of \( \sqrt{N}( \bar{\theta}_T - \theta^*) \). 

First, we note that we can decompose \( \bar{\theta}_T - \theta^* \) into both algorithmic and statistical uncertainty,
\[
\bar{\theta}_T - \theta^* = \underbrace{(\bar{\theta}_T - \hat{\theta}_{\mathrm{MLE},N})}_{\text{algorithmic uncertainty}} + \underbrace{(\hat{\theta}_{\mathrm{MLE},N} - \theta^*)}_{\text{sampling uncertainty}}
\]

Multiplying by \( \sqrt{N} \), we obtain the following.
\[
\sqrt{N} (\bar{\theta}_T - \theta^*) = \sqrt{N}(\bar{\theta}_T - \hat{\theta}_{\mathrm{MLE},N}) + \sqrt{N}( \hat{\theta}_{\mathrm{MLE},N} - \theta^*)
\]

Focusing on the algorithmic error, observe that 
\[
\sqrt{N}(\bar{\theta}_T - \hat{\theta}_{\mathrm{MLE},N}) = \sqrt{\frac{N}{T}} \cdot\sqrt{T}( \bar{\theta}_T - \hat{\theta}_{\mathrm{MLE},N})  
\]

Since by assumption we know that \( \sqrt{\frac{N}{T}} \to 0 \) and \( X_{N,T} = \sqrt{T}( \bar{\theta}_T - \hat{\theta}_{\mathrm{MLE},N}) = \mathcal{O}_p(1) \) from Appendix~\ref{appendix:conv_guard}, this implies that their product is 
\[
\sqrt{\frac{N}{T}} \cdot X_{N,T} = \sqrt{N}(\bar{\theta}_T - \hat{\theta}_{\mathrm{MLE},N}) \to_p 0 
\]
as \( N,T \to \infty \).

From classical MLE theory, under standard regularity conditions, 
\[
\sqrt{N}( \hat{\theta}_{\mathrm{MLE},N} - \theta^*) \to_d \mathcal{N}(0, \mathcal{I}(\theta^*)^{-1}) 
\] 
where \( \mathcal{I}(\theta^*) \) is the Fisher information matrix at the true parameter.

Finally, to combine both results, using Slutsky's theorem, 
\[
\sqrt{N} (\bar{\theta}_T - \theta^*) \to_d  \mathcal{N}(0, \mathcal{I}(\theta^*)^{-1} ) 
\] 
as \( N,T \to \infty \)

\subsection{KDE-SP implementation}
\label{appendix:kde_sp}

We implement the KDE-SP gradient estimator as proposed in \citet{bertl2017approximate}, combining a kernel density estimate (KDE)-based likelihood approximation with a simultaneous perturbation stochastic approximation (SPSA). Specifically, at each iteration $t$, the approximate gradient of the log-likelihood at $\theta$ is given by:

\[
\hat{\nabla} \ell(\theta)=\delta_t \frac{\hat{\ell}\left(\theta^{+}\right)-\hat{\ell}\left(\theta^{-}\right)}{2 c_t},
\]

where $\theta^{+}=\theta+c_t \delta_t$ and $\theta^{-}=\theta-c_t \delta_t$ for a random perturbation $\delta_t$. This gradient estimate is then used in an SPSA update of the form

\[
\theta_t=\theta_{t-1}+\alpha_t \hat{\nabla} \ell\left(\theta_{t-1}\right)
\]

Following the specifications in \citet{bertl2017approximate}, we adopt the standard SPSA step size schedule:

\[
\alpha_t=\frac{a}{(t+A)^\alpha}, \quad c_t=\frac{c}{t^\gamma}
\]

with $\alpha=1, \gamma=1 / 6$, and $A=\lfloor 0.1 T\rfloor$, where $T$ is the total number of iterations. 

The constants $a$ and $c$ control the initial values of $\alpha_t$ and $c_t$. We tune both by performing a grid search over pairs ( $a, c$ ). For each candidate pair, we run a short trial of the SPSA optimization, simulate data from the resulting parameter estimates, and measure prediction error relative to the observed dataset. We then select the pair ( $a, c$ ) that yields the lowest validation error. We also incorporate the KDE modifications proposed in Section 3.2 of \citet{bertl2017approximate}, which refine the KDE-based likelihood approximation. These modifications help stabilize the KDE estimation for high-dimensional problems.

\subsection{Additional details and results on numerical studies experiment}
\label{appendix:exp_7.1}

\subsubsection{Additional results on hyperparameter sensitivity}

In Figures~\ref{fig:grad_comp_sim_full_2d} and ~\ref{fig:grad_comp_sim_full_20d}, we provide additional results on the ablation study showing the sensitivity of the gradient accuracy between the FSM method and the KDE-SP method for different choices of the proposal variance and perturbation constants, respectively. Figure~\ref{fig:grad_comp_sim_mini} is a subset of the results shown in Figure~\ref{fig:grad_comp_sim_full_2d}. Even in higher-dimensional settings, we find that the FSM method can match the gradient accuracy of the KDE-SP method across a wide range of hyperparameter choices.

\begin{figure}
    \centering
    \includegraphics[width=\linewidth]{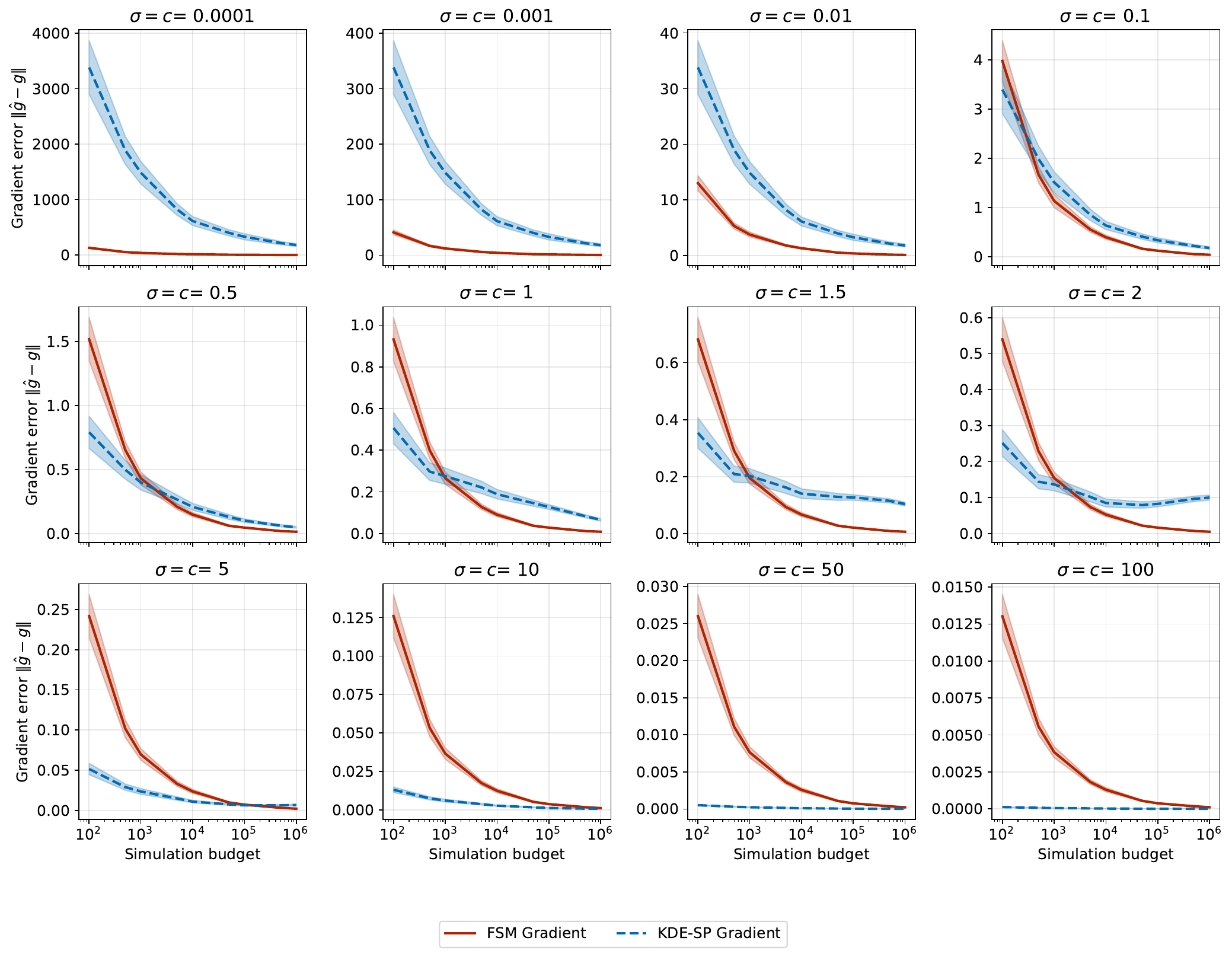}
    \caption{Gradient accuracy of both the Fisher score matching (FSM) technique and the KDE-SP method for a bivariate Gaussian likelihood for different choices of the proposal variance and perturbation constants. The error bars represent a 95\% confidence interval for 100 repeated gradient approximations.}
    \label{fig:grad_comp_sim_full_2d}
\end{figure}

\begin{figure}
    \centering
    \includegraphics[width=\linewidth]{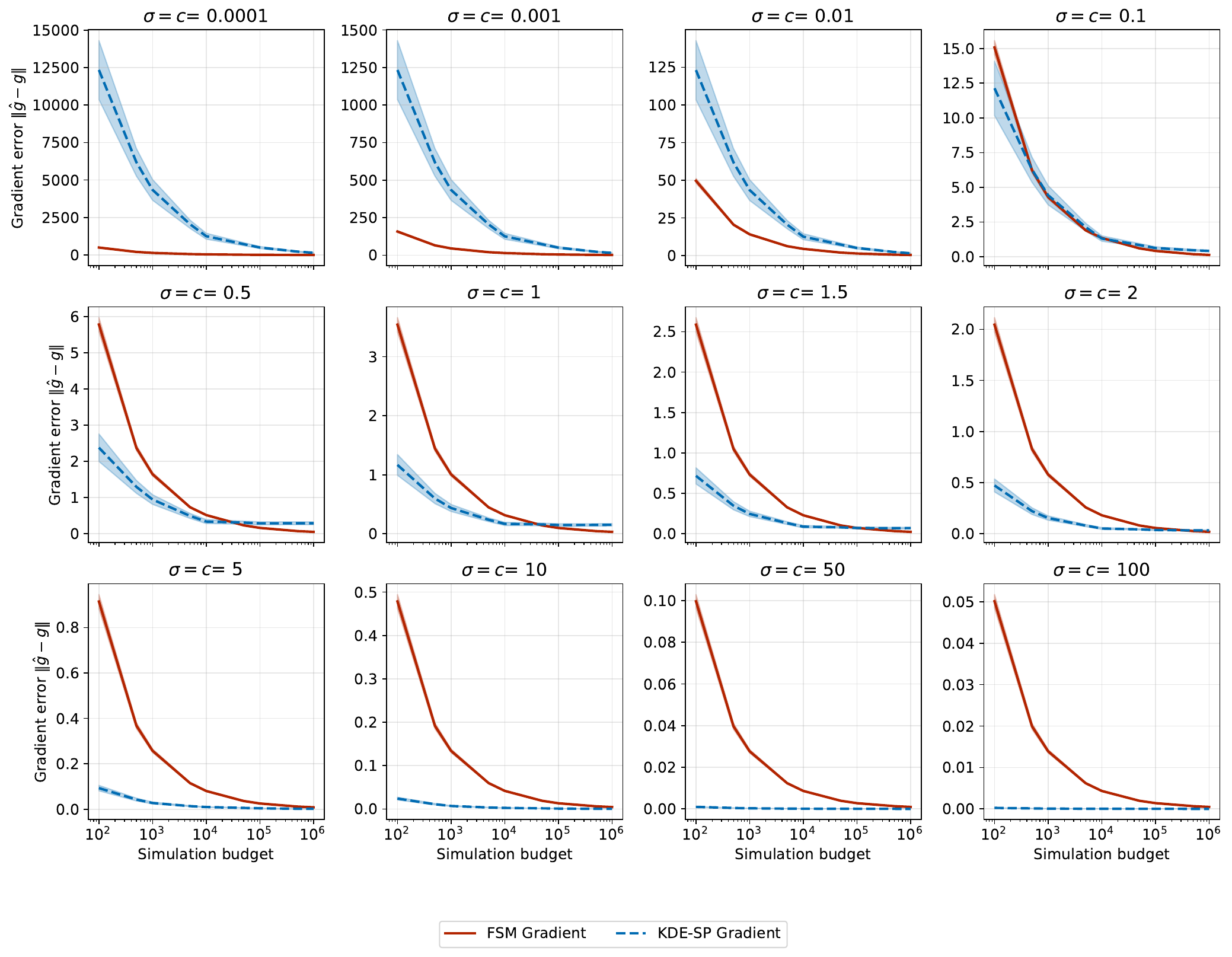}
    \caption{Gradient accuracy of both the Fisher score matching (FSM) technique and the KDE-SP method for a 20 dimensional Gaussian likelihood for different choices of the proposal variance and perturbation constants. The error bars represent a 95\% confidence interval for 100 repeated gradient approximations.}
    \label{fig:grad_comp_sim_full_20d}
\end{figure}

\subsubsection{Additional results on parameter dimension scaling}

Figure~\ref{fig:mvt_gaussian_param_dim_scaling_full} includes an additional result with the neural network FSM method for the parameter dimension scaling experiment seen in Figure~\ref{fig:mvt_gaussian_param_dim_scaling}. Furthermore, Figure~\ref{fig:mvt_gaussian_param_dim_all_scaling_full} shows the scaling with parameter dimension, with increasing simulation budgets, complementing the results seen in Figure~\ref{fig:mvt_gaussian_param_dim_scaling_full}. Generally, we find that the linear FSM method performs the best across different parameter dimensions and simulation budgets. The KDE-SP method performs worse in higher dimensions, likely due to the curse of dimensionality affecting the kernel density estimate. The neural network FSM method shows competitive performance in lower dimensions, but its performance quickly degrades in higher dimensions, possibly due to optimization and/or overfitting issues.

\begin{figure}
    \centering
    \includegraphics[width=\linewidth]{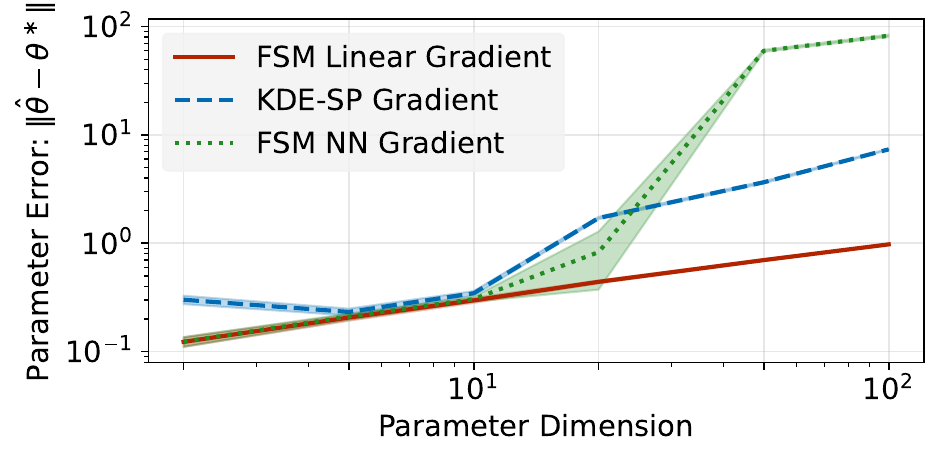}
    \caption{Parameter estimation accuracy of the Linear FSM, Neural Network FSM, and KDE-SP methods under increasing parameter dimensions, over 100 repeated optimization runs.}
    \label{fig:mvt_gaussian_param_dim_scaling_full}
\end{figure}

\begin{figure}
    \centering
    \includegraphics[width=\linewidth]{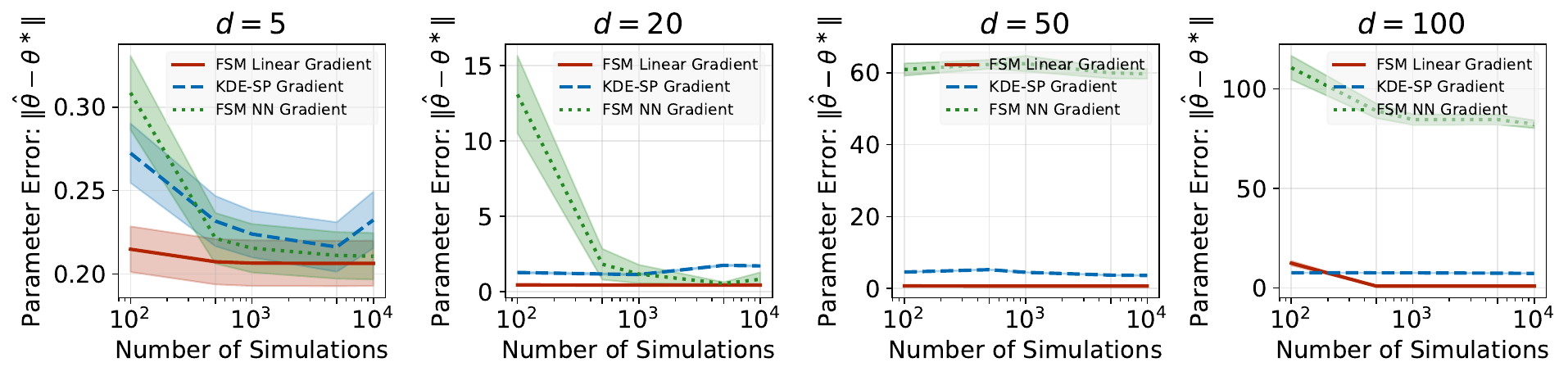}
    \caption{Parameter estimation accuracy of the Linear FSM, Neural Network FSM, and KDE-SP methods under increasing parameter dimensions and increasing simulation budgets, over 100 repeated optimization runs.}
    \label{fig:mvt_gaussian_param_dim_all_scaling_full}
\end{figure}

\subsubsection{Additional results on wall-clock time}
We provide a comparison of the wall-clock time in Figures~\ref{fig:fig-8} and ~\ref{fig:fig-9} for repeated gradient estimation procedures for both the KDE-SP and FSM methods. As we can see in Figure~\ref{fig:fig-8}, the FSM scales favorably with respect to the increase in the number of simulation budgets. However, in Figure~\ref{fig:fig-9}, the matrix inversion step of the linear FSM method grows cubically with the parameter dimension, and hence causes an increase in the wall-clock time for the FSM method. We note that in practice one can reduce this cost considerably by employing faster linear solvers (e.g., conjugate gradient methods), which can greatly improve scalability in higher dimensions.

\begin{figure}
    \centering
    \includegraphics[width=0.5\linewidth]{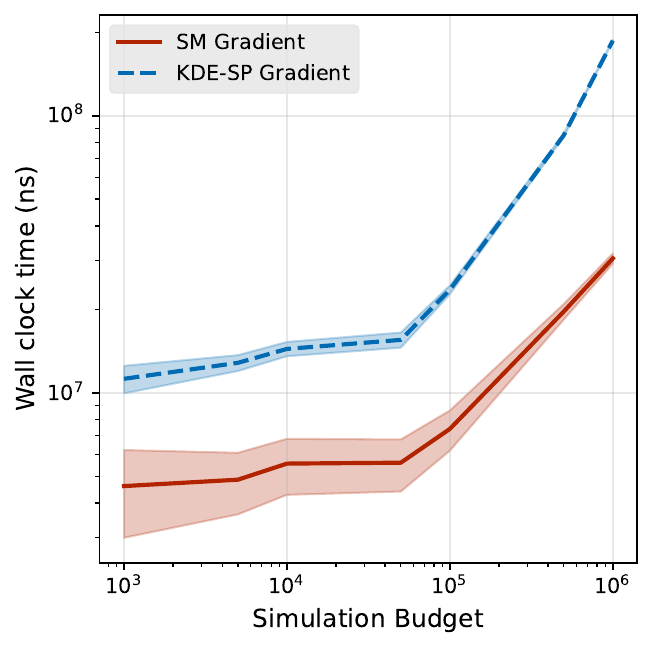}
    \caption{Wall clock time comparison between FSM and KDE-SP estimation, over 1000 runs for increasing simulation budgets}
    \label{fig:fig-8}
\end{figure}

\begin{figure}
    \centering
    \includegraphics[width=0.5\linewidth]{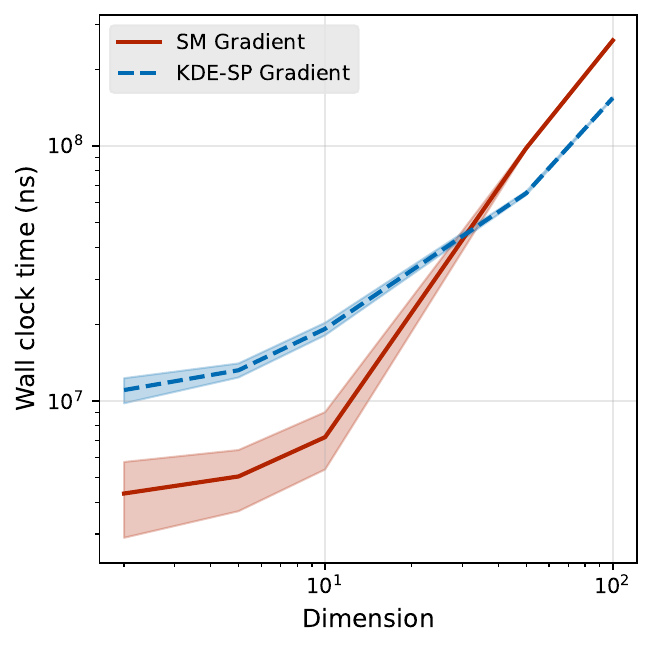}
    \caption{Wall clock time comparison between FSM and KDE-SP estimation, over 1000 runs for increasing parameter dimension}
    \label{fig:fig-9}
\end{figure}

\subsubsection{Additional results on confidence interval construction}
We also note that in Figure~\ref{fig:fig-10}, we provide a simple validation test for the use of the FSM estimate for the Fisher information matrix estimation. This shows that we can recover a well-calibrated confidence interval even with the use of a stochastic Fisher score estimate.

Further details of these experiments are provided in the Appendix~\ref{appendix:A.9.3}.

\begin{figure}[H]
    \centering
    \includegraphics[width=0.5\linewidth]{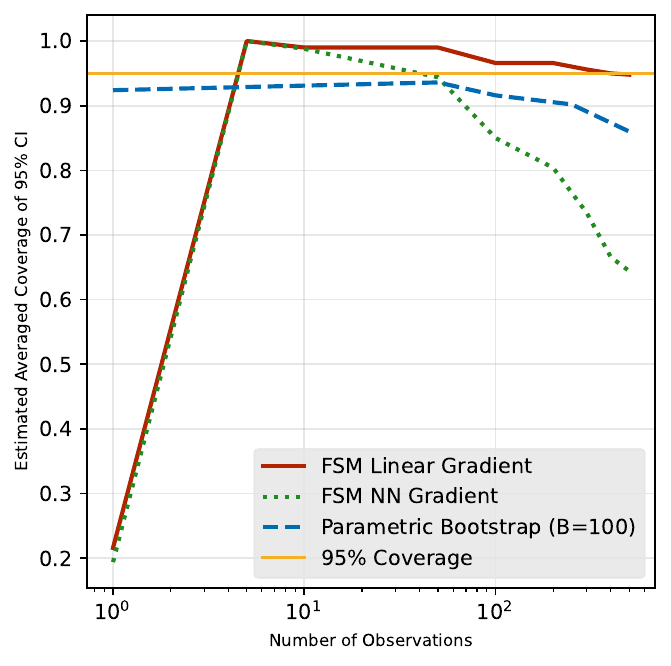}
    \caption{Estimated coverage of constructed confidence interval (averaged across all parameter dimensions) from the approximated Fisher information matrix estimation with FSM estimates, comparing repeating the FSM estimation procedure with nonparametric bootstrap}
    \label{fig:fig-10}
\end{figure}

\subsubsection{Experimental details}
\label{appendix:A.9.3}
The gradient comparison experiment corresponding to the plots of Figures~\ref{fig:grad_comp_sim_mini} and \ref{fig:grad_comp_sim_full_2d} was carried out for a bivariate Gaussian mean model with \(10\) observations. Observations were generated from a true mean of \( (1.0, 1.0) \) (Figure~\ref{fig:grad_comp_sim_full_20d} used a 20-dimensional Gaussian with the same setting), and Fisher score estimates were taken at the observation means, which is also the maximum likelihood estimator. The uncertainty was obtained by repeating \( 100 \) runs of the score estimation for both methods.

The multivariate Gaussian parameter estimation accuracy in Figure~\ref{fig:mvt_gaussian_param_dim_all_scaling_full} was performed with \( 100 \) observations, and with parameter dimensions of \( d=5, 20, 50, 100\) for \( 100 \) optimization steps, using \( 100 \) repeated runs as with the previous experiment. Figures~\ref{fig:mvt_gaussian_param_dim_scaling} and \ref{fig:mvt_gaussian_param_dim_scaling_full} were performed in a similar way, by fixing a total simulation budget of \( 1000 \) and increasing parameter dimensions of \( d=2, 5, 10, 20, 50, 100 \). The true parameters used to generate the observations were similarly taken to be a vectors of ones as with the previous experiment. The \( (a,c) \) hyperparameters for the KDE-SP gradient method was selected from a grid of \( [10^{-2}, 10^{-1}, 10^{0}, 10^{1}, 10^{2}, 10^{3}] \times [10^{-2}, 10^{-1}, 10^{0}, 10^{1}, 10^{2}, 10^{3}] \). For the FSM-based estimation, the \( (\sigma, \eta) \) hyperparameters, corresponding to the proposal variance and step size, were tuned in the exact same way as the KDE-SP gradient hyperparameters (using the prediction error), but over a grid of \( [10^{-3},10^{-2}, 10^{-1}] \times [10^{-2}, 10^{-1}, 10^{0}] \) instead. The Adam \citep{kingma2017adam} optimizer was used for the FSM-based estimation, with averaging over the last 50 iterations of the parameter iterates.

For the wall-clock time comparisons in Figures~\ref{fig:fig-8} and \ref{fig:fig-9}, each gradient estimation procedure was timed for \( 1000 \) runs on a bivariate Gaussian mean model with \( 10 \) observations. As both the FSM and KDE-SP gradient estimation was implemented in Python and the \textsc{jax} package, best attempts were made to equalize the comparison between the two methods. All just-in-time (JIT) compilations for both methods were disabled for the wall-clock tests to remove compilation overhead.

For the confidence interval experiment of Figure~\ref{fig:fig-10}, a \( 5\) dimensional multivariate Gaussian mean model was used. A step size of \(10^{-3}\) with \( \sigma = 0.05 \) was used with the RMSProp \citep{tieleman2012lecture} optimizer. The final Fisher information matrix was estimated by simulating \( 100 000\) simulations from the resulting MLE estimate of the optimization run, which was used to construct the confidence interval. This was repeated for \( 100 \) runs to obtain an estimated coverage probability.

For the neural network-based FSM method, a standard feedforward neural network with two hidden layers of size \( 16 \) with ReLU activations was used.  Adam optimizer with a step size of \( 10^{-2} \) was used to train the neural network for \( 10 \) iterations, for each parameter iteration of the MLE optimization procedure.

All experiments in this section were performed on a standard consumer laptop, an Intel i7-11370H CPU with \( 64 \)GB of RAM.

\subsection{Additional details on LSST weak lensing experiment}
\label{appendix:exp_7.2}

For the weak lensing experiment in Figure~\ref{fig:fig-11}, \(100 \) iterations of the gradient optimization method were used with both the KDE-SP and FSM estimators, with \( 100 \) simulations per iteration, giving a total simulation budget of \( 10000\) simulations for the entire optimization process. The dimension of the parameter space is \( 6 \), and the dimension of the summary statistics used is \(6 \) as well.

The same amount of simulations was provided to a neural likelihood estimater, which is a standard masked autoregressive flow model in the package \textsc{sbi} in Python \citep{BoeltsDeistler_sbi_2025}. To mimic a general, uninformative prior, we used the priors for the parameters provided in Table 1 of \citet{zeghal2024simulationbasedinferencebenchmarklsst}, which are all Gaussian priors, and converted them to a uniform prior by taking three standard deviations from the mean, \( \mathcal{U}[\mu - 3*\sigma, \mu + 3*\sigma] \), where the original Gaussian priors are represented as \( \mathcal{N}(\mu, \sigma^2) \). The NLE was trained with \( 10000 \) (parameters, data) pairs drawn from this prior, and \( 5000 \) iterations with a standard Adam optimizer were used to train the NLE. To optimize the NLE for a specific observational dataset, we evaluated the trained NLE at the specific dataset, and directly differentiated through the NLE model, giving us a deterministic gradient, which is used in a standard gradient-based optimization procedure. The likelihood is optimized until convergence, where there is no longer any change in the estimated likelihood with the NLE. 

The hyperparameters \( (a,c) \) for the KDE-SP gradient method were selected from a grid of \( [10^{-5}, 10^{-4}, 10^{-3}, 10^{-2}] \times [10^{-3}, 10^{-2}, 10^{-1}, 10^{0}]\). For the FSM method, we set \( \sigma =10^{-3} \) and a step size of \( 10^{-2} \), with parameter averaging over the final \( 50 \) iterations.

For the neural network-based FSM method, a standard feedforward neural network with two hidden layers of size \( 16 \) with ReLU activations was used.  Adam optimizer with a step size of \( 10^{-2} \) was used to train the neural network for \( 10 \) iterations, for each parameter iteration of the MLE optimization procedure. We find that the neural network-based FSM method did not perform well in this experiment, often suffering from high variance and instability during training.

An RTX 4090 GPU with 24GB of VRAM, 41GB of RAM was used in this experiment.

\begin{figure}
    \centering
    \includegraphics[width=\linewidth]{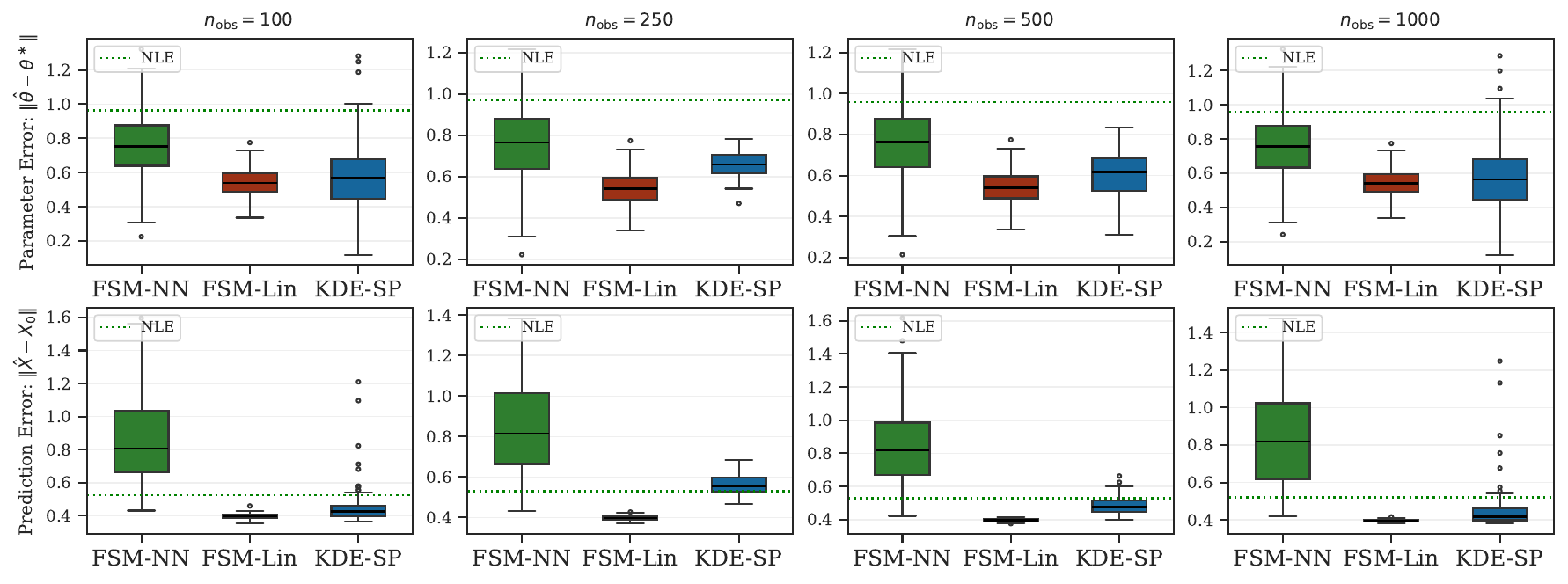}
    \caption{Parameter estimation and prediction accuracy of the NLE, FSM (linear and neural-network based) and KDE-SP methods for the LSST-Y10 weak lensing model, for increasing number of observations}
    \label{fig:fig-11}
\end{figure}

\subsection{Additional details on generator inversion task}
\label{appendix:exp_7.3}

For the generator inversion task in Figure~\ref{fig:fig-12}, we trained a standard GAN on a down-scaled \(16 \times 16\) MNIST dataset, giving a data dimension of \( 256 \) as no summary statistics were used. We used \( 500 \) iterations of the gradient optimization method with both the KDE-SP and the FSM gradient estimation procedure, with \( 22500 \) simulations per parameter iteration used in the gradient estimation. The dimension of the parameter space is \( 50 \).

The \( (a,c) \) hyperparameters for the KDE-SP gradient method was selected from a grid of \( [10^{-4}, 10^{-3}, 5\times10^{-3}, 10^{-2}, 5 \times 10^{-2}] \times [10^{-4}, 10^{-3}, 5\times10^{-3}, 10^{-2}, 5 \times 10^{-2}]\). For the FSM method, we set \( \sigma =0.2 \) and a step size of \( 5 \times10^{-2} \), with parameter averaging over the last \( 300 \) iterations. The latent mean prior, \( \sigma_z\) was set at \( 0.1\).

The direct optimization approach was performed by directly minimizing a reconstruction loss (mean squared error in pixel space) between the generated images and the observations, and directly differentiating through the generator network \( G_w \). Specifically, we minimize the following loss function.
\[
\min_\theta \mathcal{L}(G_w(\theta), \mathbf{x}_0) = \frac{1}{n}\sum_{i=1}^n \|G_w(\mathbf{z}_i) - \mathbf{x}_0 \|^2
\]
where \( \mathbf{z}_i \sim \mathcal{N}(\theta, \sigma_{\mathbf{z}}^2 I) \). This is done with the Adam optimizer with a step size of \( 5 \cdot 10^{-1}\), and for \( 1000 \) iterations, with \( n=100\) simulations per iteration.

For the neural network-based FSM method, a standard feedforward neural network with two hidden layers of size \( 16 \) with ReLU activations was used.  Adam optimizer with a step size of \( 10^{-3} \) was used to train the neural network for \( 10 \) iterations, for each parameter iteration of the MLE optimization procedure. Compared to Appendix~\ref{appendix:exp_7.2}, we found that the neural network-based FSM method performed better in this experiment, but still generally had subpar performance compared to the linear FSM method and with increased variance.

An RTX 4090 GPU with 24GB of VRAM, 41GB of RAM was used in this experiment.

\begin{figure}
    \centering
    \includegraphics[width=\linewidth]{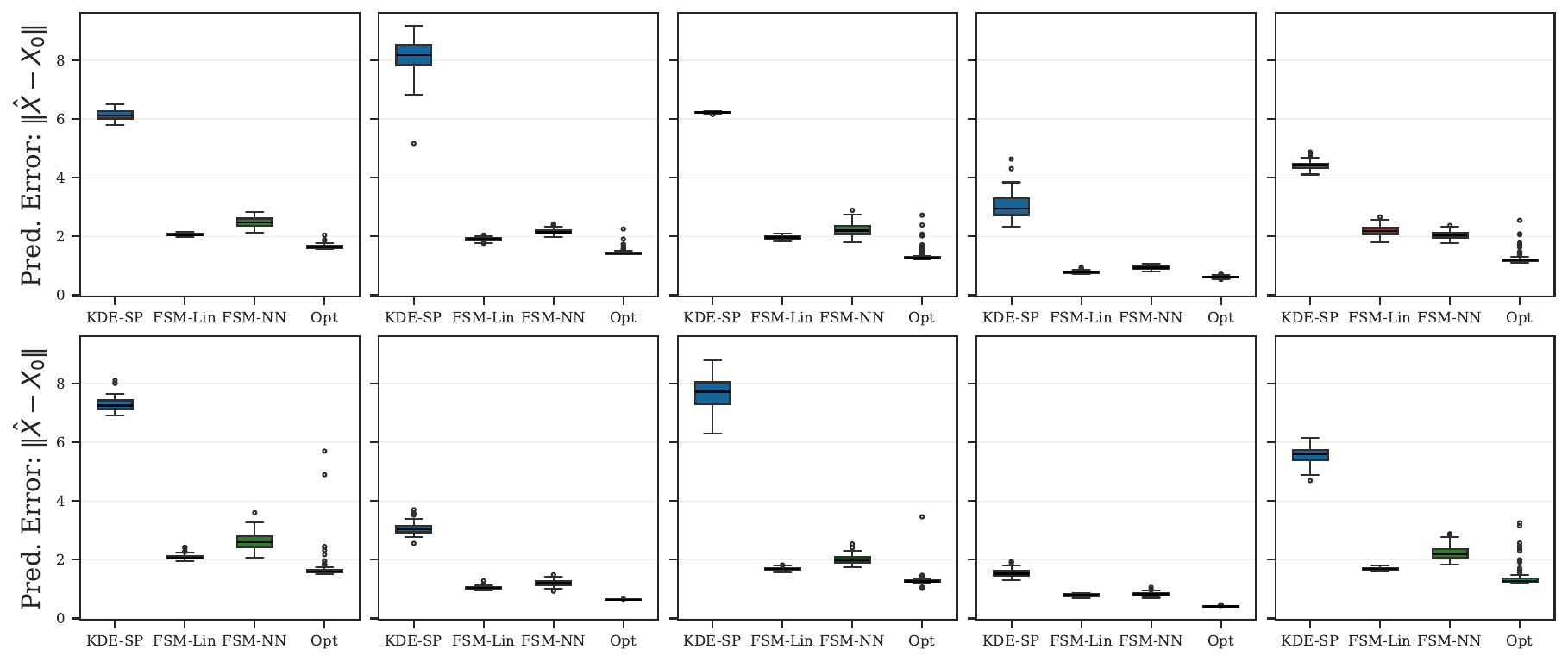}
    \caption{Prediction error for the FSM (linear and neural-network based), KDE-SP and direct optimization method for the latent GAN inversion, each boxplot corresponds to a different observation}
    \label{fig:fig-12}
\end{figure}

\newpage

\newpage

\end{document}